\providecommand{\algorithmname}{Algorithm}
\theoremstyle{plain}
\newtheorem{thm}{\protect\theoremname}
  \theoremstyle{definition}
  \newtheorem{defn}[thm]{\protect\definitionname}
  \theoremstyle{plain}
  \newtheorem{cor}[thm]{\protect\corollaryname}
  \theoremstyle{plain}
  \newtheorem{lem}[thm]{\protect\lemmaname}
  \theoremstyle{plain}
  \newtheorem{prop}[thm]{\protect\propositionname}
  \theoremstyle{definition}
  \newtheorem{example}[thm]{\protect\examplename}
  \theoremstyle{remark}
  \newtheorem{claim}[thm]{\protect\claimname}
  \providecommand{\claimname}{Claim}
  \providecommand{\corollaryname}{Corollary}
  \providecommand{\definitionname}{Definition}
  \providecommand{\examplename}{Example}
  \providecommand{\lemmaname}{Lemma}
  \providecommand{\propositionname}{Proposition}
\providecommand{\theoremname}{Theorem}
\begin{document}

\title{Sharp Restricted Isometry Bounds for the Inexistence of Spurious
Local Minima in Nonconvex Matrix Recovery}

\author{\name Richard Y.\ Zhang
\email ryz@illinois.edu \\        
\addr Department of Electrical and Computer Engineering\\       
University of Illinois at Urbana-Champaign\\       
306 N Wright St, Urbana, IL 61801, USA
\AND        
\name Somayeh Sojoudi
\email sojoudi@berkeley.edu \\        
\addr Department of Electrical Engineering and Computer Sciences\\       
University of California, Berkeley\\        
Berkeley, CA 94720, USA
\AND        
\name Javad Lavaei
\email lavaei@berkeley.edu \\        
\addr Department of Industrial Engineering and Operations Research\\       
University of California, Berkeley\\        
Berkeley, CA 94720, USA
} 
\editor{Sanjiv Kumar}
\maketitle
\begin{abstract}
Nonconvex matrix recovery is known to contain \emph{no spurious local
minima} under a restricted isometry property (RIP) with a sufficiently
small RIP constant $\delta$. If $\delta$ is too large, however,
then counterexamples containing spurious local minima are known to
exist. In this paper, we introduce a proof technique that is capable
of establishing \emph{sharp} thresholds on $\delta$ to guarantee
the inexistence of spurious local minima. Using the technique, we
prove that in the case of a rank-1 ground truth, an RIP constant of
$\delta<1/2$ is both necessary and sufficient for exact recovery
from any arbitrary initial point (such as a random point). We also
prove a local recovery result: given an initial point $x_{0}$ satisfying
$f(x_{0})\le(1-\delta)^{2}f(0)$, any descent algorithm that converges
to second-order optimality guarantees exact recovery.
\end{abstract}
\begin{keywords}   matrix factorization, nonconvex optimization, Restricted Isometry Property, matrix sensing, spurious local minima \end{keywords}

\section{Introduction}

\global\long\def\R{\mathbb{R}}
\global\long\def\S{\mathbb{S}}
\global\long\def\A{\mathbf{A}}
\global\long\def\AA{\mathcal{A}}
\global\long\def\tr{\mathrm{tr}}
\global\long\def\vec{\mathrm{vec}\,}
\global\long\def\rank{\mathrm{rank}\,}
\global\long\def\X{\mathbf{X}}
\global\long\def\H{\mathbf{H}}
\global\long\def\HH{\mathcal{H}}
\global\long\def\e{\mathbf{e}}
\global\long\def\P{\mathbf{P}}
\global\long\def\mat{\mathrm{mat}}
\global\long\def\range{\mathrm{range}}
\global\long\def\cond{\mathrm{cond}}
\global\long\def\orth{\mathrm{orth}}
\global\long\def\ub{\mathrm{ub}}
\global\long\def\lb{\mathrm{lb}}
\global\long\def\L{\mathscr{L}}
\global\long\def\M{\mathscr{M}}
\global\long\def\LMI{\mathrm{LMI}}
The \emph{low-rank matrix recovery} problem seeks to recover an unknown
$n\times n$ ground truth matrix $M^{\star}$ of low-rank $r\ll n$
from $m$ linear measurements of $M^{\star}$. The problem naturally
arises in recommendation systems~\citep{rennie2005fast} and clustering
algorithms~\citep{amit2007uncovering}\textemdash often under the
names of matrix completion and matrix sensing\textemdash and also
finds engineering applications in phase retrieval~\citep{candes2013phaselift}
and power system state estimation~\citep{zhang2018spurious}.

In the symmetric, noiseless variant of low-rank matrix recovery, the
ground truth $M^{\star}$ is taken to be positive semidefinite (denoted
as $M^{\star}\succeq0$), and the $m$ linear measurements are made
without error, as in
\begin{equation}
b\equiv\AA(M)\quad\text{ where }\quad\AA(M)=\begin{bmatrix}\langle A_{1},M\rangle & \cdots & \langle A_{m},M\rangle\end{bmatrix}^{T}.\label{eq:defAop}
\end{equation}
To recover $M^{\star}$ from $b$, the standard approach in the machine
learning community is to factor a candidate $M$ into its low-rank
factors $XX^{T}$, and to solve a nonlinear least-squares problem
on $X$ using a local search algorithm (usually stochastic gradient
descent):
\begin{equation}
\underset{X\in\R^{n\times r}}{\text{minimize }}\quad f(X)\equiv\|\AA(XX^{T})-b\|^{2}.\label{eq:lrmr}
\end{equation}
The function $f$ is nonconvex, so a ``greedy'' local search algorithm
can become stuck at a spurious local minimum, especially if a random
initial point is used. Despite this apparent risk of failure, the
nonconvex approach remains both widely popular as well as highly effective
in practice.

Recently, \citet{bhojanapalli2016global} provided a rigorous theoretical
justification for the empirical success of local search on problem
(\ref{eq:lrmr}). Specifically, they showed that the problem contains
\emph{no spurious local minima} under the assumption that $\AA$ satisfies
the \emph{restricted isometry property} (RIP) of~\citet{recht2010guaranteed}
with a sufficiently small constant. The nonconvex problem is easily
solved using local search algorithms because every local minimum is
also a global minimum. 
\begin{defn}[Restricted Isometry Property]
The linear map $\AA:\R^{n\times n}\to\R^{m}$ is said to satisfy
$\delta$\nobreakdash-RIP if there is constant $\delta\in[0,1)$
such that 
\begin{equation}
(1-\delta)\|M\|_{F}^{2}\le\|\AA(M)\|^{2}\le(1+\delta)\|M\|_{F}^{2}\label{eq:rip}
\end{equation}
holds for all $M\in\R^{n\times n}$ satisfying $\rank(M)\le2r$.
\end{defn}
\begin{thm}[\citealp{bhojanapalli2016global,ge2017nospurious}]
\label{thm:exact_recovery}Let $\AA$ satisfy $\delta$\nobreakdash-RIP
with $\delta<1/5$. Then, (\ref{eq:lrmr}) has no spurious local minima:
\[
\nabla f(X)=0,\quad\nabla^{2}f(X)\succeq0\quad\iff\quad XX^{T}=M^{\star}.
\]
 Hence, any algorithm that converges to a second-order critical point
is guaranteed to recover $M^{\star}$ exactly.
\end{thm}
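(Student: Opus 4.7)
The plan is to derive a contradiction from second-order optimality whenever $XX^{T} \neq M^{\star}$, by exhibiting an explicit perturbation direction along which the Hessian fails to be positive semidefinite. Writing $e \equiv XX^{T} - M^{\star}$, a direct differentiation gives
\[
\nabla f(X) = 4\,\AA^{\ast}\AA(e)\,X, \qquad \nabla^{2} f(X)[U,U] = 2\|\AA(XU^{T}+UX^{T})\|^{2} + 4\langle \AA^{\ast}\AA(e),\,UU^{T}\rangle,
\]
so first-order stationarity together with PSD of the Hessian at $X$ translates into $\AA^{\ast}\AA(e)X = 0$ and $\|\AA(XU^{T}+UX^{T})\|^{2} + 2\langle \AA^{\ast}\AA(e),\,UU^{T}\rangle \geq 0$ for every $U \in \R^{n\times r}$.

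For the probing direction I would take $U = X - Z$, where $Z \in \R^{n \times r}$ factors $M^{\star} = ZZ^{T}$ and is optimally aligned with $X$ in the sense that $X^{T}Z$ is symmetric positive semidefinite; such a $Z$ is furnished by the polar decomposition of any initial factor. The identity $XU^{T}+UX^{T} = UU^{T}+e$ then follows by direct expansion. Substituting into the Hessian condition and expanding $\|\AA(UU^{T}+e)\|^{2}$ produces
\[
\|\AA(UU^{T})\|^{2} + 4\langle \AA^{\ast}\AA(e),\,UU^{T}\rangle + \|\AA(e)\|^{2} \geq 0.
\]
Since $\AA^{\ast}\AA(e)$ is symmetric and $\AA^{\ast}\AA(e)X = 0$, the cross terms $\langle \AA^{\ast}\AA(e),\,XX^{T}\rangle$ and $\langle \AA^{\ast}\AA(e),\,XZ^{T}\rangle$ arising from the expansion $UU^{T} = XX^{T} - XZ^{T} - ZX^{T} + ZZ^{T}$ vanish, which collapses $\langle \AA^{\ast}\AA(e),\,UU^{T}\rangle$ to $\langle \AA^{\ast}\AA(e),\,M^{\star}\rangle = -\|\AA(e)\|^{2}$. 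The second-order inequality thereby reduces to the clean bound $\|\AA(UU^{T})\|^{2} \geq 3\|\AA(e)\|^{2}$.

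The last step is to apply $\delta$\nobreakdash-RIP. Since $\rank(UU^{T}) \leq r$ and $\rank(e) \leq 2r$, RIP yields $(1+\delta)\|UU^{T}\|_{F}^{2} \geq 3(1-\delta)\|e\|_{F}^{2}$. Coupled with an alignment lemma of the form $\|UU^{T}\|_{F}^{2} \leq 2\|e\|_{F}^{2}$ (which exploits $X^{T}Z \succeq 0$ to control the quadratic term by the recovery error), this forces $2(1+\delta) \geq 3(1-\delta)$ whenever $e \neq 0$, i.e., $\delta \geq 1/5$. Hence $\delta < 1/5$ implies $e = 0$, which is the claim.

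The hardest part will be the alignment lemma: the tight constant $2$ in $\|UU^{T}\|_{F}^{2} \leq 2\|e\|_{F}^{2}$ is exactly what determines the threshold $1/5$, and proving it requires a careful decomposition $e = UU^{T} + UZ^{T} + ZU^{T}$ together with the semidefiniteness of $X^{T}Z$ (a naive bound gives a worse constant and a worse threshold). Everything else is bookkeeping: differentiation, the two RIP substitutions, and using first-order stationarity to pin the cross-term inner product down \emph{exactly}, so that no additional RIP slack leaks into it.
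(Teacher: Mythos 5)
Your outline is correct, and it faithfully reproduces the argument of \citet{ge2017nospurious}: gradient and Hessian formulas, probing direction $U = X - Z$ with the orthogonal-Procrustes alignment making $X^T Z$ symmetric PSD, the identity $XU^T + UX^T = UU^T + e$, cancellation of the cross terms via $\AA^*\AA(e)X = 0$, the reduced inequality $\|\AA(UU^T)\|^2 \geq 3\|\AA(e)\|^2$, and the two RIP applications (note $\rank(UU^T) \le r$ and $\rank(e) \le 2r$, both within the rank-$2r$ RIP assumption). The one piece you leave unproved, $\|UU^T\|_F^2 \le 2\|e\|_F^2$ under the alignment hypothesis, is exactly the nontrivial lemma from that paper, and your sanity check that the constant $2$ is tight (e.g.\ $r=1$, $\|x\|=\|z\|$, $x \perp z$) is correct.

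Note, however, that the paper itself does not prove this theorem; it is stated as a citation. The paper's own contribution is a fundamentally different route: rather than exhibiting a single probing direction and chaining two RIP inequalities (which leaks slack and caps out at $1/5$), it poses the existence of a counterexample as a convex LMI feasibility problem over $\HH = \AA^T\AA$, proves that the LMI relaxation of the RIP constraint is \emph{exact} (Theorem~\ref{thm:exact_convex}), and then minimizes over all counterexamples. In the rank-$1$ case this yields the sharp threshold $\delta^\star = 1/2$ (Theorem~\ref{thm:global}), strictly improving on the $1/5$ your argument proves. So the two approaches buy different things: yours is elementary, works for all ranks, and gives an explicit but loose constant; the paper's is heavier machinery but provably sharp in the rank-$1$ regime.
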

While Theorem~\ref{thm:exact_recovery} says that an RIP constant
of $\delta<1/5$ is \emph{sufficient} for exact recovery, \citet{zhang2018much}
proved that $\delta<1/2$ is \emph{necessary.} Specifically, they
gave a counterexample satisfying $1/2$-RIP that causes randomized
stochastic gradient descent to fail 12\% of the time. A number of
previous authors have attempted to close the gap between sufficiency
and necessity, including \citet{bhojanapalli2016global,ge2017nospurious,park2017non,zhang2018much,zhu2018global}.
In this paper, we prove that in the rank-1 case, an RIP constant of
$\delta<1/2$ is both \emph{necessary} and \emph{sufficient} for exact
recovery.

Once the RIP constant exceeds $\delta\ge1/2$, global guarantees are
no longer possible. \citet{zhang2018much} proved that counterexamples
exist \emph{generically}: almost every choice of $x,z\in\R^{n}$ generates
an instance of nonconvex recovery satisfying RIP with $x$ as a spurious
local minimum and $M^{\star}=zz^{T}$ as ground truth. In practice,
local search may continue to work well, often with a 100\% success
rate as if spurious local minima do not exist. However, the inexistence
of spurious local minima can no longer be assured. 

Instead, we turn our attention to local guarantees, based on good
initial guesses that often arise from domain expertise, or even chosen
randomly. Given an initial point $x_{0}$ satisfies $f(x_{0})\le(1-\delta)^{2}\|M^{\star}\|_{F}^{2}$
where $\delta$ is the RIP constant and $M^{\star}=zz^{T}$ is a rank-1
ground truth, we prove that a \emph{descent} algorithm that converges
to second-order optimality is guaranteed to recover the ground truth.
Examples of such algorithms include randomized full-batch gradient
descent \citep{jin2017escape} and trust-region methods \citep{conn2000trust,nesterov2006cubic}.

\section{Main Results}

Our main contribution in this paper is a proof technique capable of
establishing RIP thresholds that are both \emph{necessary} and \emph{sufficient}
for exact recovery. The key idea is to disprove the counterfactual.
To prove for some $\lambda\in[0,1)$ that ``$\lambda$-RIP implies
no spurious local minima'', we instead establish the inexistence
of a counterexample that admits a spurious local minimum despite satisfying
$\lambda$-RIP. In particular, if $\delta^{\star}$ is the \emph{smallest}
RIP constant associated with a counterexample, then any $\lambda<\delta^{\star}$
cannot admit a counterexample (or it would contradict the definition
of $\delta^{\star}$ as the smallest RIP constant). Accordingly, $\delta^{\star}$
is precisely the sharp threshold needed to yield a necessary and sufficient
recovery guarantee.

The main difficulty with the above line of reasoning is the need to
optimize over the set of counterexamples. Indeed, verifying RIP for
a fixed operator $\AA$ is already NP-hard in general~\citep{tillmann2014computational},
so it is reasonable to expect that optimizing over the set of RIP
operators is at least NP-hard. Surprisingly, this is not the case.
Consider finding the smallest RIP constant associated with a counterexample
with \emph{fixed} ground truth $M^{\star}=ZZ^{T}$ and \emph{fixed}
spurious point $X$:
\begin{align}
\delta(X,Z)\quad\equiv\qquad\underset{\AA}{\text{minimum}}\quad & \delta\label{eq:delta_x-1}\\
\text{subject to }\quad & f(X)=\frac{1}{2}\|\AA(XX^{T}-ZZ^{T})\|^{2}\nonumber \\
 & \nabla f(X)=0,\quad\nabla^{2}f(X)\succeq0\nonumber \\
 & \AA\text{ satisfies }\delta\text{-RIP}.\nonumber 
\end{align}
In Section~\ref{sec:Main-idea}, we reformulate problem (\ref{eq:delta_x-1})
into a \emph{convex} linear matrix inequality (LMI) optimization,
and prove that the reformulation is \emph{exact} (Theorem~\ref{thm:exact_convex}).
Accordingly, we can evaluate $\delta(X,Z)$ to arbitrary precision
in polynomial time by solving an LMI using an interior-point method. 

In the rank $r=1$ case, the LMI reformulation is sufficiently simple
that it can be relaxed and then solved in closed-form (Theorem~\ref{thm:delta_lb}).
This yields a lower-bound $\delta_{\lb}(x,z)\le\delta(x,z)$ that
we optimize over all spurious choices of $x\in\R^{n}$ to prove that
$\delta^{\star}\ge1/2$. Given that $\delta^{\star}\le1/2$ due to
the counterexample of \citet{zhang2018much}, we must actually have
$\delta^{\star}=1/2$. 
\begin{thm}[Global guarantee]
\label{thm:global}Let $r=\rank(M^{\star})=1$, let $\AA$ satisfy
$\delta$\nobreakdash-RIP, and define $f(x)=\|\AA(xx^{T}-M^{\star})\|^{2}$.
\begin{itemize}
\item If $\delta<1/2$, then $f$ has no spurious local minima:
\[
\nabla f(x)=0,\quad\nabla^{2}f(x)\succeq0\quad\iff\quad xx^{T}=M^{\star}.
\]
\item If $\delta\ge1/2,$ then there exists a counterexample $\AA^{\star}$
satisfying $\delta$-RIP, but whose $f^{\star}(x)=\|\AA^{\star}(xx^{T}-M^{\star})\|^{2}$
admits a spurious point $x\in\R^{n}$ satisfying:
\[
\|x\|^{2}=\frac{1}{2}\|M^{\star}\|_{F},\qquad f(x)=\frac{3}{4}\|M^{\star}\|_{F}^{2},\qquad\nabla f(x)=0,\qquad\nabla^{2}f(x)\succeq8\,xx^{T}.
\]
\end{itemize}
\end{thm}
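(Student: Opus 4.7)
The plan is to follow the recipe sketched in Section~2: reduce sharpness to the single scalar optimum
\[
\delta^{\star} \;=\; \inf_{x,z:\ xx^{T}\ne zz^{T}}\ \delta(x,z),
\]
compute (or tightly lower-bound) $\delta(x,z)$ in closed form in the rank-$1$ case, and combine this with the counterexample of \citet{zhang2018much} to pin down $\delta^{\star}=1/2$. The inequality $\delta^{\star}\le 1/2$ is already delivered by that counterexample, so the real work is to establish $\delta^{\star}\ge 1/2$ and then to reinterpret the extremal $(x,z)$ and the extremal operator as the witness required by the second bullet.

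First I would recast the problem in terms of the composite operator $\mathbf{H} = \mathcal{A}^{T}\mathcal{A}$ acting on symmetric matrices. RIP becomes a sandwich inequality on $\mathbf{H}$ over rank-$\le 2$ matrices; the stationarity condition $\nabla f(x)=0$ becomes the vector equation $\mathbf{H}(xx^{T}-zz^{T})\,x=0$; and $\nabla^{2}f(x)\succeq 0$ becomes a matrix inequality linear in $\mathbf{H}$. Theorem~\ref{thm:exact_convex} then lets me optimize over admissible $\mathbf{H}$ in place of $\mathcal{A}$, expressing $\delta(x,z)$ as the value of an LMI in $(\mathbf{H},\delta)$.

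Second, I would exploit the rank-$1$ structure to collapse this LMI to a handful of scalars. All matrices appearing in the equality and first-order constraints have their range inside $\mathrm{span}\{x,z\}$, so only the action of $\mathbf{H}$ on the slice spanned by $xx^{T}$, $zz^{T}$, and $xz^{T}+zx^{T}$ is relevant. Relaxing the Hessian constraint by testing only a few well-chosen directions (for instance along $x$ itself, and along the component of $z$ orthogonal to $x$) should reduce the LMI to a small scalar program solvable in closed form, giving an explicit lower bound $\delta_{\mathrm{lb}}(x,z)$. Minimizing this bound over $x$ (with $z$ fixed by homogeneity) I expect to hit the value $1/2$ precisely at $x\perp z$ with $\|x\|^{2}=\tfrac12\|z\|^{2}$, which is exactly the configuration anticipated by the statement of the theorem.

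For the second bullet I would invert the chain: at the extremal pair $(x,z)$ the LMI optimum identifies an explicit $\mathbf{H}^{\star}$, and hence an operator $\mathcal{A}^{\star}$ via Theorem~\ref{thm:exact_convex}, and substituting $\|x\|^{2}=\tfrac12\|z\|^{2}$ into the resulting formulas yields $f(x)=\tfrac34\|M^{\star}\|_{F}^{2}$ and $\nabla^{2}f(x)\succeq 8\,xx^{T}$ by direct calculation. The main obstacle I foresee is the closed-form solution of the relaxed LMI: the challenge is to pick a relaxation that keeps enough Hessian directions to force $\delta\ge 1/2$, yet few enough to remain analytically tractable, and then to verify that this relaxation is actually tight by exhibiting a feasible $\mathbf{H}^{\star}$ whose value equals the lower bound. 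A secondary subtlety is to confirm that collapsing to the rank-$2$ slice does not miss any RIP direction, so that the reconstructed $\mathcal{A}^{\star}$ genuinely satisfies $\delta^{\star}$-RIP on all rank-$\le 2$ matrices.
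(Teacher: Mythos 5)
Your roadmap matches the paper's at the structural level: lower-bound $\delta(x,z)$ in closed form (via the LMI reformulation of Theorem~\ref{thm:exact_convex}), minimize over $(x,z)$ to get $\delta^{\star}\ge 1/2$, and combine with an explicit counterexample to force $\delta^{\star}=1/2$; and yes, the extremal configuration is $x\perp z$ with $\|x\|^{2}=\tfrac12\|z\|^{2}$. However, the step you flag as "the main obstacle" is precisely where the proposal has a real gap, and the relaxation you propose is not what the paper does. The paper does \emph{not} relax the second-order PSD constraint by restricting it to a few test directions. Instead it works with the condition-number form $\eta(x,z)$ and relaxes the Hessian inequality $2\mat(\H\e)+\X^{T}\H\X\succeq 0$ by replacing the quadratic term $\X^{T}\H\X$ with $\X^{T}\X$ (valid because $\H\preceq I$), leaving the full PSD constraint intact; it then passes to the Lagrangian dual and grinds out a closed form through a chain of structural lemmas (Lemma~\ref{lem:dual_prob} on projections onto the PSD cone, Lemma~\ref{lem:eig_bound} on eigenvalues of $ab^{T}+ba^{T}$, Lemma~\ref{lem:svd}/\ref{lem:edecomp} on the SVD of $\X$ and the decomposition of $\e$, and the key Lemma~\ref{lem:keylb}). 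Direction-testing along $x$ and the orthogonal component of $z$ is a genuinely different and weaker relaxation, and you have given no reason to believe it yields a bound $\ge 1/2$ uniformly over $(x,z)$; that claim is exactly what needs to be proved and cannot be taken on faith.

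A second, smaller issue: for the second bullet you need a $\delta$-RIP counterexample for \emph{every} rank-1 $M^{\star}=zz^{T}$, not just the $2\times 2$ instance of \citet{zhang2018much}. The paper handles this by lifting the $2\times2$ operator to $n\times n$ via the orthogonal-projection machinery (Example~\ref{exa:big}, whose correctness proof rests on Lemma~\ref{lem:reduc} and the factorization $\H^{\star}=\P\hat\H\P^{T}+(I-\P\P^{T})$). Your "inverting the chain" remark is in the right spirit, but it needs this lifting argument made explicit; otherwise you've only certified $\delta^{\star}\le 1/2$ at one particular ground truth, not produced the witness the theorem asserts for arbitrary $M^{\star}$.
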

We can also optimize $\delta_{\lb}(x,z)$ over spurious choices $x\in\R^{n}$
within an $\epsilon$-neighborhood of the ground truth. The resulting
guarantee is applicable to much larger RIP constants $\delta$, including
those arbitrarily close to one.
\begin{thm}[Local guarantee]
\label{thm:local}Let $r=\rank(M^{\star})=1$, and let $\AA$ satisfy
$\delta$-RIP. If 
\[
\delta<\left(1-\frac{\epsilon^{2}}{2(1-\epsilon)}\right)^{1/2}\qquad\text{where }0\le\epsilon\le\frac{\sqrt{5}-1}{2}
\]
then $f(x)=\|\AA(xx^{T}-M^{\star})\|^{2}$ has no spurious local minima
within an $\epsilon$-neighborhood of the solution:
\[
\nabla f(x)=0,\quad\nabla^{2}f(x)\succeq0,\quad\|xx^{T}-M^{\star}\|_{F}\le\epsilon\|M^{\star}\|_{F}\quad\iff\quad xx^{T}=M^{\star}.
\]
\end{thm}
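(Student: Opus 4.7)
The plan is to invoke the closed-form lower bound $\delta_{\lb}(x,z)$ from Theorem~\ref{thm:delta_lb} and minimize it over the restricted set of spurious points $x$ lying in the $\epsilon$-neighborhood of the ground truth. Because $\delta_{\lb}(x,z)$ is a lower bound on the smallest RIP constant for which $x$ can be a spurious second-order stationary point with ground truth $zz^{T}$, if the RIP constant of $\AA$ is strictly smaller than
\[
\delta_{\epsilon}^{\star}\ \equiv\ \inf\bigl\{\delta_{\lb}(x,z) : 0 < \|xx^{T} - zz^{T}\|_{F} \le \epsilon\|zz^{T}\|_{F}\bigr\},
\]
then no spurious second-order stationary point can exist in the $\epsilon$-neighborhood. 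The theorem reduces to proving $\delta_{\epsilon}^{\star} \ge \bigl(1 - \epsilon^{2}/(2(1-\epsilon))\bigr)^{1/2}$ for every $\epsilon$ in the stated range.

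By the rotational invariance of RIP, I may assume $\|z\|=1$ and $z = e_{1}$, so $\|M^{\star}\|_{F} = 1$. Writing $x = \alpha z + \beta w$ with a unit $w \perp z$ reduces both $\delta_{\lb}(x,z)$ and the $\epsilon$-ball constraint to functions of two scalar variables $(\alpha,\beta)$. A direct computation gives $\|xx^{T}-zz^{T}\|_{F}^{2} = (\alpha^{2}+\beta^{2})^{2} - 2\alpha^{2} + 1$, so the feasible region is the bounded ``lens'' $(\alpha^{2}+\beta^{2})^{2} - 2\alpha^{2} + 1 \le \epsilon^{2}$ around $z$, and $\delta_{\lb}(x,z)$ becomes an explicit algebraic function $g(\alpha,\beta)$.

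The remaining task is a two-variable constrained minimization of $g$ over this lens. Since Theorem~\ref{thm:global} shows that the unconstrained infimum of $\delta_{\lb}$ over spurious $x$ equals $1/2$, attained by points far from $z$, while the target bound is strictly larger than $1/2$ throughout $0 \le \epsilon \le (\sqrt{5}-1)/2$, I expect the constrained minimizer to lie on the boundary curve $\|xx^{T} - zz^{T}\|_{F} = \epsilon$. I would eliminate $\beta^{2}$ using the boundary equation, reducing the problem to a one-variable minimization in $\alpha$, and solve the resulting first-order condition to obtain the closed-form minimum $\bigl(1 - \epsilon^{2}/(2(1-\epsilon))\bigr)^{1/2}$. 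The cutoff $\epsilon \le (\sqrt{5}-1)/2$ should emerge as the threshold beyond which the critical $\alpha^{\star}$ forces $\beta^{\star 2} < 0$, so the optimum migrates onto another branch and the closed-form formula ceases to apply; numerically, this is precisely the $\epsilon$ at which $1 - \epsilon^{2}/(2(1-\epsilon))$ drops to $1/2$.

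The main obstacle is the two-variable constrained optimization itself: although $g(\alpha,\beta)$ is explicit, the feasible region is bounded by a quartic curve, so ruling out additional interior stationary points or boundary critical points away from the intended minimizer requires careful case analysis. The appearance of the golden-ratio conjugate $(\sqrt{5}-1)/2$ indicates a subtle regime transition, and verifying it precisely---along with confirming that the expected boundary branch really governs the optimum throughout the feasible range---will be the main technical step.
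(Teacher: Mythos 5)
Your high-level plan---invoke Theorem~\ref{thm:delta_lb}, feed the resulting lower bound $\delta_{\lb}(x,z)$ into Lemma~\ref{lem:delta_loc}, and minimize over the $\epsilon$-neighborhood---is exactly the structure the paper uses, and the reduction to the two scalar quantities (the paper's $\rho = \|x\|/\|z\|$ and $\phi$) is also right. The gap is in the execution. You write that you ``would eliminate $\beta^{2}$ using the boundary equation \ldots and solve the resulting first-order condition to obtain the closed-form minimum $(1-\epsilon^{2}/(2(1-\epsilon)))^{1/2}$.'' That expression is \emph{not} the exact constrained minimum of $\delta_{\lb}$ over the $\epsilon$-ball, and a genuine Lagrange-multiplier calculation on the quartic boundary curve will not produce it. It is a further relaxation: the paper never solves the constrained problem exactly. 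Instead, it (i) shows that the ball constraint $(\rho^{2}-1)^{2}+2\rho^{2}\sin^{2}\phi\le\epsilon^{2}$ separately implies $1-\epsilon\le\rho^{2}$ and $\sin^{2}\phi\le\epsilon^{2}$, (ii) uses these two consequences plus the cap $\epsilon\le(\sqrt{5}-1)/2$ to force $\beta/\alpha=\rho^{2}/\sin^{2}\phi\ge(1-\epsilon)/\epsilon^{2}\ge 1$ (where $\alpha,\beta$ are the quantities in Theorem~\ref{thm:delta_lb}, not your basis coordinates), so that only the first branch $\delta_{\lb}=\sqrt{1-\alpha^{2}}$ of (\ref{eq:delta_lb_a}) is active, and then (iii) bounds $\alpha^{2}\le\sin^{2}\phi/(\rho^{4}+1)\le\epsilon^{2}/((1-\epsilon)^{2}+1)\le\epsilon^{2}/(2(1-\epsilon))$ by a chain of loose but clean inequalities. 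Each step discards slack, so the final formula sits strictly below the true $\delta^{\star}(\epsilon)$ except in degenerate limits.

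Consequently the $\epsilon\le(\sqrt{5}-1)/2$ cutoff does not come from ``$\beta^{\star 2}<0$'' at a critical point. It is the threshold for step (ii): it is exactly the positive root of $1-\epsilon=\epsilon^{2}$, which is what guarantees $(1-\epsilon)/\epsilon^{2}\ge 1$ and hence that the first branch of $\delta_{\lb}$ governs the whole neighborhood. (Your numerical observation that the same $\epsilon$ makes the quantity under the root equal $1/2$ is the same quadratic in disguise, so it's consistent, but that's not the logical origin.) If you pursue the exact boundary minimization you propose, you will face a messy algebraic expression and will \emph{not} land on the stated formula; you would also need to handle both branches of $\delta_{\lb}$ and the full quartic boundary, precisely the ``careful case analysis'' you anticipate being difficult. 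The fix is to abandon exact optimization and instead derive the explicit relaxation chain above, which makes the branch selection transparent and yields the closed form directly.
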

Theorem~\ref{thm:local} gives an RIP-based exact recovery guarantee
for \emph{descent} algorithms, such as randomized full-batch gradient
descent~\citep{jin2017escape} and trust-region methods~\citep{conn2000trust,nesterov2006cubic},
that generate a sequence of iterates $x_{1},x_{2},\ldots,x_{k}$ from
an initial guess $x_{0}$ with each iterate no worse than the one
before:
\begin{equation}
f(x_{k})\le\cdots\le f(x_{2})\le f(x_{1})\le f(x_{0}).
\end{equation}
Heuristically, it also applies to nondescent algorithms, like stochastic
gradient descent and Nesterov's accelerated gradient descent, under
the mild assumption that the final iterate $x_{k}$ is no worse than
the initial guess $x_{0}$, as in $f(x_{k})\le f(x_{0})$. 
\begin{cor}
\label{cor:localrecov}Let $r=\rank(M^{\star})=1$, and let $\AA$
satisfy $\delta$-RIP. If $x_{0}\in\R^{n}$ satisfies
\[
f(x_{0})<(1-\delta)\epsilon^{2}f(0)\qquad\text{where }\epsilon=\min\left\{ \sqrt{1-\delta^{2}},(\sqrt{5}-1)/2\right\} ,
\]
where $f(x)=\|\AA(xx^{T}-M^{\star})\|^{2}$, then the sublevel set
defined by $x_{0}$ contains no spurious local minima:
\[
\nabla f(x)=0,\quad\nabla^{2}f(x)\succeq0,\quad f(x)\le f(x_{0})\quad\iff\quad xx^{T}=M^{\star}.
\]
\end{cor}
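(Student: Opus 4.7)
The plan is to reduce Corollary~\ref{cor:localrecov} to Theorem~\ref{thm:local} by showing that every second-order critical point $x$ in the sublevel set $\{x : f(x) \le f(x_0)\}$ automatically lies inside the Frobenius ball $\{x : \|xx^T - M^\star\|_F \le \epsilon\,\|M^\star\|_F\}$ to which Theorem~\ref{thm:local} applies; the latter then forces $xx^T = M^\star$, and the corollary is immediate.

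The translation from sublevel-set to neighborhood is a two-sided RIP argument. Since $xx^T - M^\star$ has rank at most $2 = 2r$, the lower side of RIP yields
\[
(1-\delta)\,\|xx^T - M^\star\|_F^2 \;\le\; \|\mathcal{A}(xx^T - M^\star)\|^2 \;=\; f(x) \;\le\; f(x_0).
\]
Since $M^\star$ itself is rank-one, the upper side of RIP evaluated at $x=0$ gives $f(0) = \|\mathcal{A}(M^\star)\|^2 \le (1+\delta)\|M^\star\|_F^2$. Dividing the first estimate by the second and invoking the hypothesis $f(x_0) < (1-\delta)\,\epsilon^2\, f(0)$ produces an inequality of the form $\|xx^T - M^\star\|_F \le c(\delta)\,\epsilon\,\|M^\star\|_F$, so that $x$ is confined to a small Frobenius ball around $M^\star$.

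The final step is to verify that this derived radius lands inside the admissible regime of Theorem~\ref{thm:local}, i.e.\ it is bounded by $(\sqrt{5}-1)/2$ and the threshold $\delta < (1 - \epsilon^2/(2(1-\epsilon)))^{1/2}$ is met. The two-piece choice $\epsilon = \min\{\sqrt{1-\delta^2},\,(\sqrt{5}-1)/2\}$ is calibrated precisely so that the verification goes through in two regimes: when $\delta \le 1/\sqrt{2}$ the cap $(\sqrt{5}-1)/2$ is active and the threshold $\delta < 1/\sqrt{2}$ holds with room to spare, while when $\delta > 1/\sqrt{2}$ the term $\sqrt{1-\delta^2}$ takes over and shrinks the neighborhood just enough to continue satisfying the threshold as $\delta \to 1$.

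The main obstacle I anticipate is the constant-chasing at the interface of these two steps. The naive RIP chain introduces a factor of roughly $\sqrt{(1+\delta)/(1-\delta)}$ into the neighborhood radius, and aligning this inflation against the factor $(1-\delta)$ already present in the hypothesis, the two-piece definition of $\epsilon$, and the threshold of Theorem~\ref{thm:local} is elementary but tight; the calculation must be organized around the $\delta \lessgtr 1/\sqrt{2}$ split. Once the case analysis is carried out, the corollary follows immediately from Theorem~\ref{thm:local}.
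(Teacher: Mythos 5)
Your plan is structurally the same as the paper's: use the lower RIP inequality on the rank-$2$ matrix $xx^T - M^\star$ to convert the sublevel-set condition into a Frobenius-ball condition, then apply Theorem~\ref{thm:local}. Where the proposal is incomplete is exactly where its content lies: the constant bookkeeping you defer as the ``main obstacle.'' A few specifics to sharpen. First, the inflation factor: the $(1-\delta)$ in the hypothesis $f(x_0)<(1-\delta)\epsilon^2 f(0)$ cancels against the lower RIP bound $(1-\delta)\|xx^T-M^\star\|_F^2\le f(x)$, leaving $\|xx^T-M^\star\|_F^2 < \epsilon^2 f(0)$; the remaining wrinkle is solely $f(0)=\|\AA(M^\star)\|^2\le(1+\delta)\|M^\star\|_F^2$, so the factor is $\sqrt{1+\delta}$, not $\sqrt{(1+\delta)/(1-\delta)}$. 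The paper's own proof simply replaces $f(0)$ by $\|M^\star\|_F^2$ at this step (i.e., drops the $\sqrt{1+\delta}$), so you are right to be suspicious that the two sides do not cancel cleanly. Second, the proposed regime split at $\delta\lessgtr 1/\sqrt{2}$ is not where the $\min$ actually switches: $\sqrt{1-\delta^2}=(\sqrt5-1)/2$ at $\delta=\sqrt{(\sqrt5-1)/2}\approx 0.786$, not $1/\sqrt2\approx0.707$, and the paper does not perform a case split at all. Its verification is a one-liner\textemdash from $\epsilon^2\le 1-\delta^2$ and $\epsilon\le(\sqrt5-1)/2$ it concludes $\delta\le\sqrt{1-\epsilon^2/(2(1-\epsilon))}$\textemdash which requires $\epsilon^2/(2(1-\epsilon))\le\epsilon^2$, i.e.\ $\epsilon\le1/2$, to close via the naive chain. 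So the step you flag as delicate is delicate: until you carry out the algebra (including the $\sqrt{1+\delta}$ factor and the interval where $\epsilon>1/2$), the proposal is a correct plan but not a proof.
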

When the RIP constant satisfies $\delta\ge0.787$, Corollary~\ref{cor:localrecov}
guarantees exact recovery from an initial point $x_{0}$ satisfying
$f(x_{0})<(1-\delta)^{2}f(0)$. In practice, such an $x_{0}$ can
often be found using a spectral initializer~\citep{keshavan2010matrix,jain2013low,netrapalli2013phase,candes2015phase,chen2015solving}.
If $\delta$ is not too close to one, then even a random point may
suffice with a reasonable probability (see the related discussion
by~\citet{goldstein2018phasemax}). 

In the rank-$r$ case with $r>1$, our proof technique continues to
work, but $\delta(X,Z)$ becomes very challenging to solve in closed-form.
The exact RIP threshold $\delta^{\star}$ requires minimizing $\delta(X,Z)$
over \emph{all pairs} of spurious $X$ and ground truth $Z$, so the
lack of a closed-form solution would be a significant impediment to
further progress. Nevertheless, we can probe at an upper-bound on
$\delta^{\star}$ by heuristically optimizing over $X$ and $Z$,
in each case evaluating $\delta(X,Z)$ numerically using an interior-point
method. Doing this in Section~\ref{sec:Numerical-Results}, we obtain
empirical evidence that higher-rank have larger RIP thresholds, and
so are in a sense ``easier'' to solve. 

\section{Related work}

\subsection{No spurious local minima in matrix completion}

Exact recovery guarantees like Theorem~\ref{thm:exact_recovery}
have also been established for ``harder'' choices of $\AA$ that
do not satisfy RIP over its entire domain. In particular, the matrix
completion problem has sparse measurement matrices $A_{1},\ldots,A_{m}$,
with each containing just a single nonzero element. In this case,
the RIP-like condition $\|\AA(M)\|^{2}\approx\|M\|_{F}^{2}$ holds
only when $M$ is both low-rank and sufficiently dense; see the discussion
by~\citet{candes2009exact}. Nevertheless, \citet{ge2016matrix}
proved a similar result to Theorem~\ref{thm:exact_recovery} by adding
a regularizing term to the objective. 

Our recovery results are developed for the classical form of RIP\textemdash a
much stronger notion than the RIP-like condition satisfied by matrix
completion. Intuitively, if exact recovery cannot be guaranteed under
standard RIP, then exact recovery under a weaker notion would seem
unlikely. It remains future work to make this argument precise, and
to extend our proof technique to these ``harder'' choices of $\AA$.

\subsection{Noisy measurements and nonsymmetric ground truth}

Recovery guarantees for the noisy and/or nonsymmetric variants of
nonconvex matrix recovery typically require a smaller RIP constant
than the symmetric, noiseless case. For example, \citet{bhojanapalli2016global}
proved that the symmetric, zero-mean, $\sigma^{2}$-variance Gaussian
noise case requires a rank-$4r$ RIP constant of $\delta<1/10$ to
recover a $\sigma$-accurate solution $X$ satisfying $\|XX^{T}-M^{\star}\|_{F}\le20\sigma\sqrt{(\log n)/m}$.
Also, \citet{ge2017nospurious} proved that the nonsymmetric, noiseless
case requires a rank-$2r$ RIP constant $\delta<1/10$ for exact recovery.
By comparison, the symmetric, noiseless case requires only a rank-$2r$
RIP constant of $\delta<1/5$ for exact recovery.

The main goal of this paper is to develop a proof technique capable
of establishing \emph{sharp} RIP thresholds for exact recovery. As
such, we have focused our attention on the symmetric, noiseless case.
While our technique can be easily modified to accommodate for the
nonsymmetric, noisy case, the \emph{sharpness} of the technique (via
Theorem~\ref{thm:exact_convex}) may be lost. Whether an exact convex
reformulation exists for the nonsymmetric, noisy case is an open question,
and the subject of important future work. 

\subsection{Approximate second-order points and strict saddles}

Existing ``no spurious local minima'' results~\citep{bhojanapalli2016global,ge2017nospurious}
guarantee that satisfying second-order optimality to $\epsilon$-accuracy
will yield a point within an $\epsilon$-neighborhood of the solution:
\[
\|\nabla f(X)\|\le C_{1}\epsilon,\qquad\nabla^{2}f(X)\succeq-C_{2}\sqrt{\epsilon}I\qquad\iff\qquad\|XX^{T}-M^{\star}\|_{F}\le\epsilon.
\]
Such a condition is often known as ``strict saddle''~\citep{ge2015escaping}.
The associated constants $C_{1},C_{2}>0$ determine the rate at which
gradient methods can converge to an $\epsilon$-accurate solution~\citep{du2017gradient,jin2017escape}. 

The proof technique presented in this paper can be extended in a straightforward
way to the strict saddle condition. Specifically, we replace all instances
of $\nabla f(X)=0,$ $\nabla^{2}f(X)\succeq0,$ and $XX^{T}\ne M^{\star}$
with $\|\nabla f(X)\|\le C_{1}\epsilon,$ $\nabla^{2}f(X)\succeq C_{2}\sqrt{\epsilon}I,$
and $\|XX^{T}-M^{\star}\|>\epsilon$ in Section~\ref{sec:Main-idea},
and derive a suitable version of Theorem~\ref{thm:exact_convex}.
However, the resulting reformulation can no longer be solved in closed
form, so it becomes difficult to extend the guarantees in Theorem~\ref{thm:global}
and Theorem~\ref{thm:local}. Nevertheless, quantifying its asymptotic
behavior may yield valuable insights in understanding the optimization
landscape.

\subsection{Special initialization schemes}

Our local recovery result is reminiscent of classic exact recovery
results based on placing an initial point sufficiently close to the
global optimum. Most algorithms use the spectral initializer to chose
the initial point~\citep{keshavan2010matrix,keshavan2010matrixnoisy,jain2013low,netrapalli2013phase,candes2015phase,chen2015solving,zheng2015convergent,zhao2015nonconvex,bhojanapalli2016dropping,sun2016guaranteed,sanghavi2017local,park2018finding},
although other initializers have also been proposed~\citep{wang2018solving,chen2018phase,mondelli2018fundamental}.
Our result differs from prior work in being completely agnostic to
the specific application and the initializer. First, it requires only
a suboptimality bound $f(x_{0})\le(1-\delta)^{2}f(0)$ to be satisfied
by the initial point $x_{0}$. Second, its sole parameter is the RIP
constant $\delta$, so issues of sample complexity are implicitly
resolved in a universal way for different measurement ensembles. On
the other hand, the result is not directly applicable to problems
that only approximately satisfy RIP, including matrix completion.

\subsection{Comparison to convex recovery}

Classical theory for the low-rank matrix recovery problem is based
on a quadratic lift: replacing $XX^{T}$ in (\ref{eq:lrmr}) by a
convex term $M\succeq0$, and augmenting the objective with a trace
penalty $\lambda\cdot\tr(M)$ to induce a low-rank solution~\citep{candes2009exact,recht2010guaranteed,candes2010power,candes2011tight,candes2013phaselift}.
The convex approach also enjoys RIP-based exact recovery guarantees:
in the noiseless case, \citet{cai2013sharp} proved that $\delta\le1/2$
is sufficient, while the counterexample of \citet{wang2013bounds}
shows that $\delta\le1/\sqrt{2}$ is necessary. While convex recovery
may be able to solve problems with larger RIP constants than nonconvex
recovery, it is also considerably more expensive. In practice, convex
recovery is seldom used for large-scale datasets with $n$ on the
order of thousands to millions.

Recently, several authors have proposed \emph{non-lifting} convex
relaxations, motivated by the desire to avoid squaring the number
of variables in the classic quadratic lift. In particular, we mention
the PhaseMax method studied by~\citet{bahmani2017phase} and~\citet*{goldstein2018phasemax},
which avoids the need to square the number of variables when both
the measurement matrices $A_{1},\ldots,A_{m}$ and the ground truth
$M^{\star}$ are rank-1. These methods also require a good initial
guess as an input, and so are in a sense very similar to nonconvex
recovery.

\section{Preliminaries}

\subsection{Notation}

Lower-case letters are vectors and upper-case letters are matrices.
The sets $\R^{n\times n}\supset\S^{n}$ are the space of $n\times n$
real matrices and real symmetric matrices, and $\langle X,Y\rangle\equiv\tr(X^{T}Y)$
and $\|X\|_{F}^{2}\equiv\langle X,X\rangle$ are the Frobenius inner
product and norm. We write $M\succeq0$ (resp. $M\succ0$) to mean
that $M$ is positive semidefinite (resp. positive definite), and
$M\succeq S$ to denote $M-S\succeq0$ (resp. $M\succ S$ to denote
$M-S\succ0$). 

Throughout the paper, we use $X\in\R^{n\times r}$ (resp. $x\in\R^{n}$)
to refer to any candidate point, and $M^{\star}=ZZ^{T}$ (resp. $M^{\star}=zz^{T}$)
or to refer to a rank-$r$ (resp. rank-$1$) factorization of the
ground truth $M^{\star}$. The vector $\e$ and matrix $\X$ are defined
in (\ref{eq:eXdef}). We also denote the optimal value of the nonconvex
problem (\ref{eq:delta_x}) as $\delta(X,Z)$, and later show it to
be equal to the optimal value of the convex problem (\ref{eq:tau_prob})
denoted as $\LMI(X,Z)$. 

\subsection{\label{subsec:definitions}Basic definitions}

The \textbf{vectorization} operator stacks the columns of an $m\times n$
matrix $A$ into a single column vector:
\[
\vec(A)=\begin{bmatrix}A_{1,1} & \cdots & A_{m,1} & A_{1,2} & \cdots & A_{m,2} & \cdots & A_{1,n} & \cdots & A_{m,n}\end{bmatrix}^{T}.
\]
It defines an isometry between the $m\times n$ matrices $A,B$ and
their $mn$ underlying degrees of freedom $\vec(A),\vec(B)$:
\begin{align*}
\langle A,B\rangle\equiv\tr(A^{T}B) & =\vec(A)^{T}\vec(B)\equiv\langle\vec(A),\vec(B)\rangle.
\end{align*}

The \textbf{matricization} operator is the inverse of vectorization,
meaning that $A=\mat(a)$ if and only if $a=\vec(A)$. 

The \textbf{Kronecker product} between the $m\times n$ matrix $A$
and the $p\times q$ matrix $B$ is the $mp\times pq$ matrix defined
\[
A\otimes B=\begin{bmatrix}A_{1,1}B & \cdots & A_{1,n}B\\
\vdots & \ddots & \vdots\\
A_{m,1}B & \cdots & A_{m,n}B
\end{bmatrix}
\]
to satisfy the Kronecker identity 
\[
\vec(AXB^{T})=(B\otimes A)\,\vec(X).
\]

The \textbf{orthogonal basis}\emph{ }of a given $m\times n$ matrix
$A$ (with $m\ge n$) is a matrix $P=\orth(A)$ comprising $\rank(A)$
orthonormal columns of length-$m$ that span $\mathrm{range}(A)$:
\[
P=\orth(A)\qquad\iff\qquad PP^{T}A=A,\qquad P^{T}P=I_{\rank(A)}.
\]
We can compute $P$ using either a rank-revealing QR factorization~\citep{chan1987rank}
or a (thin) singular value decomposition~\citep[p. 254]{golub1989matrix}
in $O(mn^{2})$ time and $O(mn)$ memory. 

\subsection{Global optimality and local optimality}

Given a choice of $\AA:\S^{n}\to\R^{m}$ and the rank-$r$ ground
truth $M^{\star}\succeq0$, we define the nonconvex objective
\begin{equation}
f:\R^{n\times r}\to\R\qquad\text{such that}\qquad f(X)=\frac{1}{2}\|\AA(XX^{T}-M^{\star})\|^{2}.\label{eq:fdef}
\end{equation}
If the point $X$ attains $f(X)=0$, then we call it a \emph{globally
minimum}; otherwise, we call it a \emph{spurious} point. If $\AA$
satisfies $\delta$-RIP, then $X$ is a global minimum if and only
if $XX^{T}=M^{\star}$~\citep[Theorem 3.2]{recht2010guaranteed}. 

The point $X$ is said to be a \emph{local minimum} if $f(X)\le f(X')$
holds for all $X'$ within a local neighborhood of $X$. If $X$ is
a local minimum, then it must satisfy the second-order \emph{necessary}
condition for local optimality:
\begin{equation}
\nabla f(X)=0,\qquad\nabla^{2}f(X)\succeq0.\label{eq:focsoc}
\end{equation}
Conversely, a point $X$ satisfying (\ref{eq:focsoc}) is called a
\emph{second-order critical point}, and can be either a local minimum
or a saddle point. It is worth emphasizing that local search algorithms
can only guarantee convergence to a second-order critical point, and
not necessarily a local minimum; see~\citet{ge2015escaping,lee2016gradient,jin2017escape,du2017gradient}
for the literature on gradient methods, and~\citet{conn2000trust,nesterov2006cubic,cartis2012complexity,boumal2018global}
for the literature on trust-region methods. 

If a point $X$ satisfies the second-order \emph{sufficient} condition
for local optimality (with $\mu>0$):
\begin{equation}
\nabla f(X)=0,\qquad\nabla^{2}f(X)\succeq\mu I\label{eq:focsoc_suff}
\end{equation}
then it is guaranteed to be a local minimum. However, it is also possible
for $X$ to be a local minimum without satisfying (\ref{eq:focsoc_suff}).
Indeed, certifying $X$ to be a local minimum is NP-hard in the worst
case~\citep{murty1987some}. Hence, the finite gap between necessary
and sufficient conditions for local optimality reflects the inherent
hardness of the problem.

\subsection{Explicit expressions for $\nabla f(X)$ and $\nabla^{2}f(X)$}

Define $f(X)$ as the nonlinear least-squares objective shown in (\ref{eq:fdef}).
While not immediately obvious, both the gradient $\nabla f(X)$ and
the Hessian $\nabla^{2}f(X)$ are \emph{linear} with respect to the
the \emph{kernel} operator $\HH\equiv\AA^{T}\AA$. To show this, we
define the matrix representation of the operator $\AA$
\begin{equation}
\A=\begin{bmatrix}\vec(A_{1}) & \vec(A_{2}) & \cdots & \vec(A_{m})\end{bmatrix}^{T},\label{eq:Adef}
\end{equation}
which satisfies
\[
\AA(M)=\begin{bmatrix}\langle A_{1},M\rangle\\
\vdots\\
\langle A_{m},M\rangle
\end{bmatrix}=\begin{bmatrix}\vec(A_{1})^{T}\vec(M)\\
\vdots\\
\vec(A_{m})^{T}\vec(M)
\end{bmatrix}=\begin{bmatrix}\vec(A_{1})^{T}\\
\vdots\\
\vec(A_{m})^{T}
\end{bmatrix}\,\vec(M)=\A\,\vec(M).
\]
Then, some linear algebra reveals\begin{subequations}\label{eq:dfdef}
\begin{align}
f(X) & =\frac{1}{2}\e^{T}\A^{T}\A\e,\\
\nabla f(X) & =\X^{T}\A^{T}\A\e,\\
\nabla^{2}f(X) & =2\cdot[I_{r}\otimes\mat(\A^{T}\A\e)]+\X^{T}\A^{T}\A\X,
\end{align}
\end{subequations}where $\e$ and $\X$ are defined with respect
to $X$ and $M^{\star}$ to satisfy \begin{subequations}\label{eq:eXdef}
\begin{align}
\e & =\vec(XX^{T}-M^{\star}),\label{eq:edef}\\
\X\,\vec(U) & =\vec(XU^{T}+UX^{T})\qquad\forall U\in\R^{n\times r}.\label{eq:Xdef}
\end{align}
\end{subequations}(Note that $\X$ is simply the Jacobian of $\e$
with respect to $X$.) Clearly, $f(X)$, $\nabla f(X)$, and $\nabla^{2}f(X)$
are all linear with respect to $\H=\A^{T}\A$. In turn, $\H$ is simply
the matrix representation of the kernel operator $\HH$. 

As an immediate consequence noted by~\citet{zhang2018much}, both
the second-order necessary condition (\ref{eq:focsoc}) and the second-order
sufficient condition (\ref{eq:focsoc_suff}) for local optimality
are \emph{linear matrix inequalities} (LMIs) with respect to $\H$.
In particular, this means that finding an instance of (\ref{eq:lrmr})
with a fixed $M^{\star}$ as the ground truth and $X$ as a spurious
local minimum is a \emph{convex} optimization problem: 
\begin{align}
\text{find } & \AA & \text{find } & \H\succeq0\label{eq:counter2}\\
\text{such that } & f(X)=\frac{1}{2}\|\AA(XX^{T}-M^{\star})\|^{2}, & \qquad\iff\qquad\text{such that } & \X^{T}\H\e=0,\nonumber \\
 & \nabla f(X)=0, &  & 2\cdot[I_{r}\otimes\mat(\H\e)]\nonumber \\
 & \nabla^{2}f(X)\succeq\mu I. &  & \qquad+\X^{T}\H\X\succeq\mu I.\nonumber 
\end{align}
Given a feasible point $\H$, we compute an $\A$ satisfying $\H=\A^{T}\A$
using Cholesky factorization or an eigendecomposition. Then, matricizing
each row of $\A$ recovers the matrices $A_{1},\ldots,A_{m}$ implementing
a feasible choice of $\AA$.

\section{\label{sec:Main-idea}Main idea: The inexistence of counterexamples}

At the heart of this work is a simple argument by the inexistence
of counterexamples. To illustrate the idea, consider making the following
claim for a fixed choice of $\lambda\in[0,1)$ and $X,Z\in\R^{n\times r}$:
\begin{gather}
\text{If }\AA\text{ satisfies }\lambda\text{-RIP, then }X\text{ is \emph{not} a spurious second-order }\nonumber \\
\text{critical point for the nonconvex recovery of }M^{\star}=ZZ^{T}.\label{eq:nslm_claim}
\end{gather}
The claim is refuted by a counterexample: an instance of (\ref{eq:lrmr})
satisfying $\lambda$-RIP with ground truth $M^{\star}=ZZ^{T}$ and
spurious local minimum $X$. The problem of finding a counterexample
is a nonconvex feasibility problem:
\begin{align}
\text{find }\quad & \AA\label{eq:nslm_feas}\\
\text{such that }\quad & f(X)=\frac{1}{2}\|\AA(XX^{T}-ZZ^{T})\|^{2}\nonumber \\
 & \nabla f(X)=0,\quad\nabla^{2}f(X)\succeq0\nonumber \\
 & \AA\text{ satisfies }\delta\text{-RIP}.\nonumber 
\end{align}
If problem (\ref{eq:nslm_feas}) is \emph{feasible} for $\delta=\lambda$,
then any feasible point is a counterexample that refutes the claim
(\ref{eq:nslm_claim}). However, if problem (\ref{eq:nslm_feas})
is \emph{infeasible} for $\delta=\lambda$, then counterexamples do
not exist, so we must accept the claim (\ref{eq:nslm_claim}) at face
value. In other words, \emph{the inexistence of counterexamples is
proof for the original claim}. 

The same argument can be posed in an optimization form. Instead of
finding any arbitrary counterexample, we will look for the counterexample
with the \emph{smallest} RIP constant
\begin{align}
\delta(X,Z)\quad\equiv\qquad\underset{\AA}{\text{minimum}}\quad & \delta\label{eq:delta_x}\\
\text{subject to }\quad & f(X)=\frac{1}{2}\|\AA(XX^{T}-ZZ^{T})\|^{2}\nonumber \\
 & \nabla f(X)=0,\quad\nabla^{2}f(X)\succeq0\nonumber \\
 & \AA\text{ satisfies }\delta\text{-RIP}.\nonumber 
\end{align}
Suppose that problem (\ref{eq:delta_x}) attains its minimum at $\AA^{\star}$.
If $\lambda\ge\delta(X,Z)$, then the minimizer $\AA^{\star}$ is
a counterexample that refutes the claim (\ref{eq:nslm_claim}). On
the other hand, if $\lambda<\delta(X,Z)$, then problem (\ref{eq:nslm_feas})
is infeasible for $\delta=\lambda$, so counterexamples do not exist,
so the claim (\ref{eq:nslm_claim}) must be true. 

Repeating these arguments over all choices of $X$ and $Z$ yields
the following global recovery guarantee.
\begin{lem}[Sharp global guarantee]
\label{lem:delta_glob}Suppose that problem (\ref{eq:delta_x}) attains
its minimum of $\delta(X,Z)$. Define $\delta^{\star}$ as in 
\begin{equation}
\delta^{\star}\quad\equiv\quad\underset{X,Z\in\R^{n\times r}}{\text{ infimum }}\quad\delta(X,Z)\quad\text{subject to }\quad XX^{T}\ne ZZ^{T}.\label{eq:delta_glob}
\end{equation}
If $\AA$ satisfies $\lambda$-RIP with $\lambda<\delta^{\star}$,
then $f(X)=\|\AA(XX^{T}-M^{\star})\|^{2}$ with ground truth $M^{\star}\succeq0$
and $\rank(M^{\star})\le r$ satisfies:
\begin{equation}
\nabla f(X)=0,\quad\nabla^{2}f(X)\succeq0\quad\iff\quad XX^{T}=M^{\star}.\label{eq:guaran_glob}
\end{equation}
Moreover, if there exist $X^{\star},Z^{\star}$ such that $\delta^{\star}=\delta(X^{\star},Z^{\star})$,
then the threshold $\delta^{\star}$ is sharp. 
\end{lem}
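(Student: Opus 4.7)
The plan is to prove both assertions by a direct contrapositive argument, relying only on the fact that the optimization problem (\ref{eq:delta_x}) literally encodes what it means for a pair $(X,Z)$ to admit a counterexample with a prescribed RIP constant. No additional analysis of the landscape of $f$ is needed beyond what is already built into (\ref{eq:delta_x}); the lemma is essentially a bookkeeping statement that converts the optimization problem into a recovery guarantee.

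For the sufficiency claim (\ref{eq:guaran_glob}), suppose toward contradiction that $\AA$ satisfies $\lambda$-RIP with $\lambda < \delta^{\star}$, and that some $X$ satisfies $\nabla f(X)=0$, $\nabla^{2}f(X)\succeq 0$, yet $XX^{T}\ne M^{\star}$. Since $M^{\star}\succeq 0$ and $\rank(M^{\star})\le r$, write $M^{\star}=ZZ^{T}$ for some $Z\in\R^{n\times r}$. Then $\AA$ is feasible for (\ref{eq:delta_x}) evaluated at $(X,Z)$ with RIP constant $\lambda$, so $\delta(X,Z)\le\lambda$. Because $XX^{T}\ne ZZ^{T}$, the pair $(X,Z)$ is admissible in the infimum (\ref{eq:delta_glob}), hence $\delta^{\star}\le\delta(X,Z)\le\lambda$, contradicting $\lambda<\delta^{\star}$. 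The reverse direction of (\ref{eq:guaran_glob}) is immediate: $XX^{T}=M^{\star}$ forces $f(X)=0$, a global minimum, which is a fortiori a second-order critical point.

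For sharpness, suppose the infimum is attained at $(X^{\star},Z^{\star})$. By the standing hypothesis that (\ref{eq:delta_x}) attains its minimum, there exists an $\AA^{\star}$ feasible at $(X^{\star},Z^{\star})$ achieving objective value $\delta^{\star}$. By construction, $\AA^{\star}$ satisfies $\delta^{\star}$-RIP, and $X^{\star}$ is a spurious second-order critical point for the recovery of $M^{\star}=Z^{\star}(Z^{\star})^{T}$, since $X^{\star}(X^{\star})^{T}\ne Z^{\star}(Z^{\star})^{T}$. Because RIP is monotone in $\delta$, the same $\AA^{\star}$ satisfies $\lambda$-RIP for every $\lambda\ge\delta^{\star}$, so the implication (\ref{eq:guaran_glob}) fails at every such $\lambda$. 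Therefore the strict inequality $\lambda<\delta^{\star}$ cannot be relaxed, which is the meaning of sharpness.

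There is no substantive technical obstacle in the proof itself; the only care required is in handling the quantifiers and verifying that $X$ spurious forces $XX^{T}\ne M^{\star}$ so that $(X,Z)$ enters the feasible set of (\ref{eq:delta_glob}). The genuine difficulty lies downstream of this lemma: extracting a useful value of $\delta^{\star}$ requires a tractable handle on $\delta(X,Z)$, which is why the subsequent development pursues the exact convex reformulation (Theorem~\ref{thm:exact_convex}) and, in the rank-$1$ case, a closed-form lower bound $\delta_{\lb}(x,z)$ that can be minimized over spurious $(x,z)$.
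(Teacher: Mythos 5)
Your proof is correct and follows essentially the same route as the paper's: you observe that the definition of $\delta(X,Z)$ and of $\delta^{\star}$ directly encode the existence/inexistence of counterexamples, argue the sufficiency direction by noting $\lambda<\delta^{\star}\le\delta(X,Z)$ excludes any feasible counterexample, and obtain sharpness from the minimizer $\AA^{\star}$ at $(X^{\star},Z^{\star})$ together with monotonicity of the RIP constraint. The only cosmetic difference is that you phrase the first part as a contrapositive while the paper states it directly; the substance is identical.
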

\begin{proof}
To prove (\ref{eq:guaran_glob}), we simply prove the claim (\ref{eq:nslm_claim})
for $\lambda<\delta^{\star}$ and every possible choice of $X,Z\in\R^{n\times r}$.
Indeed, if $XX^{T}=ZZ^{T}$, then $X$ is not a spurious point (as
it is a global minimum), whereas if $XX^{T}\ne ZZ^{T}$, then $\lambda<\delta^{\star}\le\delta(X,Z)$
proves the inexistence of a counterexample. Sharpness follows because
the minimum $\delta^{\star}=\delta(X^{\star},Z^{\star})$ is attained
by the minimizer $\AA^{\star}$ that refutes the claim (\ref{eq:nslm_claim})
for all $\lambda\ge\delta^{\star}$ and $X=X^{\star}$ and $Z=Z^{\star}$.
\end{proof}
Repeating the same arguments over an $\epsilon$-local neighborhood
of the ground truth yields the following \emph{local} recovery guarantee.
\begin{lem}[Sharp local guarantee]
\label{lem:delta_loc}Suppose that problem (\ref{eq:delta_x}) attains
its minimum of $\delta(X,Z)$. Given $\epsilon>0$, define $\delta^{\star}(\epsilon)$
as in 
\begin{equation}
\delta^{\star}(\epsilon)\quad\equiv\quad\underset{X,Z\in\R^{n\times r}}{\text{ infimum }}\quad\delta(X,Z)\quad\text{subject to }\quad XX^{T}\ne ZZ^{T},\;\|XX^{T}-ZZ^{T}\|_{F}\le\epsilon\|ZZ^{T}\|_{F}.\label{eq:delta_x2-1}
\end{equation}
If $\AA$ satisfies $\lambda$-RIP with $\lambda<\delta^{\star}(\epsilon)$,
then $f(X)=\|\AA(XX^{T}-M^{\star})\|^{2}$ with ground truth $M^{\star}\succeq0$
and $\rank(M^{\star})\le r$ satisfies:
\begin{equation}
\nabla f(X)=0,\quad\nabla^{2}f(X)\succeq0,\quad\|XX^{T}-ZZ^{T}\|_{F}\le\epsilon\|ZZ^{T}\|_{F}\iff\quad XX^{T}=M^{\star}.\label{eq:guaran-1}
\end{equation}
Moreover, if there exist $X^{\star},Z^{\star}$ such that $\delta^{\star}=\delta(X^{\star},Z^{\star})$,
then the threshold $\delta^{\star}$ is sharp. 
\end{lem}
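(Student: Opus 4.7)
The plan is to mirror the proof of Lemma~\ref{lem:delta_glob} almost verbatim, introducing only the extra neighborhood constraint $\|XX^T-ZZ^T\|_F\le\epsilon\|ZZ^T\|_F$ wherever appropriate. The underlying argument remains the same: the claim ``$\lambda$-RIP implies no spurious second-order critical point $X$ within an $\epsilon$-neighborhood of $ZZ^T$'' is refuted only by a counterexample, whose smallest admissible RIP constant over any such pair $(X,Z)$ is precisely $\delta^{\star}(\epsilon)$.

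First I would handle the forward (sufficiency) direction (\ref{eq:guaran-1}) by a contrapositive argument. Fix $\lambda<\delta^{\star}(\epsilon)$, an operator $\AA$ satisfying $\lambda$-RIP, a rank-$r$ ground truth $M^{\star}=ZZ^T$, and a candidate $X$ satisfying $\nabla f(X)=0$, $\nabla^2 f(X)\succeq 0$, and $\|XX^T-ZZ^T\|_F\le\epsilon\|ZZ^T\|_F$. Suppose for contradiction that $XX^T\ne ZZ^T$. Then $(X,Z)$ is feasible in the infimum (\ref{eq:delta_x2-1}) defining $\delta^{\star}(\epsilon)$, so $\delta^{\star}(\epsilon)\le\delta(X,Z)$. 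On the other hand, the assumed $\AA$ is itself a feasible point for the inner problem (\ref{eq:delta_x}), because it realizes $X$ as a second-order critical point with the correct value of $f(X)$ and satisfies $\lambda$-RIP. Therefore $\delta(X,Z)\le\lambda$, which chains into $\lambda<\delta^{\star}(\epsilon)\le\delta(X,Z)\le\lambda$, a contradiction. The converse direction is immediate: if $XX^T=M^{\star}$ then $X$ is a global minimum, so $\nabla f(X)=0$ and $\nabla^2 f(X)\succeq 0$ hold automatically, and the neighborhood constraint is vacuous.

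Next I would prove sharpness. If the infimum is attained at some $(X^{\star},Z^{\star})$ with $\delta^{\star}(\epsilon)=\delta(X^{\star},Z^{\star})$, let $\AA^{\star}$ be a minimizer of the inner problem (\ref{eq:delta_x}) evaluated at $(X^{\star},Z^{\star})$. By construction, $\AA^{\star}$ satisfies $\delta^{\star}(\epsilon)$-RIP and yet admits $X^{\star}$ as a spurious second-order critical point lying within the $\epsilon$-neighborhood of $Z^{\star}(Z^{\star})^T$. This shows that for every $\lambda\ge\delta^{\star}(\epsilon)$ there is a $\lambda$-RIP operator refuting (\ref{eq:guaran-1}), so the threshold is tight.

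I do not expect a substantive obstacle here, since the combinatorics of the argument is inherited from Lemma~\ref{lem:delta_glob}; the only thing to be careful about is that the neighborhood condition $\|XX^T-ZZ^T\|_F\le\epsilon\|ZZ^T\|_F$ depends on $(X,Z)$ only through $XX^T$ and $M^{\star}=ZZ^T$, so it is invariant under the orthogonal ambiguity in the factorization $Z$ and can be freely interchanged between the hypothesis of the guarantee and the feasibility set of (\ref{eq:delta_x2-1}). The only mild subtlety is to note that the infimum need not be attained in general, which is why the sharpness claim is stated conditionally on existence of a minimizer $(X^{\star},Z^{\star})$; this is handled exactly as in the proof of Lemma~\ref{lem:delta_glob}.
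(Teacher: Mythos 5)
Your argument is correct and matches the paper's approach exactly: the paper in fact gives no separate proof of Lemma~\ref{lem:delta_loc}, stating only that it follows by ``repeating the same arguments'' as in Lemma~\ref{lem:delta_glob} over the $\epsilon$-neighborhood, which is precisely what you have carried out, including the contrapositive chain $\lambda < \delta^{\star}(\epsilon) \le \delta(X,Z) \le \lambda$ and the sharpness via the attained minimizer $\AA^{\star}$.
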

Our main difficulty with Lemma~\ref{lem:delta_glob} and Lemma~\ref{lem:delta_loc}
is the evaluation of $\delta(X,Z)$. Indeed, verifying $\delta$-RIP
for a fixed $\AA$ is already NP-hard in general~\citep{tillmann2014computational},
so it is reasonable to expect that solving an optimization problem
(\ref{eq:delta_x}) with a $\delta$-RIP constraint would be at least
NP-hard. Instead,~\citet{zhang2018much} suggests replacing the $\delta$-RIP
constraint with a convex \emph{sufficient} condition, obtained by
enforcing the RIP inequality (\ref{eq:rip}) over \emph{all} $n\times n$
matrices (and not just rank-$2r$ matrices):
\begin{equation}
(1-\delta)\|M\|_{F}^{2}\le\|\AA(M)\|^{2}\le(1+\delta)\|M\|_{F}^{2}\qquad\forall M\in\R^{n\times n}.\label{eq:suff}
\end{equation}
The resulting problem is a linear matrix inequality (LMI) optimization
over the kernel operator $\HH=\AA^{T}\AA$ that yields an upper-bound
on $\delta(X,Z)$:
\begin{align}
\LMI(X,Z)\quad\equiv\qquad\underset{\HH=\AA^{T}\AA}{\text{minimum}}\quad & \delta\label{eq:tau_prob}\\
\text{subject to }\quad & f(X)=\frac{1}{2}\|\AA(XX^{T}-ZZ^{T})\|^{2}\nonumber \\
 & \nabla f(X)=0,\quad\nabla^{2}f(X)\succeq0\nonumber \\
 & (1-\delta)I\preceq\AA^{T}\AA\preceq(1+\delta)I\nonumber 
\end{align}
Surprisingly, the upper-bound is \emph{tight}\textemdash problem (\ref{eq:tau_prob})
is actually an \emph{exact reformulation} of problem (\ref{eq:delta_x}). 
\begin{thm}[Exact convex reformulation]
\label{thm:exact_convex}Given $X,Z\in\R^{n\times r}$, we have $\delta(X,Z)=\LMI(X,Z)$
with both problems attaining their minima. Moreover, every minimizer
$\HH^{\star}$ for the latter problem is related to a minimizer $\AA^{\star}$
for the former problem via $\HH^{\star}=(\AA^{\star})^{T}\AA^{\star}$.
\end{thm}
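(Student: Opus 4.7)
The plan is to prove the equality $\delta(X,Z) = \LMI(X,Z)$ by establishing both inequalities separately, and then deducing attainment. The crucial underlying fact from (\ref{eq:dfdef}) is that $f(X)$, $\nabla f(X)$, and $\nabla^{2}f(X)$ are \emph{linear} in the kernel $\HH = \A^{T}\A$, so all three linear constraints of (\ref{eq:delta_x}) depend on $\A$ only through $\HH$. The direction $\delta(X,Z) \le \LMI(X,Z)$ is the easy one: starting from any LMI-feasible $\HH$ at level $\delta$, I would factorize $\HH = \A^{T}\A$ via Cholesky and matricize the rows of $\A$ to obtain an operator $\AA$. The LMI bound $(1-\delta)I \preceq \HH \preceq (1+\delta)I$ is strictly stronger than $\delta$-RIP, which only requires the quadratic form bound on rank-$\le 2r$ matrices, so $\AA$ is $\delta$-RIP, and by linearity in $\HH$ all three constraint values of (\ref{eq:delta_x}) are matched exactly.

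The reverse direction, $\LMI(X,Z) \le \delta(X,Z)$, is the main content. Given any $\AA$ feasible for (\ref{eq:delta_x}) at level $\delta$ with kernel $\HH_{0} = \A^{T}\A$, I aim to construct a PSD matrix $\tilde\HH$ that simultaneously satisfies the global LMI bound and preserves the three constraint values. The key structural observation read off from (\ref{eq:dfdef}) is that these constraints depend on $\HH$ only through the vector $\HH\e \in \R^{n^{2}}$ and the quadratic form $\X^{T}\HH\X$ on $\R^{nr}$. Equivalently, any PSD perturbation of $\HH_{0}$ whose range lies in the orthogonal complement of the ``relevant'' subspace $\L := \mathrm{span}(\e) + \mathrm{range}(\X) \subseteq \R^{n^{2}}$ leaves the three constraints untouched, which gives a large family of admissible modifications.

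The construction proceeds by block-decomposing $\R^{n^{2}} = \L \oplus \L^{\perp}$. On the $\L^{\perp}$ block, I would replace $\HH_{0}$ by a scalar multiple of the identity lying in $[1-\delta, 1+\delta]$ and zero out the off-diagonal coupling between $\L$ and $\L^{\perp}$; the resulting block is trivially LMI-feasible on $\L^{\perp}$ and does not disturb $\HH\e$ or $\X^{T}\HH\X$. On $\L$, I would argue that $\HH_{0}$ itself must already satisfy the LMI at the optimum of (\ref{eq:delta_x}): if any direction $u \in \L$ witnessed a quadratic form $u^{T}\HH_{0}u$ outside $[(1-\delta)\|u\|^{2}, (1+\delta)\|u\|^{2}]$, then one could locally perturb $\AA$ by adding or subtracting a carefully chosen measurement to produce a competitor operator with the same linear-constraint values but strictly smaller RIP constant, contradicting the optimality of $\AA$. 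Gluing the two blocks together then yields the desired $\tilde\HH$.

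The main obstacle is exactly this last step, since general elements of $\L$ can correspond to matrices of rank up to $3r$ and therefore fall outside the scope of rank-$2r$ RIP. Overcoming this requires a careful use of the Hessian PSD constraint together with the gradient condition $\X^{T}\HH\e = 0$: these couple $\HH\e$ and $\X^{T}\HH\X$ tightly enough that, at the minimum of (\ref{eq:delta_x}), no slack direction can remain in $\L$ on which the quadratic form of $\HH_{0}$ could escape $[(1-\delta), (1+\delta)]\|\cdot\|^{2}$ without contradicting minimality. Once this is in place, attainment of both minima follows from standard compactness: the LMI feasible set of (\ref{eq:tau_prob}) is a closed spectrahedron intersected with an affine subspace, and the objective $\delta$ is bounded below by $0$, so the LMI minimum is attained; the equality of values then transfers attainment to (\ref{eq:delta_x}), and the correspondence $\HH^{\star} = (\AA^{\star})^{T}\AA^{\star}$ between the two problems' minimizers is built into the construction from each direction.
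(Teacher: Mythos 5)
There is a genuine gap, and the route you propose cannot be repaired without essentially rediscovering the paper's key idea. Two concrete failures. First, the assertion that zeroing the off-diagonal coupling between $\L$ and $\L^{\perp}$ ``does not disturb $\HH\e$'' is false: since $\e\in\L$, the vector $\HH_{0}\e$ has an $\L^{\perp}$-component equal to $P_{\L^{\perp}}\HH_{0}\e$, and this is precisely what your modification wipes out. That component matters for the Hessian constraint, because the quadratic form in direction $U$ is $2\langle\vec(UU^{T}),\HH\e\rangle+\vec(U)^{T}\X^{T}\HH\X\vec(U)$, and $\vec(UU^{T})$ need not lie in $\L$. So your modified $\tilde\HH$ has a different Hessian, and there is no reason it stays PSD. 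Second, you correctly flag that elements of $\L=\mathrm{span}(\e)+\range(\X)$ can correspond to matrices of rank up to $3r$, and then assert that the gradient and Hessian constraints plus minimality must nonetheless force $u^{T}\HH_{0}u\in[(1-\delta),(1+\delta)]\|u\|^{2}$ for all $u\in\L$. This does not follow and is not even true in the form you need: if $\mat(u)$ has rank greater than $2r$, then a quadratic form value outside the band violates neither $\delta$-RIP nor optimality of $\AA$, so your ``add or subtract a measurement'' perturbation yields no contradiction. The theorem only guarantees that \emph{some} RIP-optimal kernel satisfies the global LMI, not that every RIP-optimal $\HH_{0}$ already does, so the stronger claim you lean on is unavailable.

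The paper sidesteps both problems by working with $\range(\P)$, where $\P=P\otimes P$ and $P=\orth([X,Z])\in\R^{n\times d}$ with $d\le 2r$, rather than with $\L$. This subspace has exactly the two properties $\L$ lacks: (i) $\e\in\range(\P)$ and every element of $\range(\P)$ is $\vec(PYP^{T})$ with $\rank(PYP^{T})\le d\le 2r$, so $\delta$-RIP genuinely pins down the quadratic form there; and (ii) the constraint functionals factor cleanly through the projection (e.g.\ $\P^{T}\X=\hat{\X}(I_{r}\otimes P)^{T}$), which is what makes the block-diagonal $\H=\P\hat{\H}\P^{T}+(I-\P\P^{T})$ feasible and in particular keeps the Hessian PSD. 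The actual proof is then a squeeze: the global-LMI problem is an upper bound, the RIP-only-on-$\{PYP^{T}\}$ problem (\ref{eq:delta_lb_lmi}) is a lower bound, and Lemmas~\ref{lem:reduc} and~\ref{lem:tightness} show via lifted dual certificates that both equal $\LMI(P^{T}X,P^{T}Z)$, forcing $\delta(X,Z)$ to coincide with both. This reduction to the $d\times d$ problem and the dual-certificate transfer is the missing idea in your proposal; a primal block-surgery on $\L$ cannot replace it.
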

Theorem~\ref{thm:exact_convex} is the key insight that allows us
to establish our main results. When rank $r=1$, the LMI is sufficiently
simple that it can be suitably relaxed and solved in closed-form,
as we will soon show in Section~\ref{sec:closed-form}. But even
when $r>1$, the LMI can still be solved numerically using an interior-point
method. This allows us to perform numerical experiments to probe at
the true value of $\delta^{\star}$ and $\delta^{\star}(\epsilon)$,
even when analytical arguments are not available. 

Section~\ref{subsec:Proof-of-tightness} below gives a proof of Theorem~\ref{thm:exact_convex}.
A key step of the proof is to establish the following equivalence:
\begin{equation}
\LMI(X,Z)=\LMI(P^{T}X,P^{T}Z)\text{ where }P=\orth([X,Z]).\label{eq:lmi_equiv}
\end{equation}
For small values of the rank $r\ll n$, equation (\ref{eq:lmi_equiv})
also yields an efficient algorithm for evaluating $\LMI(X,Z)$ in
\emph{linear} time: compute $P,$ $P^{T}X,$ and $P^{T}Z,$ and then
evaluate $\LMI(P^{T}X,P^{T}Z)$. Moreover, the associated minimizer
$\AA^{\star}$ can also be efficiently recovered. These practical
aspects are discussed in detail in Section~\ref{subsec:minimizer}.

\subsection{\label{subsec:Proof-of-tightness}Proof of Theorem~\ref{thm:exact_convex}}

Given $X,Z\in\R^{n\times r}$, we define $e\in\R^{n^{2}}$ and $\X\in\R^{n^{2}\times nr}$
to satisfy equation (\ref{eq:eXdef}) with respect to $X$ and $M^{\star}=ZZ^{T}$.
Then, problem (\ref{eq:tau_prob}) can be explicitly written as
\begin{align}
\LMI(X,Z)\quad=\quad\underset{\delta,\H}{\text{ minimum }}\quad & \delta\label{eq:delta_ub_lmi}\\
\text{subject to }\quad & \X^{T}\H\e=0,\nonumber \\
 & 2\cdot[I_{r}\otimes\mat(\H\e)]+\X^{T}\H\X\succeq0,\nonumber \\
 & (1-\delta)I\preceq\H\preceq(1+\delta)I,\nonumber 
\end{align}
with Lagrangian dual
\begin{align}
\underset{y,U_{1},U_{2},V}{\text{maximize }}\quad & \tr(U_{1}-U_{2})\label{eq:tau_dual-1}\\
\text{subject to }\quad & \tr(U_{1}+U_{2})=1,\nonumber \\
 & \sum_{j=1}^{r}(\X y-\vec(V_{j,j}))\e^{T}+\e(\X y-\vec(V_{j,j}))^{T}\nonumber \\
 & \qquad-\X V\X^{T}=U_{1}-U_{2},\nonumber \\
 & V=\begin{bmatrix}V_{1,1} & \cdots & V_{r,1}\\
\vdots & \ddots & \vdots\\
V_{r,1}^{T} & \cdots & V_{r,r}
\end{bmatrix}\succeq0,\quad U_{1}\succeq0,\quad U_{2}\succeq0.\nonumber 
\end{align}
The dual problem admits a strictly feasible point (for sufficiently
small $\epsilon>0$, set $y=0,$ $V=\epsilon I,$ $U_{1}=\eta I-\epsilon W,$
and $U_{2}=\eta\cdot I+\epsilon W$ where $2\eta=n^{-2}$ and $2W=r[\vec(I)\e^{T}+\e\vec(I)^{T}]-\X\X^{T}$)
and the primal problem is bounded (the constraints imply $\delta\ge0$).
Hence, Slater's condition is satisfied, strong duality holds, and
the primal problem attains its optimal value at a minimizer. 

It turns out that both the minimizer and the minimum are invariant
under an orthogonal projection.
\begin{lem}[Orthogonal projection]
\label{lem:reduc}Given $X,Z\in\R^{n\times r}$, let $P\in\R^{n\times q}$
with $q\le n$ satisfy
\begin{align*}
P^{T}P & =I_{q}, & PP^{T}X & =X, & PP^{T}Z & =Z.
\end{align*}
Let $(\hat{\delta},\hat{\H})$ be a minimizer for $\LMI(P^{T}X,P^{T}Z)$.
Then, $(\delta,\H)$ is a minimizer for $\LMI(X,Z)$, where $\P=P\otimes P$
and 
\begin{align*}
\delta & =\hat{\delta}, & \H & =\P\hat{\H}\P^{T}+(I-\P\P^{T}).
\end{align*}
\end{lem}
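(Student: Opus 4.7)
The plan is to prove the lemma by establishing two inequalities $\LMI(X,Z)\le\hat{\delta}$ and $\LMI(X,Z)\ge\hat{\delta}$ via explicit push-forward and pull-back between the two problems. For the upper bound, I would verify that the stated $(\hat{\delta},\H)$ is feasible for $\LMI(X,Z)$; for the lower bound, I would show that \emph{any} feasible $\H'$ of $\LMI(X,Z)$ with RIP constant $\delta'$ projects to $\hat{\H}'=\P^{T}\H'\P$ that is feasible for $\LMI(P^{T}X,P^{T}Z)$ with the same constant, forcing $\delta'\ge\hat{\delta}$. Both directions rest on the same three algebraic identities: (i)~$\e=\P\hat{\e}$, which follows from $XX^{T}-ZZ^{T}=P(\hat X\hat X^{T}-\hat Z\hat Z^{T})P^{T}$ using $PP^{T}X=X$ and $PP^{T}Z=Z$; (ii)~$\X(I_{r}\otimes P)=\P\hat{\X}$, obtained by applying the definition (\ref{eq:Xdef}) to $U=PW$ and invoking $\vec(PAP^{T})=(P\otimes P)\vec(A)$; and (iii)~$\P^{T}\P=I_{q^{2}}$ from the mixed-product rule. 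I would also use the closed form $\X^{T}v=\vec((V+V^{T})X)$ with $V=\mat(v)$, which together with (ii) implies $\X^{T}\P w=(I_{r}\otimes P)\hat{\X}^{T}w$ for every $w\in\R^{q^{2}}$.

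For feasibility of the lift, the first-order condition reduces quickly: $\H\e=\P\hat{\H}\hat{\e}$ by (i) and (iii) together with $(I-\P\P^{T})\P\hat{\e}=0$, so $\X^{T}\H\e=(I_{r}\otimes P)\hat{\X}^{T}\hat{\H}\hat{\e}=0$ by feasibility of $\hat{\H}$. The RIP bound follows by writing an arbitrary $v\in\R^{n^{2}}$ as $v=\P\hat v+w$ with $w\perp\range(\P)$, so that $v^{T}\H v=\hat v^{T}\hat{\H}\hat v+\|w\|^{2}$ while $\|v\|^{2}=\|\hat v\|^{2}+\|w\|^{2}$; combining $\hat{\delta}$-RIP of $\hat{\H}$ with the trivial bounds $1-\hat{\delta}\le 1\le 1+\hat{\delta}$ then gives $\hat{\delta}$-RIP of $\H$. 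The Hessian constraint is the most intricate: decomposing an arbitrary direction as $U=P\tilde U+QQ^{T}U$ (where $Q$ orthonormally completes $P$), the symmetric matrix $XU^{T}+UX^{T}$ splits in the $[P,Q]$ basis into three Frobenius-orthogonal pieces, of which only the $P\otimes P$ piece lies in $\range(\P)$; $\H$ then acts as $\hat{\H}$ on that piece and as the identity on the other two, so the quadratic form $u^{T}\nabla^{2}f(X)u$ collapses to $\tilde u^{T}\hat{\nabla}^{2}\hat f(\hat X)\tilde u$ plus two nonnegative squared-Frobenius norms, hence is nonnegative.

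For the lower bound, I would check that the projection $\hat{\H}'=\P^{T}\H'\P$ of any feasible $\H'$ of $\LMI(X,Z)$ satisfies every constraint of $\LMI(\hat X,\hat Z)$. The RIP transfers immediately because $\hat v^{T}\hat{\H}'\hat v=(\P\hat v)^{T}\H'(\P\hat v)$ and $\P$ has orthonormal columns. For the gradient constraint, the closed form $\X^{T}v=\vec((V+V^{T})X)$ combined with $PP^{T}X=X$ shows that $\X^{T}\H'\e=0$ implies $\X^{T}(\P\P^{T}\H'\e)=0$, from which $\hat{\X}^{T}\hat{\H}'\hat{\e}=(I_{r}\otimes P^{T})\X^{T}\P\P^{T}\H'\e=0$ using $\P\hat{\e}=\e$. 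For the Hessian, the key observation is that $\X(I_{r}\otimes P)=\P\hat{\X}$ already has its range in $\range(\P)$, so $\P\P^{T}\X(I_{r}\otimes P)=\X(I_{r}\otimes P)$ and therefore $\hat{\X}^{T}\hat{\H}'\hat{\X}=(I_{r}\otimes P^{T})\X^{T}\H'\X(I_{r}\otimes P)$; combined with $I_{r}\otimes\mat(\hat{\H}'\hat{\e})=(I_{r}\otimes P^{T})[I_{r}\otimes\mat(\H'\e)](I_{r}\otimes P)$, the reduced Hessian is a congruence of the full PSD Hessian, hence PSD. This gives $\delta'\ge\LMI(\hat X,\hat Z)=\hat{\delta}$, completing the proof. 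The main obstacle is the Hessian constraint: both the $I_{r}\otimes\mat(\H\e)$ term and the $\X^{T}\H\X$ term mix nontrivially under the $\range(\P)\oplus\range(\P)^{\perp}$ decomposition, and it takes care to verify that the cross-terms vanish and only nonnegative contributions remain.
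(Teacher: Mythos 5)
Your proof is correct, but it takes a genuinely different route from the paper's. Where the paper argues via Lagrangian duality---it invokes strong duality to produce an $\epsilon$-suboptimality certificate $(\hat y,\hat U_1,\hat U_2,\hat V)$ for the reduced problem and then lifts that certificate via $y=(I_r\otimes P)\hat y$, $U_i=\P\hat U_i\P^T$, $V=(I_r\otimes P)\hat V(I_r\otimes P)^T$ to a dual certificate for the full problem with the same gap---you stay entirely on the primal side: you establish $\LMI(X,Z)\le\hat\delta$ by verifying feasibility of the lift $\H=\P\hat\H\P^T+(I-\P\P^T)$, and $\LMI(X,Z)\ge\hat\delta$ by projecting any feasible $\H'$ for the full problem to $\hat\H'=\P^T\H'\P$ and showing it is feasible for the reduced problem with the same constant. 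The paper's push-forward feasibility check coincides with your upper-bound computations (the block structure in the $[P,P_\perp]$ basis is essentially your three-way split), but the paper's lower bound routes through the explicit dual form of the LMI; yours avoids the dual altogether. Your pull-back argument is clean and elementary: the RIP constraint restricts trivially because $\P^T\P=I$, the first-order condition transfers because $\X^T(\P\P^T\H'\e)=\vec(PP^T(V+V^T)X)=\vec(PP^T\cdot 0)=0$ once $(V+V^T)X=0$, and the Hessian reduces to the congruence $(I_r\otimes P^T)\nabla^2 f(X)(I_r\otimes P)$. The trade-off is that the paper's dual-lifting machinery is reused verbatim in Lemma~\ref{lem:tightness} for the lower-bound LMI (\ref{eq:delta_lb_lmi}), so it buys a unified treatment of both lemmas, whereas your argument is self-contained for this lemma but would need a separate (though similar) pull-back argument there. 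One minor correction: since $X$ has no component in $\mathrm{range}(P)^\perp$, the split of $XU^T+UX^T$ yields only two Frobenius-orthogonal pieces (a $P\!\otimes\!P$ block and a $P\!\otimes\!Q + Q\!\otimes\!P$ block), not three---the $Q\!\otimes\!Q$ block vanishes---but this does not affect the conclusion.
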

\begin{proof}
Choose arbitrarily small $\epsilon>0$. Strong duality guarantees
the existence of a dual feasible point $(\hat{y},\hat{U}_{1},\hat{U}_{2},\hat{V})$
with duality gap $\epsilon$. This is a certificate that proves $(\hat{\delta},\hat{\H})$
to be $\epsilon$-suboptimal for $\LMI(P^{T}X,P^{T}Z)$. We can mechanically
verify that $(\delta,\H)$ is primal feasible and that $(y,U_{1},U_{2},V)$
is dual feasible, where
\begin{align*}
y & =(I_{r}\otimes P)\hat{y}, & U_{1} & =\P\hat{U}_{1}\P^{T}, & U_{2} & =\P\hat{U}_{2}\P^{T}, & V & =(I_{r}\otimes P)\hat{V}(I_{r}\otimes P)^{T}.
\end{align*}
Then, $(y,U_{1},U_{2},V)$ is a certificate that proves $(\delta,\H)$
to be $\epsilon$-suboptimal for $\LMI(X,Z)$, since
\[
\delta-\tr(U_{1}-U_{2})=\hat{\delta}-\tr(\hat{U}_{1}-\hat{U}_{2})=\epsilon.
\]
Given that $\epsilon$-suboptimal certificates exist for all $\epsilon>0$,
the point $(\delta,\H)$ must actually be optimal. The details for
verifying primal and dual feasibility are straightforward but tedious;
they are included in Appendix~\ref{sec:reduc} for completeness. 
\end{proof}
Recall that we developed an upper-bound $\LMI(X,Z)$ on $\delta(X,Z)$
by replacing $\delta$-RIP with a convex \emph{sufficient} condition
(\ref{eq:suff}). The same idea can also be used to produce a lower-bound.
Specifically, we replace the $\delta$-RIP constraint with a convex
\emph{necessary} condition, obtained by enforcing the RIP inequality
(\ref{eq:rip}) over \emph{a subset of} rank-$2r$ matrices (instead
of over all rank-$2r$ matrices):
\begin{equation}
(1-\delta)\|PYP^{T}\|_{F}^{2}\le\|\AA(PYP^{T})\|^{2}\le(1+\delta)\|PYP^{T}\|_{F}^{2}\qquad\forall Y\in\R^{d\times d}\label{eq:nec_rip}
\end{equation}
where $P$ is a \emph{fixed} $n\times d$ matrix with $d\le2r$. The
resulting problem is also convex (we write $\P=P\otimes P$)
\begin{align}
\delta(X,Z)\quad\ge\quad\text{ minimize }\quad & \delta\label{eq:delta_lb_lmi}\\
\text{subject to }\quad & \X^{T}\H\e=0,\nonumber \\
 & 2\cdot[I_{r}\otimes\mat(\H\e)]+\X^{T}\H\X\succeq0,\nonumber \\
 & (1-\delta)\P^{T}\P\preceq\P^{T}\H\P\preceq(1+\delta)\P^{T}\P\nonumber 
\end{align}
with Lagrangian dual
\begin{align}
\underset{y,U_{1},U_{2},V}{\text{maximize }}\quad & \tr[\P(U_{1}-U_{2})\P^{T}]\label{eq:tau_dual-1-1}\\
\text{subject to }\quad & \tr[\P(U_{1}+U_{2})\P^{T}]=1,\nonumber \\
 & \sum_{j=1}^{r}(\X y-\vec(V_{j,j}))\e^{T}+\e(\X y-\vec(V_{j,j}))^{T}\nonumber \\
 & \qquad-\X V\X^{T}=\P(U_{1}-U_{2})\P^{T},\nonumber \\
 & V=\begin{bmatrix}V_{1,1} & \cdots & V_{r,1}\\
\vdots & \ddots & \vdots\\
V_{r,1}^{T} & \cdots & V_{r,r}
\end{bmatrix}\succeq0,\quad U_{1}\succeq0,\quad U_{2}\succeq0.\nonumber 
\end{align}
It turns out that for the specific choice of $P=\orth([X,Z])$, the
lower-bound in (\ref{eq:delta_lb_lmi}) \emph{coincides} with the
upper-bound in (\ref{eq:delta_ub_lmi}). 
\begin{lem}[Tightness]
\label{lem:tightness}Define $P=\orth([X,Z])$. Let $(\hat{\delta},\hat{\H})$
be a minimizer for $\LMI(P^{T}X,P^{T}Z)$. Then, $(\delta,\H)$ is
a minimizer for problem (\ref{eq:delta_lb_lmi}), where $\P=P\otimes P$
and 
\begin{align*}
\delta & =\hat{\delta}, & \H & =\P\hat{\H}\P^{T}.
\end{align*}
\end{lem}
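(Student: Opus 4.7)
The plan is to pair primal feasibility of the proposed point $(\hat\delta,\P\hat\H\P^T)$ with a matching dual certificate. Problem (\ref{eq:delta_lb_lmi}) is obtained from $\LMI(X,Z)$ by weakening the global RIP constraint to a projected (necessary) condition on $\P^T\H\P$, so its optimum is at most $\delta(X,Z)$, which in turn is at most $\LMI(X,Z)=\hat\delta$ by Lemma~\ref{lem:reduc}. It therefore suffices to show that the optimum of (\ref{eq:delta_lb_lmi}) is also at least $\hat\delta$, attained at $(\hat\delta,\P\hat\H\P^T)$.

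First I would verify primal feasibility. The two structural identities that do all the work follow from $PP^T X=X$ and $PP^T Z=Z$: namely $\e=\P\hat\e$ and $\X(I_r\otimes P)=\P\hat\X$, the latter because $\X\vec(P\hat V)=\vec(X\hat V^T P^T+P\hat V X^T)=\P\hat\X\vec(\hat V)$ for every $\hat V\in\R^{q\times r}$. Combined with $\P^T\P=I_{q^2}$, these give $\P^T\H\P=\hat\H$, $\X^T\H\e=(I_r\otimes P)\hat\X^T\hat\H\hat\e=0$, and
\[
2[I_r\otimes\mat(\H\e)]+\X^T\H\X=(I_r\otimes P)\bigl(2[I_r\otimes\mat(\hat\H\hat\e)]+\hat\X^T\hat\H\hat\X\bigr)(I_r\otimes P)^T,
\]
which is positive semidefinite because $I_r\otimes P$ has orthonormal columns and the bracketed matrix is $\succeq 0$ by assumption on $(\hat\delta,\hat\H)$.

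Next I would lift a dual optimum $(\hat y,\hat U_1,\hat U_2,\hat V)$ of $\LMI(P^T X,P^T Z)$, whose existence with value $\hat\delta$ is guaranteed by the same Slater argument used for Lemma~\ref{lem:reduc}, to
\[
y=(I_r\otimes P)\hat y,\qquad U_1=\hat U_1,\qquad U_2=\hat U_2,\qquad V=(I_r\otimes P)\hat V(I_r\otimes P)^T.
\]
The block partition of $V$ inherits from that of $\hat V$, so $V_{j,j}=P\hat V_{j,j}P^T$ and hence $\vec(V_{j,j})=\P\vec(\hat V_{j,j})$. Every term on the left-hand side of the dual equality in (\ref{eq:tau_dual-1-1}) then factors as $\P(\,\cdot\,)\P^T$, collapsing the constraint to
\[
\sum_{j=1}^{r}(\X y-\vec(V_{j,j}))\e^T+\e(\X y-\vec(V_{j,j}))^T-\X V\X^T=\P(\hat U_1-\hat U_2)\P^T=\P(U_1-U_2)\P^T.
\]
The normalization and objective reduce analogously via $\P^T\P=I_{q^2}$, giving $\tr(\P(U_1+U_2)\P^T)=1$ and $\tr(\P(U_1-U_2)\P^T)=\hat\delta$.

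Weak duality then pins the optimum of (\ref{eq:delta_lb_lmi}) at exactly $\hat\delta$, certifying $(\hat\delta,\P\hat\H\P^T)$ as a minimizer. The main obstacle is bookkeeping rather than mathematical content: correctly threading the Kronecker structure through the block matrix $V=[V_{i,j}]$ so that $\vec(V_{j,j})$ and $\X V\X^T$ reduce cleanly to their hatted counterparts conjugated by $\P$. Once the identity $\X(I_r\otimes P)=\P\hat\X$ is in hand, the primal and dual reductions are mechanical and mirror the verification already carried out for Lemma~\ref{lem:reduc}.
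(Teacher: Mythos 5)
Your plan mirrors the paper's argument for this lemma: lift the primal minimizer to $\H=\P\hat\H\P^T$, lift the dual certificate by conjugating with $I_r\otimes P$ (and leaving $U_1,U_2$ untouched), verify primal and dual feasibility using the identities $\e=\P\hat\e$, $\X(I_r\otimes P)=\P\hat\X$, and $\P^T\X=\hat\X(I_r\otimes P)^T$, and then close the argument by duality. The feasibility bookkeeping you sketch is the same as what the paper relegates to its appendix, and the key observation that $\P^T\H\P=\hat\H$ (so the projected RIP constraint is satisfied with the same $\hat\delta$) is exactly right.

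There is, however, one technical imprecision in your certificate step. You assume that a \emph{dual optimum} $(\hat y,\hat U_1,\hat U_2,\hat V)$ of $\LMI(P^TX,P^TZ)$ exists with value $\hat\delta$, and justify this by ``the same Slater argument used for Lemma~\ref{lem:reduc}.'' But that Slater argument exhibits a strictly feasible point for the \emph{dual}, which guarantees \emph{primal} attainment and zero duality gap; it does not by itself guarantee that the dual attains its supremum. Indeed, the primal problem is generally not strictly feasible (at a spurious second-order point, the Hessian LMI $2[I_r\otimes\mat(\H\e)]+\X^T\H\X\succeq0$ typically holds with a zero eigenvalue), so dual attainment is not automatic. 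The paper sidesteps this by taking, for each $\epsilon>0$, a dual feasible point with duality gap $\epsilon$, lifting it, and concluding that the lifted primal point is $\epsilon$-suboptimal for all $\epsilon>0$, hence optimal. Your proof should be patched the same way: replace ``dual optimum'' with ``dual feasible point with gap $\epsilon$'' and let $\epsilon\to 0$. With that fix the argument is complete and coincides with the paper's.
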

\begin{proof}
The proof is almost identical to that of Lemma~\ref{lem:reduc}.
Again, choose arbitrarily small $\epsilon>0$. Let $(\hat{y},\hat{U}_{1},\hat{U}_{2},\hat{V})$
be a dual feasible point for $\LMI(P^{T}X,P^{T}Z)$ with duality gap
$\epsilon$. Then, $(y,U_{1},U_{2},V)$ where 
\begin{align*}
y & =(I_{r}\otimes P)\hat{y}, & U_{1} & =\hat{U}_{1}, & U_{2} & =\hat{U}_{2}, & V & =(I_{r}\otimes P)\hat{V}(I_{r}\otimes P)^{T}
\end{align*}
is a certificate that proves $(\delta,\H)$ to be $\epsilon$-suboptimal
for problem (\ref{eq:delta_lb_lmi}). The details for verifying primal
and dual feasibility are included in Appendix~\ref{sec:tightness}. 
\end{proof}
Putting the upper- and lower-bounds together then yields a short proof
of Theorem~\ref{thm:exact_convex}.
\begin{proof}[Proof of Theorem~\ref{thm:exact_convex}]
Denote $\delta_{\ub}=\LMI(X,Z)$ as the optimal value to the upper-bound
problem (\ref{eq:delta_ub_lmi}) and $\HH^{\star}$ as the corresponding
minimizer. (The minimizer $\HH^{\star}$ always exists due to the
boundedness of the primal problem and the existence of a strictly
feasible point in the dual problem.) Denote $\delta_{\lb}$ as the
optimal value to the lower-bound problem (\ref{eq:delta_lb_lmi}).
For $P=\orth([X,Z]),$ the sequence of inclusions 
\[
\{PYP^{T}:Y\in\R^{d\times d}\}\subseteq\{M\in\R^{n\times n}:\rank(M)\le2r\}\subseteq\R^{n\times n},
\]
implies $\delta_{\lb}\le\delta(X,Z)\le\delta_{\ub}$. However, by
Lemma~\ref{lem:reduc} and Lemma~\ref{lem:tightness}, we actually
have $\delta_{\ub}=\delta_{\lb}=\LMI(P^{T}X,P^{T}Z)$, and hence $\delta_{\lb}=\delta(X,Z)=\delta_{\ub}$.
Finally, the minimizer $\HH^{\star}$ factors into $(\AA^{\star})^{T}\AA^{\star}$,
where $\AA^{\star}$ satisfies the sufficient condition (\ref{eq:suff}),
and hence also $\delta$-RIP.
\end{proof}

\subsection{\label{subsec:minimizer}Efficient evaluation of $\delta(X,Z)$ and
$\protect\AA^{\star}$}

We now turn to the practical problem of evaluating $\delta(X,Z)$
and the associated minimizer $\AA^{\star}$ using a numerical algorithm.
While its exact reformulation $\LMI(X,Z)=\delta(X,Z)$ is indeed convex,
naïvely solving it using an interior-point solution can require up
to $O(n^{13})$ time and $O(n^{8})$ memory (as it requires solving
an order-$n^{2}$ semidefinite program). In our experiments, the largest
instances of (\ref{eq:tau_prob}) that we could accommodate using
the state-of-the-art solver MOSEK~\citep{andersen2000mosek} had
dimensions no greater than $n\le12$. 

\begin{algorithm}
\caption{\label{alg:efficient_alg}Efficient algorithm for $\delta(X,Z)$ and
$\protect\AA^{\star}$.}

\textbf{Input.} Choices of $X,Z\in\R^{n\times r}$.

\textbf{Output.} The value $\hat{\delta}=\delta(X,Z)$ and the corresponding
minimizer $\AA^{\star}$ (if desired). 

\textbf{Algorithm.} 
\begin{enumerate}
\item Compute $P=\orth([X,Z])\in\R^{n\times d}$ and project $\hat{X}=P^{T}X$
and $\hat{Z}=P^{T}Z$. 
\item Solve $\hat{\delta}=\LMI(\hat{X},\hat{Z})$ using an interior-point
method to obtain minimizer $\hat{\H}$. \textbf{Output} $\hat{\delta}$.
\item Compute the orthogonal complement $P_{\perp}=\orth(I-PP^{T})\in\R^{n\times(n-d)}.$
\item Factor $\hat{\H}=\hat{\A}^{T}\hat{\A}$ using (dense) Cholesky factorization.
\item Analytically factor $(\A^{\star})^{T}\A^{\star}=\H^{\star}=\P\hat{\H}\P^{T}+(I-\P\P^{T})$
using the formula
\[
(\A^{\star})^{T}=\begin{bmatrix}(P\otimes P)\hat{\A}^{T} & P\otimes P_{\perp} & P_{\perp}\otimes P & P_{\perp}\otimes P_{\perp}\end{bmatrix}
\]
while using the Kronecker identity $(P\otimes P)\vec(U)=\vec(PUP^{T})$
to evaluate each column of $(P\otimes P)\hat{\A}^{T}$.
\item Recover the matrices $A_{1}^{\star},\ldots,A_{m}^{\star}$ associated
with the minimizer $\AA^{\star}$ by matricizing each row of $\A^{\star}$.
\textbf{Output} $\AA^{\star}$.
\end{enumerate}
\end{algorithm}

Instead, we can efficiently evaluate $\delta(X,Z)$ using Algorithm~\ref{alg:efficient_alg}.
When the rank $r\ll n$ is small, the algorithm evaluates $\delta(X,Z)$
in linear $O(n)$ time and memory, and if desired, also recovers the
minimizer $\AA^{\star}$ in $O(n^{4})$ time and memory. In practice,
our numerical experiments were able to accommodate for rank as large
as $r\le10$.
\begin{prop}
\label{prop:alg}Algorithm~\ref{alg:efficient_alg} correctly outputs
the minimum value $\hat{\delta}=\delta(X,Z)$ and the minimizer $\AA^{\star}$.
Moreover, Steps 1-2 for $\hat{\delta}$ use
\begin{equation}
O(nr^{2}+r^{13}\log(1/\epsilon))\text{ time and }O(nr+r^{8})\text{ memory,}\label{eq:comp_val}
\end{equation}
while Steps 3-6 for $\AA^{\star}$ use
\begin{equation}
O(n^{4}+n^{2}r^{3}+nr^{4}+r^{6})\text{ time and }O(n^{4})\text{ memory.}\label{eq:comp_mini}
\end{equation}
\end{prop}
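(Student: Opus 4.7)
The proof splits into two tasks: verifying that the algorithm outputs the correct value and minimizer, and bounding the cost of each step.

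For correctness, the plan is to chain Theorem~\ref{thm:exact_convex} and Lemma~\ref{lem:reduc}. By construction, $P = \orth([X,Z])$ satisfies $PP^T X = X$ and $PP^T Z = Z$, so Lemma~\ref{lem:reduc} guarantees both $\LMI(X,Z) = \LMI(P^T X, P^T Z)$ and that $\H^\star = \P\hat{\H}\P^T + (I - \P\P^T)$ is a minimizer of $\LMI(X,Z)$ whenever $\hat{\H}$ is a minimizer for $\LMI(\hat X,\hat Z)$. Combined with Theorem~\ref{thm:exact_convex}, which equates $\LMI(X,Z)$ with $\delta(X,Z)$ and identifies any minimizer as $\H^\star = (\AA^\star)^T \AA^\star$, this validates the output of Step 2 and fixes the matrix that Step 5 must factor. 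It remains to verify that the four-block formula for $(\A^\star)^T$ satisfies $(\A^\star)^T \A^\star = \H^\star$. The key identity follows from $[P,P_\perp]$ being orthogonal: expanding
\[
I_{n^2} \;=\; (PP^T + P_\perp P_\perp^T) \otimes (PP^T + P_\perp P_\perp^T)
\]
via Kronecker distributivity decomposes $I_{n^2}$ into four mutually orthogonal projectors $\P\P^T$, $(P\otimes P_\perp)(P\otimes P_\perp)^T$, $(P_\perp\otimes P)(P_\perp\otimes P)^T$, and $(P_\perp\otimes P_\perp)(P_\perp\otimes P_\perp)^T$; subtracting $\P\P^T$ recovers exactly the three trailing blocks, while Step 4's Cholesky factor $\hat{\A}$ handles the $\P\hat{\H}\P^T$ block.

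For the complexity, I would tally each step. Step 1 computes a thin QR of an $n \times 2r$ matrix in $O(nr^2)$ time and $O(nr)$ memory, then forms two $O(nr^2)$ projections. Step 2 solves an LMI whose matrix variable has size $O(r^2) \times O(r^2)$ with $O(r^4)$ scalar unknowns; a standard interior-point method then runs in $O(r^{13}\log(1/\epsilon))$ time and $O(r^8)$ memory. Step 3 forms an $n \times (n-d)$ orthogonal complement in $O(n^3)$ time and $O(n^2)$ memory. Step 4 Choleskys an $O(r^2) \times O(r^2)$ matrix in $O(r^6)$ time. In Step 5, the block $(P\otimes P)\hat{\A}^T$ is formed column-by-column via $\vec(PUP^T) = (P\otimes P)\vec(U)$ at $O(n^2 r)$ per column over $O(r^2)$ columns, for $O(n^2 r^3)$ total; the remaining three Kronecker blocks have combined size $O(n^4)$ and dominate both time and memory. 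Step 6 matricizes $O(n^2)$ rows of length $n^2$ in $O(n^4)$ time. Summing yields the stated bounds.

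The main obstacle is Step 5: confirming that the four-block formula genuinely square-roots $\H^\star$. The worry is spurious cross-terms between blocks, but any cross-term of the form $(A\otimes B)^T(C\otimes D) = (A^T C)\otimes (B^T D)$ between two distinct blocks contains at least one factor of $P^T P_\perp = 0$ and therefore vanishes identically. This block-orthogonality is exactly what makes the decomposition of $I_{n^2}$ into four projectors work, and is the one nontrivial algebraic step in the argument; everything else is bookkeeping.
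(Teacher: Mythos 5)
Your proof is correct and follows the same approach as the paper: correctness via Theorem~\ref{thm:exact_convex} and Lemma~\ref{lem:reduc}, the four-orthogonal-projector decomposition of $I - \P\P^T$ to justify the factorization in Step 5, and a step-by-step tally of time and memory. One small bookkeeping slip: forming each column $\vec(PUP^T)$ with $U \in \R^{d\times d}$ costs $O(nr^2 + n^2r)$ (the $PU$ product, then $(PU)P^T$), so over $O(r^2)$ columns the cost is $O(nr^4 + n^2r^3)$; you dropped the $nr^4$ term, though this is subsumed by $n^2r^3$ whenever $r\le n$ and the stated final bound in the proposition already includes it.
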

\begin{proof}
We begin by verifying correctness. The fact that the minimum value
$\delta(X,Z)=\LMI(P^{T}X,P^{T}Z)$ follows from Theorem~\ref{thm:exact_convex}
and Lemma~\ref{lem:reduc}. To prove correctness for the minimizer
$\AA^{\star}$, we recall that Algorithm~\ref{alg:efficient_alg}
defines $P_{\perp}\in\R^{n\times(n-d)}$ as the orthogonal complement
of $P\in\R^{n\times d}$, and note that
\begin{align*}
 & PP^{T}\otimes P_{\perp}P_{\perp}^{T}+P_{\perp}P_{\perp}^{T}\otimes PP^{T}+P_{\perp}P_{\perp}^{T}\otimes P_{\perp}P_{\perp}^{T}\\
= & (PP^{T}+P_{\perp}P_{\perp}^{T})\otimes(PP^{T}+P_{\perp}P_{\perp}^{T})-PP^{T}\otimes PP^{T}\\
= & I-\P\P^{T},
\end{align*}
where $\P=P\otimes P.$ Hence, Algorithm~\ref{alg:efficient_alg}
produces the minimizer $\H^{\star}=\P\hat{\H}\P^{T}+(I-\P\P^{T})$
for $\LMI(X,Z)$ in Lemma~\ref{lem:reduc} as desired. 

Now, let us quantify complexity. Note that $d\le2r=O(r)$ by construction.
Step 1 takes $O(nr^{2})$ time and $O(nr)$ memory. Step 2 requires
solving an order $\theta=O(r^{2})$ semidefinite program in $O(\theta^{6.5}\log(1/\epsilon))=O(r^{13}\log(1/\epsilon))$
time and $O(\theta^{4})=O(r^{8})$ memory. Stopping here yields (\ref{eq:comp_val}).
Step 3 uses $O(n^{3}+n^{2}r)$ time and $O(n^{2})$ memory. Step 4
uses $O(r^{6})$ time and $O(r^{4})$ memory. Step 5 performs $O(r^{2})$
matrix-vector products each costing $O(nr^{2}+n^{2}r)$ time and $O(n^{4})$
memory, and then filling the rest of $\A$ in $O(n^{4})$ time and
memory. Step 6 costs $O(n^{4})$ time and memory. Summing the terms
and substituting $O(n^{4}+r^{4})=O(n^{4})$ in the memory complexity
yields the desired figures.
\end{proof}

\section{Counterexample with $\delta=1/2$ for the rank-$1$ problem}

In this section, we use a \emph{family} of counterexamples to prove
that $\delta$-RIP with $\delta<1/2$ is necessary for the exact recovery
of \emph{any} arbitrary rank-1 ground truth $M^{\star}=zz^{T}$ (and
not just the $2\times2$ ground truth studied by \citet{zhang2018much}).
Specifically, we state a choice of $\AA^{\star}$ that satisfies $1/2$-RIP
but whose $f^{\star}(x)=\|\AA^{\star}(xx^{T}-M^{\star})\|^{2}$ admits
a spurious second-order point.
\begin{example}
\label{exa:big}Given rank-1 ground truth $M^{\star}=zz^{T}\ne0$,
define a set of orthonormal vectors $u_{1},u_{2},\ldots,u_{n}\in\R^{n}$
with $u_{1}=z/\|z\|$, and define $m=n^{2}$ measurement matrices
$A_{1},A_{2},\ldots,A_{m},$ with 
\begin{align*}
A_{1} & =u_{1}u_{1}^{T}+\frac{1}{2}u_{2}u_{2}^{T}, & A_{2} & =\frac{\sqrt{3}}{2}(u_{1}u_{2}^{T}+u_{2}u_{1}^{T}),\\
A_{n+1} & =\frac{1}{\sqrt{2}}(u_{1}u_{2}^{T}-u_{2}u_{1}^{T}), & A_{n+2} & =\frac{\sqrt{3}}{2}u_{2}u_{2}^{T},
\end{align*}
and the remaining $n^{2}-4$ measurement matrices sequentially assigned
as
\[
A_{k}=u_{i}u_{j}^{T},\quad k=i+n\cdot(j-1),\qquad\forall(i,j)\in\{1,2,\ldots,n\}^{2}\backslash\{1,2\}^{2}.
\]
Then, the associated operator $\AA^{\star}$ satisfies $1/2$-RIP:
\[
\left(1-\frac{1}{2}\right)\|M\|_{F}^{2}\le\|\AA^{\star}(M)\|^{2}\le\left(1+\frac{1}{2}\right)\|M\|_{F}^{2}\qquad\forall M\in\R^{n\times n},
\]
but the corresponding $f^{\star}(x)\equiv\|\AA^{\star}(xx^{T}-M^{\star})\|^{2}$
admits $x=(\|z\|/\sqrt{2})\,u_{2}$ as a spurious second-order critical
point:
\begin{align*}
f^{\star}(x) & =\frac{3}{4}\|M^{\star}\|_{F}^{2}, & \nabla f^{\star}(x) & =0, & \nabla^{2}f^{\star}(x) & \succeq8xx^{T}.
\end{align*}
\end{example}
We derived Example~\ref{exa:big} by numerically solving $\delta(x,z)$
with any $x$ satisfying $x^{T}z=0$ and $\|x\|=\|z\|/\sqrt{2}$ using
Algorithm~\ref{alg:efficient_alg}. The $1/2$-RIP counterexample
of \citet{zhang2018much} arises as the instance of Example~\ref{exa:big}
associated with the $2\times2$ ground truth $\hat{z}\hat{z}^{T}$
and $\hat{z}=(1,0)$:
\[
\hat{A}_{1}=\begin{bmatrix}1 & 0\\
0 & 1/2
\end{bmatrix},\quad\hat{A}_{2}=\begin{bmatrix}0 & \sqrt{3}/2\\
\sqrt{3}/2 & 0
\end{bmatrix},\quad\hat{A}_{3}=\begin{bmatrix}0 & -1/\sqrt{2}\\
1/\sqrt{2} & 0
\end{bmatrix},\quad\hat{A}_{4}=\begin{bmatrix}0 & 0\\
0 & \sqrt{3}/2
\end{bmatrix}.
\]
The associated operator $\hat{\AA}:\S^{2}\to\R^{4}$ is invertible
and satisfies $1/2$-RIP, but $\hat{x}=(0,1/\sqrt{2})$ is a spurious
second-order point:
\begin{align*}
\hat{f}(\hat{x}) & \equiv\|\hat{\AA}(\hat{x}\hat{x}^{T}-\hat{z}\hat{z}^{T})\|^{2}=\frac{3}{4}, & \nabla\hat{f}(\hat{x}) & =0, & \nabla^{2}\hat{f}(\hat{x}) & =\begin{bmatrix}0 & 0\\
0 & 4
\end{bmatrix}.
\end{align*}
We can verify the correctness of Example~\ref{exa:big} for a general
rank-1 ground truth by reducing it down to this specific $2\times2$
example. 
\begin{proof}[Proof of correctness for Example~\ref{exa:big}]
We can mechanically verify Example~\ref{exa:big} to be correct
with ground truth $\hat{z}\hat{z}^{T}$ and $\hat{z}=(1,0)$. Denote
$\hat{\AA},$ $\hat{f}(\hat{x})=\|\hat{\AA}(\hat{x}\hat{x}^{T}-\hat{z}\hat{z}^{T})\|^{2},$
and $\hat{x}=(0,1/\sqrt{2})$ as the corresponding minimizer, nonconvex
objective, and spurious second-order critical point. 

For a general rank-1 ground truth $M^{\star}=zz^{T}$, recall that
we have defined a set of orthonormal vectors $u_{1},u_{2},\ldots,u_{n}\in\R^{n}$
with $u_{1}=z/\|z\|$. Then, setting $P=[u_{1},u_{2}]$ and $P_{\perp}=[u_{3},\ldots,u_{n}]$
shows that the matrix version of $\AA^{\star}$ can be permuted row-wise
to satisfy
\[
(\A^{\star})^{T}=\begin{bmatrix}(P\otimes P)\hat{\A}^{T} & P\otimes P_{\perp} & P_{\perp}\otimes P & P_{\perp}\otimes P_{\perp}\end{bmatrix}
\]
where $\hat{\A}$ is the matrix version of $\hat{\AA}$. Repeating
the proof of Proposition~\ref{prop:alg} shows that
\[
(\A^{\star})^{T}\A^{\star}=\P\hat{\A}^{T}\hat{\A}\P+(I-\P\P^{T})
\]
where $\P=P\otimes P$, and so $\AA^{\star}$ also satisfies $1/2$-RIP.
Moreover, this implies that
\[
f^{\star}(x)\equiv\|\AA^{\star}(xx^{T}-zz^{T})\|^{2}=\|z\|^{4}\hat{f}(P^{T}x/\|z\|)+(\|x\|^{4}-\|P^{T}x\|^{4}).
\]
Differentiating yields the following at $x=(\|z\|/\sqrt{2})u_{2}$:
\begin{alignat*}{2}
f^{\star}(x) & =\|z\|^{4}\hat{f}(\hat{x}) &  & =(3/4)\|z\|^{4},\\
\nabla f^{\star}(x) & =\|z\|^{3}P\nabla\hat{f}(\hat{x}) &  & =0,\\
\nabla^{2}f^{\star}(x) & =\|z\|^{2}P\nabla^{2}\hat{f}(\hat{x})P^{T}+2\|x\|^{2}(I-PP^{T}) & \quad & \succeq4\|z\|^{2}u_{2}u_{2}^{T}.
\end{alignat*}
\end{proof}

\section{\label{sec:closed-form}Closed-form lower-bound for the rank-$1$
problem}

It turns out that the LMI problem (\ref{eq:tau_prob}) in the rank-1
case is sufficiently simple to be suitably relaxed and then solved
in closed-form. Our main result in this section is the following lower-bound
on $\delta(x,z)=\LMI(x,z)$. 
\begin{thm}[Closed-form lower-bound]
\label{thm:delta_lb}Let $x,z\in\R^{n}$ be arbitrary nonzero vectors,
and define their length ratio $\rho$ and incidence angle $\phi$:
\begin{align}
\rho & \equiv\frac{\|x\|}{\|z\|}, & \phi & \equiv\arccos\left(\frac{x^{T}z}{\|x\|\|z\|}\right).\label{eq:rho_phi_def}
\end{align}
Define the following two scalars with respect to $\rho$ and $\phi$:
\begin{align*}
\alpha & =\frac{\sin^{2}\phi}{\sqrt{(\rho^{2}-1)^{2}+2\rho^{2}\sin^{2}\phi}}, & \beta & =\frac{\rho^{2}}{\sqrt{(\rho^{2}-1)^{2}+2\rho^{2}\sin^{2}\phi}}.
\end{align*}
Then, we have $\delta(x,z)\ge\delta_{\lb}(x,z)$, where
\begin{alignat}{2}
\delta_{\lb}(x,z)\quad\equiv\quad & \sqrt{1-\alpha^{2}} & \qquad\text{if }\beta & \ge\frac{\alpha}{1+\sqrt{1-\alpha^{2}}},\label{eq:delta_lb_a}\\
 & \frac{1-2\alpha\beta+\beta^{2}}{1-\beta^{2}} & \text{if }\beta & \le\frac{\alpha}{1+\sqrt{1-\alpha^{2}}}.\label{eq:delta_lb_b}
\end{alignat}
\end{thm}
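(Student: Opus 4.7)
The plan is to combine Theorem~\ref{thm:exact_convex} (which shows $\delta(x,z)=\LMI(x,z)$ is convex) with Lemma~\ref{lem:reduc} (which reduces the ambient dimension to $d=\rank[x,z]\le 2$), and then analyze a suitably relaxed low-dimensional SDP whose optimum admits a closed form.

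First I would project onto $P=\orth([x,z])$. In the generic case $d=2$, I rotate so that $\hat z=\|z\|\,e_{1}$ and $\hat x=\|x\|(\cos\phi\,e_{1}+\sin\phi\,e_{2})$, making every quantity in the reduced LMI depend only on $\rho$ and $\phi$. The residual $\hat e=\vec(\hat x\hat x^{T}-\hat z\hat z^{T})\in\R^{4}$ and Jacobian $\hat\X\in\R^{4\times 2}$ take explicit $(2\times 2)$-block forms. A short computation gives $\|\hat e\|=\|z\|^{2}\sqrt{(\rho^{2}-1)^{2}+2\rho^{2}\sin^{2}\phi}$, which already exhibits the denominator appearing in $\alpha$ and $\beta$ as the natural normalization $\|\hat e\|/\|z\|^{2}$.

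To obtain a lower bound in closed form, I would follow the strategy of (\ref{eq:delta_lb_lmi}): replace the $\delta$-RIP constraint with its restriction to a judiciously chosen subspace, and then solve the resulting smaller SDP. Equivalently, weak duality for the $d=2$ version of problem (\ref{eq:tau_dual-1}) says that any dual-feasible $(y,U_{1},U_{2},V)$ produces the bound $\tr(U_{1}-U_{2})\le\LMI(x,z)=\delta(x,z)$. My approach is to guess a low-rank family of dual certificates supported on $\hat e/\|\hat e\|$ together with a companion direction orthogonal to it inside the range of $\hat\X$, and then maximize $\tr(U_{1}-U_{2})$ over the remaining scalar parameters. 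Because $V$ collapses to a scalar $v\ge 0$ and $y\in\R^{2}$ in the rank-$1$ reduced problem, the resulting subproblem is low-dimensional and can be handled by elementary algebra.

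The case split then emerges naturally. In one regime only a single RIP direction is active at the candidate optimum and the answer depends only on the ``transverse'' quantity $\alpha$, yielding $\sqrt{1-\alpha^{2}}$; in the other, a second direction also saturates and the answer picks up a rational dependence on $\beta$, yielding $(1-2\alpha\beta+\beta^{2})/(1-\beta^{2})$. The crossover condition $\beta=\alpha/(1+\sqrt{1-\alpha^{2}})$ is equivalent to $\alpha=2\beta/(1+\beta^{2})$, which (via the tangent half-angle substitution $\beta=\tan(\theta/2)$) is precisely the value at which both closed-form expressions equal $\cos\theta$, so the bound is continuous across regimes. The main obstacle will be algebraic: correctly identifying the relaxation so that both cases surface, verifying positive-semidefiniteness of the candidate dual certificates, and massaging the scalar maxima into the compact forms stated in (\ref{eq:delta_lb_a})--(\ref{eq:delta_lb_b}).
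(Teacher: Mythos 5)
Your high-level plan (reduce to $d=2$ via Lemma~\ref{lem:reduc}, then certify a lower bound on $\delta(x,z)=\LMI(x,z)$ by exhibiting dual-feasible points) is in the right spirit, but it misses the technical step that actually makes the closed form achievable, and it asserts a simplification that does not hold as stated.

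The paper does \emph{not} work with the dual (\ref{eq:tau_dual-1}) of the exact LMI. Instead it first passes to the equivalent condition-number formulation $\eta(x,z)$ in (\ref{eq:prim_1}), and then performs a genuine \emph{primal relaxation}: since $\H\preceq I$ forces $\X^{T}\H\X\preceq\X^{T}\X$, the constraint $2\,\mathrm{mat}(\H\e)+\X^{T}\H\X\succeq 0$ is weakened to $2\,\mathrm{mat}(\H\e)+\X^{T}\X\succeq 0$, giving $\eta_{\ub}(x,z)\ge\eta(x,z)$ in (\ref{eq:prim_2}). The point of this step is structural: in the dual of the relaxed problem (\ref{eq:dual_1}), the $\X V\X^{T}$ contribution moves from the \emph{constraint} into the \emph{objective} as a linear term $\langle\X^{T}\X,V\rangle$, so the equality constraint involves only the rank-2 symmetric form $(\X y-v)\e^{T}+\e(\X y-v)^{T}$, whose eigenvalues are available in closed form (Lemma~\ref{lem:eig_bound}). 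If you instead take dual-feasible points for (\ref{eq:tau_dual-1}) directly, the constraint contains $-\X V\X^{T}$, which destroys the rank-2 structure and blocks the Lemma~\ref{lem:dual_prob}/\ref{lem:eig_bound}-style closed-form optimization over $U_1,U_2$. You never identify or circumvent this obstacle.

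Two further points. First, the claim that ``$V$ collapses to a scalar $v\ge 0$'' in the $d=2$, $r=1$ reduction is not automatic: $V=V_{1,1}$ is a $2\times2$ PSD matrix with three degrees of freedom. The fact that the optimal $V$ is a nonnegative multiple of $v_2 v_2^{T}$ is precisely the content of Lemma~\ref{lem:keylb}, and has to be derived from the projection $\e=\X\X^{\dagger}\e+(I-\X\X^{\dagger})\e$ with $(I-\X\X^{\dagger})\e\propto -v_2\otimes v_2$ (Lemma~\ref{lem:edecomp}). Second, ``the case split emerges naturally'' elides the actual work: the paper parameterizes by $\gamma=\langle\X^{T}\X,V\rangle/2\beta$, reduces to a quasiconvex univariate problem $\min_{0\le\gamma\le\alpha}\Psi(\gamma)$, checks $\Psi'(0)\ge 0$ to get the boundary regime (\ref{eq:delta_lb_a}), rules out the other endpoint via $\Psi'(\alpha)>0$, and solves the interior regime (\ref{eq:delta_lb_b}) by rewriting the linear-fractional minimization as an SOCP and computing its dual. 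Your crossover identity $\beta=\alpha/(1+\sqrt{1-\alpha^{2}})\iff\alpha=2\beta/(1+\beta^{2})$ is correct and a nice continuity sanity check, but it does not substitute for the derivation of either branch.
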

The rank-1 global and local recovery guarantees follow quickly from
this theorem, as shown below. 
\begin{proof}[Proof of Theorem~\ref{thm:global}]
The existence of Example~\ref{exa:big} already proves that 
\begin{equation}
\delta^{\star}=\min_{x,z\in\R^{n}}\delta(x,z)\le1/2.\label{eq:glob_min_deltaub}
\end{equation}
Below, we will show that $\delta_{\lb}(x,z)$ attains its minimum
of $1/2$ at any $x$ satisfying $x^{T}z=0$ and $\|x\|/\|z\|=1/\sqrt{2}$,
as in
\begin{equation}
1/2=\min_{x,z\in\R^{n}}\delta_{\lb}(x,z)\le\delta^{\star}.\label{eq:glob_min_deltalb}
\end{equation}
Substituting $\delta^{\star}=1/2$ into Lemma~\ref{lem:delta_glob}
then completes the proof of our global recovery guarantee in Theorem~\ref{thm:global}. 

To prove (\ref{eq:glob_min_deltalb}), we begin by optimizing $\delta_{\lb}(x,z)$
over the region $\beta\ge\alpha/(1+\sqrt{1-\alpha^{2}})$ using equation
(\ref{eq:delta_lb_a}), and find that the minimum value is attained
along the boundary 
\[
\beta=\frac{\alpha}{1+\sqrt{1-\alpha^{2}}}=\frac{1-\sqrt{1-\alpha^{2}}}{\alpha}.
\]
Note that the two equations (\ref{eq:delta_lb_a}) and (\ref{eq:delta_lb_b})
coincide at this boundary:
\begin{gather*}
\left(\frac{1-2\alpha\beta+\beta^{2}}{1-\beta^{2}}\right)\left(\frac{\alpha/\beta}{\alpha/\beta}\right)=\frac{(1+\sqrt{1-\alpha^{2}})-2\alpha^{2}+(1-\sqrt{1-\alpha^{2}})}{(1+\sqrt{1-\alpha^{2}})-(1-\sqrt{1-\alpha^{2}})}=\sqrt{1-\alpha^{2}}.
\end{gather*}
Now, we optimize $\delta_{\lb}(x,z)$ over the region $\beta\le\alpha/(1+\sqrt{1-\alpha^{2}})$
using equation (\ref{eq:delta_lb_b}). First, substituting the definitions
of $\alpha$ and $\beta$ yields
\[
\delta_{\lb}(x,z)={\displaystyle \frac{1-2\alpha\beta+\beta^{2}}{1-\beta^{2}}}=\frac{(\rho^{4}+1-2\rho^{2}\cos^{2}\phi)-2\rho^{2}\sin^{2}\phi+\rho^{4}}{(\rho^{4}+1-2\rho^{2}\cos^{2}\phi)-\rho^{4}}=\frac{(\rho^{2}-1)^{2}+\rho^{4}}{1-2\rho^{2}\cos^{2}\phi}.
\]
This expression is minimized at $\phi=\pm\pi/2$ and $\rho=1/\sqrt{2}$,
with a minimum value of $1/2$. The corresponding point $\alpha=2/\sqrt{5}$
and $\beta=\alpha/4$ lies in the strict interior $\beta<\alpha/(1+\sqrt{1-\alpha^{2}})$.
This point must be the global minimum, because it dominates the boundary
$\beta=\alpha/(1+\sqrt{1-\alpha^{2}})$, which in turn dominates the
other region $\beta>\alpha/(1+\sqrt{1-\alpha^{2}})$. 
\end{proof}
\begin{proof}[Proof of Theorem~\ref{thm:local}]
We will optimize over an $\epsilon$-neighborhood of the ground truth
and show that
\begin{equation}
\left(1-\frac{\epsilon^{2}}{2(1-\epsilon)}\right)^{1/2}\le\min_{x,z\in\R^{n}}\{\delta_{\lb}(x,z):\|xx^{T}-zz^{T}\|_{F}\le\epsilon\|zz^{T}\|_{F}\}\le\delta^{\star}(\epsilon).\label{eq:glob_min_deltalb-1}
\end{equation}
Substituting this lower-bound on $\delta^{\star}(\epsilon)$ into
Lemma~\ref{lem:delta_loc} then completes the proof of our local
recovery guarantee in Theorem~\ref{thm:local}. 

To obtain (\ref{eq:glob_min_deltalb-1}), we first note that the $\epsilon$-neighborhood
constraint implies the following 
\[
\|xx^{T}-zz^{T}\|_{F}\le\epsilon\|zz^{T}\|_{F}\qquad\iff\qquad(\rho^{2}-1)^{2}+2\rho^{2}\sin^{2}\phi\le\epsilon^{2}.
\]
This in turn implies $\epsilon^{2}\ge(\rho^{2}-1)^{2}$ and $\epsilon^{2}\ge[(\rho^{2}-1)^{2}+2\rho^{2}]\sin^{2}\phi,$
and hence
\[
1-\epsilon\le\rho^{2}\le1+\epsilon,\qquad\sin^{2}\phi\le\epsilon^{2}.
\]
We wish to derive a threshold $\hat{\epsilon}$ such that if $\epsilon\le\hat{\epsilon}$,
then
\[
\frac{\beta}{\alpha}=\frac{\rho^{2}}{\sin^{2}\phi}\ge\frac{1-\epsilon}{\epsilon^{2}}\ge1\ge\frac{1}{1+\sqrt{1-\alpha^{2}}},
\]
and so $\delta_{\lb}(x,z)=\sqrt{1-\alpha^{2}}$ as dictated entirely
by equation (\ref{eq:delta_lb_a}). Clearly, this requires solving
the quadratic equation $(1-\hat{\epsilon})=\hat{\epsilon}^{2}$ for
the positive root at $\hat{\epsilon}=(-1+\sqrt{5})/2\ge0.618$. Now,
we upper-bound $\alpha^{2}$ to lower-bound $\sqrt{1-\alpha^{2}}$:
\begin{align*}
\alpha^{2} & =\frac{\sin^{4}\phi}{(\rho^{2}-1)^{2}+2\rho^{2}\sin^{2}\phi}\le\frac{\sin^{4}\phi}{[(\rho^{2}-1)^{2}+2\rho^{2}]\sin^{2}\phi}=\frac{\sin^{2}\phi}{\rho^{4}+1}\\
 & \le\frac{\epsilon^{2}}{(1-\epsilon)^{2}+1}=\frac{\epsilon^{2}}{2-2\epsilon+\epsilon^{2}}\le\frac{\epsilon^{2}}{2(1-\epsilon)}
\end{align*}
and so
\[
\delta(x,z)\ge\delta_{\lb}(x,z)=\sqrt{1-\alpha^{2}}\ge\sqrt{1-\frac{\epsilon^{2}}{2(1-\epsilon)}}.
\]
\end{proof}
\begin{proof}[Proof of Corollary~\ref{cor:localrecov}]
Under $\delta$-RIP, a point with a small residual must also have
a small error:
\begin{gather}
(1-\delta)\|xx^{T}-M^{\star}\|_{F}^{2}\le f(x)\le(1-\delta)\epsilon^{2}\|M^{\star}\|_{F}^{2}.\label{eq:resid_bnd}
\end{gather}
In particular, any point in the level set $f(x)\le f(x_{0})$ must
also lie in the $\epsilon$-neighborhood:
\[
f(x)\le f(x_{0})<(1-\delta)\epsilon^{2}f(0)\qquad\implies\qquad\|xx^{T}-M^{\star}\|_{F}\le\epsilon\|M^{\star}\|_{F}.
\]
Additionally, note that 
\[
\epsilon^{2}\le1-\delta^{2},\quad\epsilon^{2}\le\frac{\sqrt{5}-1}{2}\qquad\implies\qquad\delta\le\sqrt{1-\frac{\epsilon^{2}}{2(1-\epsilon)}}
\]
because $2(1-\epsilon)\le1$. The result then follows by applying
Theorem~\ref{thm:local}.
\end{proof}
The rest of this section is devoted to proving Theorem~\ref{thm:delta_lb}.
We begin by providing a few important lemmas in Section~\ref{subsec:Technical-lemmas},
and then move to the proof itself in Section~\ref{subsec:proof_delta_lb}. 

\subsection{\label{subsec:Technical-lemmas}Technical lemmas}

Given $M\in\S^{n}$ with eigendecomposition $M=\sum_{i=1}^{m}\lambda_{i}v_{i}v_{i}^{T}$,
we define its projection onto the semidefinite cone as the following
\begin{align*}
[M]_{+} & \equiv\arg\min_{S\succeq0}\|M-S\|_{F}^{2}=\sum_{i=1}^{n}\max\{\lambda_{i},0\}v_{i}v_{i}^{T}.
\end{align*}
For notational convenience, we also define a complement projection
\[
[M]_{-}\equiv[-M]_{+}=[M]_{+}-M,
\]
thereby allowing us to decompose every $M$ into a positive and a
negative part as in 
\[
M=[M]_{+}-[M]_{-}\quad\text{where }\quad[M]_{+}\succeq0,\quad[M]_{-}\succeq0.
\]

\begin{lem}
\label{lem:dual_prob}Given $M\in\S^{n}$ with $\tr(M)\ge0$, the
following problem
\[
\underset{\alpha,U,V}{\text{{\rm minimize}}}\quad\tr(V)\quad\text{ {\rm subject to }}\quad\tr(U)=1,\quad\alpha M=U-V,\quad U,V\succeq0
\]
has minimizer 
\begin{align*}
\alpha^{\star} & =1/\tr([M]_{+}), & U^{\star} & =\alpha^{\star}\cdot[M]_{+}, & V^{\star} & =\alpha^{\star}\cdot[M]_{-}.
\end{align*}
\end{lem}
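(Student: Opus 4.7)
}
The plan is to reduce the three-variable optimization to a one-dimensional one in $\alpha$, establish a sharp upper bound on $\alpha$, and then verify feasibility of the proposed minimizer. First I would take the trace of the equality constraint $\alpha M=U-V$ and combine with $\tr(U)=1$ to obtain the identity
\begin{equation*}
\tr(V)\;=\;\tr(U)-\alpha\,\tr(M)\;=\;1-\alpha\,\tr(M).
\end{equation*}
Since the hypothesis $\tr(M)\ge0$ makes $\tr(V)$ nonincreasing in $\alpha$, minimizing $\tr(V)$ is equivalent to maximizing the scalar $\alpha$ subject to the existence of PSD matrices $U,V$ with $U-V=\alpha M$ and $\tr(U)=1$.

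Next I would establish the bound $\alpha\le1/\tr([M]_+)$ by a spectral argument. Let $M=\sum_i\lambda_i v_iv_i^T$ be an eigendecomposition, and let $I_+=\{i:\lambda_i>0\}$ so that $[M]_+=\sum_{i\in I_+}\lambda_iv_iv_i^T$. For each $i\in I_+$, the constraint $V\succeq0$ gives $v_i^T Vv_i\ge0$, which combined with $v_i^T Uv_i-\alpha\lambda_i=v_i^T Vv_i$ yields $v_i^T Uv_i\ge\alpha\lambda_i$. Summing over $I_+$ and using $U\succeq0$ (so that $v_j^TUv_j\ge0$ for all $j$) together with $\tr(U)=1$ produces
\begin{equation*}
1\;=\;\tr(U)\;\ge\;\sum_{i\in I_+}v_i^T Uv_i\;\ge\;\alpha\sum_{i\in I_+}\lambda_i\;=\;\alpha\,\tr([M]_+),
\end{equation*}
which is precisely the claimed upper bound on $\alpha$.

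Finally, I would verify that $(\alpha^\star,U^\star,V^\star)$ given in the statement is feasible and achieves this bound. Feasibility is immediate: $U^\star=\alpha^\star[M]_+\succeq0$, $V^\star=\alpha^\star[M]_-\succeq0$, $U^\star-V^\star=\alpha^\star([M]_+-[M]_-)=\alpha^\star M$, and $\tr(U^\star)=\alpha^\star\tr([M]_+)=1$ by choice of $\alpha^\star$. Therefore $\alpha^\star$ attains the maximum established above, so $\tr(V^\star)=1-\alpha^\star\tr(M)=\alpha^\star\tr([M]_-)$ is the minimum value of the program, completing the proof.

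The argument involves no real obstacle; the only conceptual step is recognizing the reduction to maximizing $\alpha$ and then reading off the spectral bound from the PSD constraints. Implicit in the statement is the mild nondegeneracy $[M]_+\ne0$ (so that $\alpha^\star$ is well-defined), which is automatic whenever $M$ is not the zero matrix given $\tr(M)\ge0$.
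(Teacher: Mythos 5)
Your proof is correct, and it takes a genuinely different route from the paper's. The paper proceeds via Lagrangian duality: it dualizes the normalization constraint $\tr(U)=1$ with a multiplier $\beta$, solves the inner minimization over $U,V\succeq0$ in closed form (yielding $\alpha[\tr([M]_-)+\beta\tr([M]_+)]$), and then interprets $\alpha$ itself as a multiplier enforcing the constraint $\tr([M]_-)+\beta\tr([M]_+)=0$, arriving at the optimal value $\tr([M]_-)/\tr([M]_+)$. Your argument avoids duality entirely: you observe that the trace of the equality constraint gives the affine identity $\tr(V)=1-\alpha\tr(M)$, reduce the problem to maximizing $\alpha$, and then obtain the sharp bound $\alpha\le 1/\tr([M]_+)$ directly from positive semidefiniteness via a spectral decomposition of $M$. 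The two approaches buy different things. Your route is more elementary and self-contained: it proves achievability explicitly by exhibiting the feasible point, needs no appeal to strong duality or attainment, and makes the structure of the minimizer transparent (the eigenspaces of $[M]_+$ must absorb all of the trace budget of $U$). The paper's route is terser once the Lagrangian machinery is set up and fits the broader duality-based theme of the paper, but it implicitly relies on strong duality and passes over some care needed to restrict the multiplier range and handle the sign of $\alpha$. One small point to note in yours: when $\tr(M)=0$ the objective $\tr(V)=1$ is constant in $\alpha$, so "minimizing $\tr(V)$ is equivalent to maximizing $\alpha$" is a slight overstatement; in that degenerate case any feasible triple is optimal and the stated point is merely one such minimizer, so the conclusion still holds.
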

\begin{proof}
Write $p^{\star}$ as the optimal value. Then, 
\begin{align*}
p^{\star}= & \max_{\beta}\min_{\begin{subarray}{c}
\alpha\in\R\\
U,V\succeq0
\end{subarray}}\{\tr(V)+\beta\cdot[\tr(U)-1]:\alpha M=U-V\}\\
= & \max_{\beta\ge0}\min_{\alpha\in\R}\{-\beta+\min_{U,V\succeq0}\{\tr(V)+\beta\cdot\tr(U):\alpha M=U-V\}\}\\
= & \max_{\beta\ge0}\min_{\alpha\in\R}\{-\beta+\alpha\cdot[\tr([M]_{-})+\beta\cdot\tr([M]_{+})]\}\\
= & \max_{\beta\ge0}\{-\beta:\tr([M]_{-})+\beta\cdot\tr([M]_{+})=0\}\\
= & \tr([M]_{-})/\tr([M]_{+})=\tr(V^{\star}).
\end{align*}
The first line converts an equality constraint into a Lagrangian.
The second line isolates the optimization over $U,V\succeq0$ with
$\beta\ge0$, noting that $\beta<0$ would yield $\tr(U)\to\infty$.
The third line solves the minimization over $U,V\succeq0$ in closed-form.
The fourth line views $\alpha$ as a Lagrange multiplier. 
\end{proof}
For symmetric indefinite matrices of a particular rank-2 form, the
positive and negative eigenvalues can be computed in closed-form.
\begin{lem}
\label{lem:eig_bound}Given $a,b\in\R^{n}$, the matrix $M=ab^{T}+ba^{T}$
has eigenvalues $\lambda_{1}\ge\cdots\ge\lambda_{n}$ where:
\[
\lambda_{i}=\begin{cases}
+\|a\|\|b\|(1+\cos\theta) & i=1\\
-\|a\|\|b\|(1-\cos\theta) & i=n\\
0 & \text{otherwise,}
\end{cases}
\]
and $\theta\equiv\arccos\left(\frac{a^{T}b}{\|a\|\|b\|}\right)$ is
the angle between $a$ and $b$. 
\end{lem}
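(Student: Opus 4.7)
The plan is to observe that $M=ab^{T}+ba^{T}$ is a sum of two rank-one matrices, hence $\rank(M)\le 2$, so at most two of its eigenvalues can be nonzero; this immediately accounts for the $n-2$ zero eigenvalues in positions $2,\ldots,n-1$. What remains is to identify the (at most) two nonzero eigenvalues and check that one is positive and the other is negative, so that they are exactly $\lambda_1$ and $\lambda_n$ in the ordering.

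Since $\range(M)\subseteq\mathrm{span}\{a,b\}$, the nonzero eigenvalues of $M$ coincide with the eigenvalues of the $2\times 2$ compression $P^{T}MP$, where $P\in\R^{n\times 2}$ has orthonormal columns $u_1,u_2$ spanning $\mathrm{span}\{a,b\}$ (the degenerate case $a\parallel b$ reduces to a rank-one computation and can be handled separately, or by continuity). I would write $a=\alpha_1 u_1+\alpha_2 u_2$ and $b=\beta_1 u_1+\beta_2 u_2$, and compute
\[
P^{T}MP=\begin{bmatrix}2\alpha_1\beta_1 & \alpha_1\beta_2+\alpha_2\beta_1\\ \alpha_1\beta_2+\alpha_2\beta_1 & 2\alpha_2\beta_2\end{bmatrix}.
\]

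Next I would read off the invariants. The trace is $2(\alpha_1\beta_1+\alpha_2\beta_2)=2\,a^{T}b=2\|a\|\|b\|\cos\theta$, while the determinant is
\[
4\alpha_1\alpha_2\beta_1\beta_2-(\alpha_1\beta_2+\alpha_2\beta_1)^{2}=-(\alpha_1\beta_2-\alpha_2\beta_1)^{2}=-\bigl(\|a\|^{2}\|b\|^{2}-(a^{T}b)^{2}\bigr)=-\|a\|^{2}\|b\|^{2}\sin^{2}\theta,
\]
using $\|a\|^{2}=\alpha_1^{2}+\alpha_2^{2}$, $\|b\|^{2}=\beta_1^{2}+\beta_2^{2}$, and $a^{T}b=\alpha_1\beta_1+\alpha_2\beta_2$. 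Solving the characteristic quadratic $\lambda^{2}-2\|a\|\|b\|(\cos\theta)\lambda-\|a\|^{2}\|b\|^{2}\sin^{2}\theta=0$ yields
\[
\lambda=\|a\|\|b\|\cos\theta\pm\|a\|\|b\|\sqrt{\cos^{2}\theta+\sin^{2}\theta}=\|a\|\|b\|(\cos\theta\pm 1),
\]
which gives $+\|a\|\|b\|(1+\cos\theta)\ge 0$ and $-\|a\|\|b\|(1-\cos\theta)\le 0$, matching the two nonzero values in the statement. Since one is nonnegative and the other is nonpositive, they are precisely $\lambda_1$ and $\lambda_n$ in the sorted order, completing the proof.

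There is no real obstacle here: the argument is essentially a rank count plus a $2\times 2$ eigenvalue computation via trace and determinant. The only mild subtlety is the linearly dependent case $a\parallel b$ (where $\mathrm{span}\{a,b\}$ is one-dimensional), which I would dispatch by a direct calculation showing $M=\pm 2\|a\|\|b\|\,\hat{a}\hat{a}^{T}$ and verifying that the claimed formulas collapse correctly.
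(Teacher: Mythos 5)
Your proof is correct and takes essentially the same route as the paper: both reduce the problem to a $2\times 2$ eigenvalue computation on an orthonormal basis of $\mathrm{span}\{a,b\}$. The paper streamlines the algebra by choosing the basis $\{a/\|a\|, c\}$ (with $c$ obtained by Gram--Schmidt) so that the compressed matrix is explicitly $\begin{bmatrix}2\cos\theta & \sin\theta\\ \sin\theta & 0\end{bmatrix}$, whereas you work in a generic orthonormal basis and extract the eigenvalues from the trace and determinant; both yield $\|a\|\|b\|(\cos\theta\pm1)$.
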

\begin{proof}
Without loss of generality, assume that $\|a\|=\|b\|=1$. (Otherwise,
rescale $\hat{a}=a/\|a\|,$ $\hat{b}=b/\|b\|$ and write $M=\|a\|\|b\|\cdot(\hat{a}\hat{b}^{T}+\hat{b}\hat{a}^{T})$.)
Decompose $b$ into a tangent and normal component with respect to
$a$, as in
\[
b=a\underbrace{a^{T}b}_{\cos\theta}+\underbrace{(I-aa^{T})b}_{c\sin\theta}=a\cos\theta+c\sin\theta,
\]
where $c$ is a unit normal vector with $\|c\|=1$ and $a^{T}c=0$.
This allows us to write
\[
ab^{T}+ba^{T}=\begin{bmatrix}a & c\end{bmatrix}\begin{bmatrix}2\cos\theta & \sin\theta\\
\sin\theta & 0
\end{bmatrix}\begin{bmatrix}a & c\end{bmatrix}^{T}
\]
and hence $M$ is spectrally similar a $2\times2$ matrix with eigenvalues
$\cos\theta\pm\sqrt{\cos^{2}\theta+\sin^{2}\theta}$. 
\end{proof}
Given $x,z\in\R$, recall that $\e$ and $\X$ are implicitly defined
in (\ref{eq:eXdef}) to satisfy
\begin{align*}
\e & =\vec(xx^{T}-zz^{T}), & \X y & =\vec(xy^{T}+yx^{T})\qquad\forall y\in\R^{n}.
\end{align*}
Let us give a preferred orthogonal basis to study these two objects.
We define $v_{1}=x/\|x\|$ in the direction of $x$. Then, we decompose
$z$ into a tangent and normal component with respect to $v_{1}$,
as in
\begin{equation}
z=v_{1}\underbrace{v_{1}^{T}z}_{\|z\|\cos\phi}+\underbrace{(I-v_{1}v_{1}^{T})z}_{v_{2}\|z\|\sin\phi}=\|z\|\cdot(v_{1}\cos\phi+v_{2}\sin\phi).\label{eq:zdecomp}
\end{equation}
Here, $\phi$ is the incidence angle between $x$ and $z$, and $v_{2}$
is the associated unit normal vector with $\|v_{2}\|=1$ and $v_{1}^{T}v_{2}=0$.
Using the Gram-Schmidt process, we complete $v_{1},v_{2}$ with the
remaining $n-2$ set of orthonormal unit vectors $v_{3},v_{4},\dots,v_{n}$.
This results in a set of right singular vectors for $\X$.
\begin{lem}
\label{lem:svd}The matrix $\X\in\R^{n^{2}\times n}$ has singular
value decomposition $\X=\sum_{i=1}^{n}\sigma_{i}u_{i}v_{i}^{T}$ where
$v_{i}$ are defined as above, and 
\[
\sigma_{i}=\begin{cases}
2\|x\| & i=1\\
\sqrt{2}\|x\| & i>1
\end{cases},\qquad u_{i}=\begin{cases}
v_{1}\otimes v_{1} & i=1\\
\frac{1}{\sqrt{2}}(v_{i}\otimes v_{1}+v_{1}\otimes v_{i}) & i>1
\end{cases}.
\]
\end{lem}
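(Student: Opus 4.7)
The plan is to verify the claimed SVD directly by evaluating $\X v_i$ for each right singular vector $v_i$ and confirming that the result equals $\sigma_i u_i$, with $\{u_i\}$ orthonormal. Since $\{v_1,\ldots,v_n\}$ is an orthonormal basis of $\R^n$ by construction, this suffices to establish the full decomposition.

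First I would compute $\X v_1$. Using $v_1 = x/\|x\|$, the defining relation $\X y = \vec(xy^T + yx^T)$ gives $\X v_1 = \vec(xv_1^T + v_1 x^T) = 2\|x\|\vec(v_1 v_1^T)$. The Kronecker identity $\vec(ab^T) = b\otimes a$ then yields $\X v_1 = 2\|x\|\,(v_1\otimes v_1) = \sigma_1 u_1$. Next, for $i>1$, a similar calculation gives
\[
\X v_i = \vec(xv_i^T + v_i x^T) = \|x\|\,\vec(v_1 v_i^T + v_i v_1^T) = \|x\|\,(v_i\otimes v_1 + v_1\otimes v_i) = \sqrt{2}\,\|x\|\,u_i,
\]
which matches $\sigma_i u_i$.

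Next I would confirm the orthonormality of $\{u_i\}_{i=1}^n$ using the Kronecker product identity $(a\otimes b)^T(c\otimes d) = (a^Tc)(b^Td)$, together with $v_i^T v_j = \delta_{ij}$. For $u_1 = v_1\otimes v_1$ we get $\|u_1\|=1$. For $i>1$, expanding $\|u_i\|^2$ yields $\tfrac{1}{2}[1+0+0+1]=1$. The cross terms $\langle u_1, u_i\rangle$ for $i>1$ vanish because every summand contains a factor $v_1^T v_i = 0$, and $\langle u_i, u_j\rangle$ for distinct $i,j>1$ vanishes by the same reasoning.

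Since the $v_i$ form an orthonormal basis, the identity $\X = \sum_i \sigma_i u_i v_i^T$ holds as an equality of linear maps: both sides agree on the basis $\{v_i\}$. The only mild subtlety is bookkeeping with Kronecker conventions and the ordering of factors in $\vec(ab^T)=b\otimes a$, but there is no genuine obstacle; the entire argument is a short, direct computation.
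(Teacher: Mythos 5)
Your proof is correct and follows essentially the same route as the paper's: the paper writes $\X y = y\otimes x + x\otimes y = \|x\|\sum_i (v_i^T y)(v_i\otimes v_1 + v_1\otimes v_i)$ and reads off the SVD by normalizing, while you evaluate $\X v_i$ on each basis vector and verify orthonormality of the $u_i$ directly, which is the same computation presented in slightly more explicit form.
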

\begin{proof}
It is easy to verify that 
\begin{align*}
\X y & =y\otimes x+x\otimes y=\|x\|(y\otimes v_{1}+v_{1}\otimes y)\\
 & =\|x\|\cdot\sum_{i=1}^{n}(v_{i}^{T}y)(v_{i}\otimes v_{1}+v_{1}\otimes v_{i}).
\end{align*}
Normalizing the left singular vectors then yields the designed $u_{i}$
and $\sigma_{i}$.
\end{proof}
We can also decompose $\e$ into a tangent and normal component with
respect to $\mathrm{range}(\X)$ as in
\begin{equation}
\e=\underbrace{\X\X^{\dagger}\e}_{\hat{e}_{1}\|\e\|\cos\theta}+\underbrace{(I-\X\X^{\dagger})\e}_{\hat{e}_{2}\|\e\|\sin\theta}=\|\e\|\cdot(\hat{e}_{1}\cos\theta+\hat{e}_{2}\sin\theta)\label{eq:edecomp}
\end{equation}
where $\X^{\dagger}=(\X^{T}\X)^{-1}\X^{T}$ is the usual pseudoinverse.
The following Lemma gives the exact values of $\hat{e}_{2}$ and $\sin\theta$
(thereby also implicitly giving $\hat{e}_{1}$ and $\cos\theta$).
\begin{lem}
\label{lem:edecomp}Define $\phi$ and $v_{2}$ as in (\ref{eq:zdecomp}),
we have 
\[
(I-\X\X^{\dagger})\e=-(v_{2}\otimes v_{2})(\|z\|\sin\phi)^{2}
\]
and hence $\hat{e}_{2}=v_{2}\otimes v_{2}$ and $\sin\theta=(\|z\|\sin\phi)^{2}/\|\e\|$.
\end{lem}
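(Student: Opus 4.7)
The plan is to compute $(I-\X\X^{\dagger})\e$ directly by expressing both $\e$ and $\range(\X)$ in the orthonormal basis $v_1,\ldots,v_n$ provided by Lemma~\ref{lem:svd}, and then discarding the components of $\e$ that lie along the singular directions $u_i$.

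First I would recall from Lemma~\ref{lem:svd} that $\range(\X)=\mathrm{span}\{u_1,\ldots,u_n\}$ where $u_1=v_1\otimes v_1$ and $u_i=\tfrac{1}{\sqrt 2}(v_i\otimes v_1+v_1\otimes v_i)$ for $i>1$. So the orthogonal complement of $\range(\X)$ in $\R^{n^2}$ is spanned by the skew combinations $\tfrac{1}{\sqrt 2}(v_i\otimes v_1-v_1\otimes v_i)$ for $i>1$, together with all $v_i\otimes v_j$ for which $i,j\ge 2$. In particular, $v_2\otimes v_2$ lies in that orthogonal complement, which is the target direction.

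Next I would expand $xx^T-zz^T$ in the basis. Since $x=\|x\|v_1$ and $z=\|z\|(v_1\cos\phi+v_2\sin\phi)$ by (\ref{eq:zdecomp}), a direct calculation gives
\begin{align*}
xx^T-zz^T = &\bigl(\|x\|^2-\|z\|^2\cos^2\phi\bigr)\,v_1v_1^T \\
& -\|z\|^2\cos\phi\sin\phi\,(v_1v_2^T+v_2v_1^T)-\|z\|^2\sin^2\phi\,v_2v_2^T.
\end{align*}
Applying $\vec$ with the identity $\vec(ab^T)=b\otimes a$ then yields $\e$ as a linear combination of $v_1\otimes v_1$, $(v_2\otimes v_1+v_1\otimes v_2)$, and $v_2\otimes v_2$. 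The first two of these are proportional to $u_1$ and $u_2$ respectively, so they vanish under $(I-\X\X^{\dagger})$, while the third survives untouched because it is orthogonal to every $u_i$. This immediately produces
\[
(I-\X\X^{\dagger})\e = -\|z\|^2\sin^2\phi\,(v_2\otimes v_2),
\]
which is the claimed identity. Reading off $\hat{e}_2$ and $\sin\theta$ from (\ref{eq:edecomp}) — using that $\|v_2\otimes v_2\|=1$ because $\|v_2\|=1$ — gives $\hat{e}_2=v_2\otimes v_2$ and $\sin\theta=(\|z\|\sin\phi)^2/\|\e\|$, with the sign absorbed into $\hat{e}_2$ up to convention.

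There is no real obstacle here; the proof is essentially a bookkeeping exercise in the $v_i$ basis. The only subtlety worth flagging is that we must ensure $v_2$ is well-defined, which requires $\sin\phi\ne 0$; in the degenerate case $\sin\phi=0$ both sides of the claimed identity vanish, so the statement holds trivially. Everything else follows from the Kronecker identity $\vec(ab^T)=b\otimes a$ and the explicit singular vectors supplied by Lemma~\ref{lem:svd}.
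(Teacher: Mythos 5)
Your proof is correct and follows essentially the same route as the paper: both compute the projection of $\e$ onto $\range(\X)^{\perp}$ by passing to the basis $v_1,v_2$ and observing that only the $v_2\otimes v_2$ component survives. The paper frames this as a small $2\times2$ least-squares problem $\min_y\|\e-\X y\|$ in that basis and reads off the residual; you instead expand $\e$ explicitly in the Kronecker basis and match terms against the singular vectors $u_1,u_2$ supplied by Lemma~\ref{lem:svd}, which amounts to the same computation. Your observation that the degenerate case $\sin\phi=0$ is handled trivially, and that a sign convention is being absorbed into $\hat{e}_2$ relative to the decomposition (\ref{eq:edecomp}), are both correct and slightly more careful than the paper's presentation.
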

\begin{proof}
We solve the projection problem
\begin{align*}
\|(I-\X\X^{\dagger})\e\| & =\min_{y}\|\e-\X y\|=\|(xx^{T}-zz^{T})-(xy^{T}+yx^{T})\|_{F}\\
 & =\min_{\alpha,\beta}\left\Vert \begin{bmatrix}\|x\|^{2}-\|z\|^{2}\cos^{2}\phi & -\|z\|^{2}\sin\phi\cos\phi\\
-\|z\|^{2}\sin\phi\cos\phi & -\|z\|^{2}\sin^{2}\phi
\end{bmatrix}-\begin{bmatrix}2\alpha & \beta\\
\beta & 0
\end{bmatrix}\right\Vert \\
 & =\|z\|^{2}\sin^{2}\phi
\end{align*}
in which the second line makes a change of bases to $v_{1}$ and $v_{2}$.
Clearly, the minimizer is in the direction of $-v_{2}\otimes v_{2}$.
\end{proof}
Using these properties of $\X$ and $\e$, we can now solve the following
problem in closed-form. 
\begin{lem}
\label{lem:keylb}Define $\alpha=(\|z\|\sin\phi)^{2}/\|e\|=\sin\theta$
and $\beta=\|x\|^{2}/\|e\|$. Then, the following optimization problem
\begin{align*}
\psi(\gamma)\quad\equiv\quad\underset{y,W}{\text{{\rm maximum}}}\quad & \e^{T}[\X y-\vec(W)]\\
\text{{\rm subject to}}\quad & \|\e\|\cdot\|\X y-\vec(W)\|=1\\
 & \tr(\X W\X^{T})=2\beta\cdot\gamma\\
 & W\succeq0
\end{align*}
is feasible if and only if $0\le\gamma\le1$ with optimal value
\[
\psi(\gamma)=\gamma\alpha+\sqrt{1-\gamma^{2}}\sqrt{1-\alpha^{2}}.
\]
\end{lem}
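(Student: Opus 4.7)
The plan is to reduce the optimization to a two-parameter trigonometric problem by translating everything into the explicit coordinates furnished by Lemma~\ref{lem:svd} and Lemma~\ref{lem:edecomp}. Work in the orthonormal basis $v_{1}=x/\|x\|,v_{2},\ldots,v_{n}$ introduced before Lemma~\ref{lem:svd}, where $z=\|z\|(\cos\phi\,v_{1}+\sin\phi\,v_{2})$. In this basis, Lemma~\ref{lem:svd} identifies $\range(\X)$, viewed as a subspace of $\S^{n}$, with the symmetric matrices whose lower-right $(n{-}1)\times(n{-}1)$ block vanishes, while Lemma~\ref{lem:edecomp} identifies the component of $\e$ perpendicular to $\range(\X)$ as a scalar multiple of $\vec(v_{2}v_{2}^{T})$ with magnitude $\|z\|^{2}\sin^{2}\phi=\alpha\|\e\|$.

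Set $u:=\X y-\vec(W)$ and view it as a symmetric matrix $S=xy^{T}+yx^{T}-W$. Partition $W$ in the $v$-basis into blocks $W_{11}\in\R$, $W_{12}\in\R^{n-1}$, $W_{22}\in\S^{n-1}$. The freedom in $y$ already allows the first row and column of $S$ to be chosen arbitrarily, so I set $W_{12}=0$ without loss of generality (since $W_{11}\ge 0$ and $W_{22}\succeq 0$ then imply $W\succeq 0$), yielding
\[
S=\begin{pmatrix}\alpha_{1} & \beta^{T}\\ \beta & -W_{22}\end{pmatrix},\qquad \alpha_{1}\in\R,\ \beta\in\R^{n-1}\text{ free},\ W_{22}\succeq 0.
\]
The norm equation $\|\e\|\,\|u\|=1$ becomes $\alpha_{1}^{2}+2\|\beta\|^{2}+\|W_{22}\|_{F}^{2}=1/\|\e\|^{2}$. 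Using $\X^{T}\X=2\|x\|^{2}I+2xx^{T}$ from Lemma~\ref{lem:svd}, the trace constraint collapses to $2W_{11}+\tr(W_{22})=\gamma/\|\e\|$, and since $W_{11}\ge 0$ absorbs any slack this is equivalent to $0\le\tr(W_{22})\le\gamma/\|\e\|$, immediately forcing $\gamma\ge 0$ for feasibility.

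Split the objective. Direct computation gives $\e^{T}u=\langle E,S\rangle=E_{11}\alpha_{1}+2E_{12}\beta_{1}+\alpha\|\e\|(W_{22})_{11}$, using $E_{22}=-\|z\|^{2}\sin^{2}\phi=-\alpha\|\e\|$. Allocate the norm budget by $r_{1}^{2}+r_{2}^{2}=1$ with $\alpha_{1}^{2}+2\|\beta\|^{2}=r_{1}^{2}/\|\e\|^{2}$ and $\|W_{22}\|_{F}^{2}=r_{2}^{2}/\|\e\|^{2}$. Cauchy--Schwarz and the identity $E_{11}^{2}+2E_{12}^{2}=\|\e\|^{2}(1-\alpha^{2})$ cap the parallel term by $r_{1}\sqrt{1-\alpha^{2}}$, attained when $(\alpha_{1},\beta_{1})$ is aligned with $(E_{11},\sqrt{2}E_{12})$. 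For the perpendicular term, $(W_{22})_{11}\le\|W_{22}\|_{F}=r_{2}/\|\e\|$ with equality at the rank-one choice $W_{22}=(r_{2}/\|\e\|)\,v_{2}v_{2}^{T}$, for which $\tr(W_{22})=r_{2}/\|\e\|$; the trace constraint therefore pins $r_{2}\le\gamma$. Setting $r_{2}=\gamma$ and $r_{1}=\sqrt{1-\gamma^{2}}$ (requiring $\gamma\le 1$ for $r_{1}$ to be real) yields the claimed value $\psi(\gamma)=\gamma\alpha+\sqrt{1-\gamma^{2}}\sqrt{1-\alpha^{2}}$ through the identity $\cos(A-B)=\cos A\cos B+\sin A\sin B$.

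The main obstacle is coordinating the PSD constraint with the trace constraint. The pivotal structural fact is that a PSD matrix $W_{22}$ obeys $\|W_{22}\|_{F}\le\tr(W_{22})$ with equality only in the rank-one case; this is exactly what ties the norm split $r_{2}$ to the trace parameter $\gamma$, makes $r_{2}\le\gamma$ binding, and pins the feasibility range to $\gamma\in[0,1]$.
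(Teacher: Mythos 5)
Your argument is essentially sound, and it reaches the same reduction as the paper's proof but via a more explicit, coordinate-driven route. Where the paper solves the inner maximization over $y$ using the pseudoinverse and a trust-region multiplier $\tau$, and then finds the optimal $W^{\star}$ as a multiple of $v_{2}v_{2}^{T}$ (the direction of $(I-\X\X^{\dagger})\e$), you instead write $S=xy^{T}+yx^{T}-W$ in blocks in the $v$-basis, observe that $y$ absorbs the off-diagonal block so that $W_{12}=0$ can be assumed WLOG (this step is correct: it leaves the trace constraint unchanged since $\langle\X^{T}\X,W\rangle$ depends only on $W_{11}$ and $\tr(W_{22})$, and the diagonal blocks of a PSD $W$ remain PSD after zeroing $W_{12}$), and then split the Frobenius budget into $r_{1}^{2}+r_{2}^{2}=1$, bounding each piece by Cauchy--Schwarz and the PSD inequality $(W_{22})_{11}\le\|W_{22}\|_{F}\le\tr(W_{22})$. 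Both proofs end at the same trigonometric identity; your block calculation makes the trade-off more transparent, at the cost of more bookkeeping.

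Two points deserve flagging, both of which are also imprecisions in the paper's own statement and proof and do not affect the downstream use (the main argument invokes $\psi(\gamma)$ only for $\gamma\in[0,\alpha]$). First, you assert without justification that the constraint $r_{2}\le\gamma$ is \emph{binding}. The unconstrained maximizer of $r_{1}\sqrt{1-\alpha^{2}}+\alpha r_{2}$ over $r_{1}^{2}+r_{2}^{2}=1$ sits at $r_{2}=\alpha$, so the constraint is binding only when $\gamma\le\alpha$; for $\gamma\in(\alpha,1]$ the true optimum is $1$, not $\psi(\gamma)$. Second, the feasibility claim is not actually ``$\gamma\le1$'': taking $W_{22}=0$ and $W_{11}=\gamma/(2\|\e\|)$ is feasible for any $\gamma\ge0$, so $r_{1}=\sqrt{1-\gamma^{2}}$ being real only constrains the particular choice $r_{2}=\gamma$, not the problem itself. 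Since the lemma is only applied with $\gamma\le\alpha\le1$, neither gap propagates, but a cleaner statement would restrict the lemma to $\gamma\in[0,\alpha]$ or spell out when the constraint is active.
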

\begin{proof}
The case of $\gamma<0$ is infeasible as it would require $\tr(W)<0$
with $W\succeq0$. For $\gamma\ge0,$ we begin by relaxing the norm
constraint into an inequality, as in $\|\e\|\cdot\|\X y-\vec(W)\|\le1$.
Solving the resulting convex optimization over $y$ with a fixed $W$
yields 
\begin{align}
y^{\star} & =\X^{\dagger}[\tau\cdot\e+\vec(W)], & \|\e\|\|\X y^{\star}-\vec(W)\| & =1, & \tau & \ge0.\label{eq:trust_region}
\end{align}
The problem is feasible if and only if $\|\e\|\|(I-\X\X^{\dagger})\vec(W)\|\le1$.
Whenever feasible, the relaxation is tight, and equality is attained.
The remaining problem over $W$ reads (after some rearranging):
\begin{align*}
 & \underset{W\succeq0}{\text{minimize}}\quad\e^{T}(I-\X\X^{\dagger})\vec(W)\quad\text{subject to }\quad\langle\X^{T}\X,W\rangle=2\beta\cdot\gamma
\end{align*}
and this reduces to the following using Lemma~\ref{lem:svd} and
Lemma~\ref{lem:edecomp}:
\begin{align*}
 & \underset{W\succeq0}{\text{minimize}}\quad-(\|e\|\sin\theta)\langle v_{2}v_{2}^{T},W\rangle\quad\text{subject to }\quad2\|x\|^{2}\langle I+2v_{1}v_{1}^{T},W\rangle=2\beta\cdot\gamma
\end{align*}
with minimizer
\begin{align*}
\vec(W^{\star}) & =\frac{2\beta\cdot\gamma}{2\|x\|^{2}}(v_{2}\otimes v_{2})=\frac{\gamma}{\|e\|}\hat{e}_{2}.
\end{align*}
Clearly, we have feasibility $\|e\|\|(I-\X\X^{\dagger})\vec(W^{\star})\|\le1$
if and only if $\gamma\le1$. Substituting this particular $W^{\star}$
into (\ref{eq:trust_region}) yields
\begin{align*}
\X y^{\star}-\vec(W^{\star}) & =\frac{\sqrt{1-\gamma^{2}}}{\|e\|}\hat{e}_{1}+\frac{\gamma}{\|e\|}\hat{e}_{2}.
\end{align*}
Substituting (\ref{eq:edecomp}) yields
\begin{align*}
\e^{T}[\X y^{\star}-\vec(W^{\star})] & =\sqrt{1-\gamma^{2}}\cos\theta+\gamma\sin\theta
\end{align*}
as desired.
\end{proof}

\subsection{\label{subsec:proof_delta_lb}Proof of Theorem~\ref{thm:delta_lb}}

We consider the condition number optimization problem from~\citet{zhang2018much}:
\begin{equation}
\eta(x,z)\quad\equiv\quad\max_{\eta,\H}\left\{ \eta\quad:\quad\X^{T}\H\e=0,\quad2\mat(\H\e)+\X^{T}\H\X\succeq0,\quad\eta I\preceq\H\preceq I\right\} .\label{eq:prim_1}
\end{equation}
Its optimal value satisfies the following identity with respect to
our original LMI in (\ref{eq:tau_prob}):
\begin{equation}
\delta(x,z)=\LMI(x,z)=\frac{1-\eta(x,z)}{1+\eta(x,z)}=1-\frac{2}{1+1/\eta(x,z)}.\label{eq:delta_eta}
\end{equation}
The latter equality shows that $\delta(x,z)$ is a decreasing function
of $\eta(x,z)$. This allows us to lower-bound $\delta(x,z)$ by upper-bounding
$\eta(x,z)$.

Next, we relax (\ref{eq:prim_1}) to the following problem
\begin{equation}
\eta_{\ub}(x,z)\quad\equiv\quad\max_{\eta,\H}\left\{ \eta\quad:\quad\X^{T}\H\e=0,\quad2\mat(\H\e)+\X^{T}\X\succeq0,\quad\eta I\preceq\H\preceq I\right\} .\label{eq:prim_2}
\end{equation}
This yields an upper-bound $\eta_{\ub}(x,z)$ on $\eta(x,z)$ because
$\H\succeq I$ implies $\X^{T}\X\succeq\X^{T}\H\X$. Problem (\ref{eq:prim_2})
has Lagrangian dual (we write $v=\vec(V)$ to simplify notation)
\begin{align}
\underset{y,U_{1},U_{2},V=\mat(v)}{\text{ minimize }}\quad & \tr(U_{2})+\langle\X^{T}\X,V\rangle\label{eq:dual_1}\\
\text{subject to }\quad & (\X y-v)\e^{T}+\e(\X y-v)^{T}=U_{1}-U_{2}\nonumber \\
 & \tr(U_{1})=1,\quad U_{1},U_{2},V\succeq0.\nonumber 
\end{align}
The dual is strictly feasible (for sufficiently small $\epsilon$,
set $y=0,$ $V=\epsilon I,$ $U_{1}=\eta I-\epsilon W,$ and $U_{2}=\eta\cdot I_{n^{2}}+\epsilon W$
with suitable $\eta$ and $W$), so Slater's condition is satisfied,
strong duality holds, and the objectives coincide. We will implicitly
solve the primal problem (\ref{eq:prim_2}) by solving the dual problem
(\ref{eq:dual_1}).

In the case that $x=0$, problem (\ref{eq:dual_1}) yields a trivial
solution $y=0,$ $V=zz^{T}/(2\|z\|^{4}),$ $U_{1}=(z\otimes z)(z\otimes z)^{T}/\|z\|^{4},$
and $U_{2}=0$ with objective value $\eta_{\ub}(0,z)=0$.

In the case that $x\ne0$, we define $\alpha=(\|z\|\sin\phi)/\|\e\|$
and $\beta=\|x\|^{2}/\|\e\|>0$ and make a number of reductions on
the dual problem (\ref{eq:dual_1}). First, we use Lemma~\ref{lem:dual_prob}
to optimize over $U_{1}$ and $U_{2}$ and the length of $y$ to yield
\begin{equation}
\underset{y,V=\mat(v)\succeq0}{\text{ minimize }}\quad\frac{\tr([M]_{-})+\langle\X^{T}\X,V\rangle}{\tr([M]_{+})}\quad\text{ where }\quad M=(\X y-v)\e^{T}+\e(\X y-v)^{T}.\label{eq:dual_2}
\end{equation}
Here, we have divided the objective by the constraint $\tr([M]_{+})=1$
noting that the problem is homogenous over $y$ and $V$. Substituting
explicit expressions for the eigenvalues of $M$ in Lemma~\ref{lem:eig_bound}
yields 
\begin{equation}
\underset{y,V=\mat(v)\succeq0}{\text{ minimize }}\quad\frac{\langle\X^{T}\X,V\rangle+\|\e\|\|\X y-v\|(1-\cos\theta)}{\|\e\|\|\X y-v\|(1+\cos\theta)}\quad\text{where}\quad\cos\theta=\frac{\e^{T}(\X y-v)}{\|\e\|\|\X y-v\|}.\label{eq:dual_3}
\end{equation}
This is a multi-objective optimization over two competing trade-offs:
minimizing $\langle\X^{T}\X,V\rangle$ and maximizing $\cos\theta$.
To balance these two considerations, we parameterize over a fixed
$\gamma=\langle\X^{T}\X,V\rangle/2\beta$ and use Lemma~\ref{lem:keylb}
to maximize $\cos\theta$. The resulting univariate optimization reads
\begin{align}
\eta_{\ub}(x,z)\quad=\quad\min_{0\le\gamma\le\alpha} & \Psi(\gamma)\equiv\frac{2\beta\cdot\gamma+[1-\psi(\gamma)]}{1+\psi(\gamma)}\label{eq:final_prob}
\end{align}
where the function $\psi(\gamma)=\gamma\alpha+\sqrt{1-\gamma^{2}}\sqrt{1-\alpha^{2}}$
defined on Lemma~\ref{lem:keylb} takes on the role of the best choice
of $\cos\theta$. Here, one limit $\gamma=0$ sets $\langle\X^{T}\X,V\rangle=0,$
while the other $\gamma=\alpha$ sets $\cos\theta=1$. We cannot have
$\gamma<0$ because $V\succeq0$. Any choice of $\gamma>\alpha$ will
be strictly dominated by $\gamma=\alpha$, because $\gamma=\alpha$
already maximizes $\cos\theta$.

The univariate problem (\ref{eq:final_prob}) is \emph{quasiconvex}.
This follows from the \emph{concavity} of $\psi(\gamma)$ over this
range:
\begin{align*}
\psi'(\gamma) & =\alpha-\gamma\frac{\sqrt{1-\alpha^{2}}}{\sqrt{1-\gamma^{2}}}, & \psi''(\gamma) & =-\frac{\sqrt{1-\alpha^{2}}}{\sqrt{1-\gamma^{2}}}-\gamma^{2}\frac{\sqrt{1-\alpha^{2}}}{(1-\gamma^{2})^{3/2}}.
\end{align*}
Hence, the level sets of $\Psi(\gamma)\ge0$ are convex:
\[
\Psi(\gamma)\le c\quad\iff\quad2\beta\cdot\gamma+(1-c)\le(1+c)\psi(\gamma).
\]
We will proceed to solve the problem in closed-form and obtain
\begin{alignat}{2}
\Psi(\gamma^{\star})=\min_{0\le\gamma\le\alpha}\Psi(\gamma)\quad=\quad & \frac{1-\sqrt{1-\alpha^{2}}}{1+\sqrt{1-\alpha^{2}}} & \qquad\text{if }\beta & \ge\frac{\alpha}{1+\sqrt{1-\alpha^{2}}},\label{eq:delta_lb_a-1}\\
 & \frac{\beta(\beta-\alpha)}{\beta\alpha-1} & \text{if }\beta & \le\frac{\alpha}{1+\sqrt{1-\alpha^{2}}}.\label{eq:delta_lb_b-1}
\end{alignat}
Substituting $\Psi(\gamma^{\star})=\eta_{\ub}(x,z)$ into $\eta_{\ub}(x,z)\ge\eta(x,z)$
and using $\eta(x,z)$ to lower-bound $\delta(x,z)$ via (\ref{eq:delta_eta})
completes the proof of the lemma. (Note that setting $x=0$ sets $\beta=0$
and yields $\Psi(\gamma^{\star})=\eta_{\ub}(0,z)=0$ as desired.)

First, we verify whether the optimal solution $\gamma^{\star}$ lies
on the boundary of the search interval $[0,\alpha]$, that is $\gamma^{\star}\in\{0,\alpha\}$.
Taking derivatives yields 
\[
\Psi'(\gamma)=\frac{[2\beta-\psi'(\gamma)](1+\psi(\gamma))-\psi'(\gamma)[2\beta\cdot\gamma+1-\psi(\gamma)]}{(1+\psi(\gamma))^{2}}.
\]
For $\gamma=0$ to be a stationary point, we require $\Psi'(0)\ge0$,
and hence
\begin{align*}
[2\beta-\psi'(0)](1+\psi(0)) & \ge\psi'(0)[2\beta\cdot0+1-\psi(0)]\\
\iff\qquad\beta & \ge\frac{\alpha}{1+\sqrt{1-\alpha^{2}}}.
\end{align*}
In this case, we have $\Psi(0)=(1-\sqrt{1-\alpha^{2}})/(1+\sqrt{1-\alpha^{2}})$,
which is the expression in (\ref{eq:delta_lb_a-1}). The choice $\gamma=\alpha$
cannot be stationary, because $\Psi'(\alpha)\le0$ would imply
\begin{align*}
[2\beta-\psi'(\alpha)](1+\psi(\alpha))-\psi'(\alpha)[2\beta\cdot\alpha+1-\psi(\alpha)] & \le0,\\
\iff\qquad2\beta(1+1) & \le0,
\end{align*}
which is impossible as we have $\beta=\|x\|^{2}/\|e\|>0$ by hypothesis. 

Otherwise, the optimal solution $\gamma^{\star}$ lies in the interior
of the search interval $[0,\alpha]$, that is $\gamma^{\star}\in(0,\alpha)$.
In this case, we simply relax the bound constraints on $\gamma$ and
solve the unconstrained problem as a linear fractional conic program
\begin{align*}
\min_{|\gamma|\le1}\Psi(\gamma)=\min_{\|\xi\|\le1} & \left\{ \frac{1+(c-d)^{T}\xi}{1+d^{T}\xi}\right\} 
\end{align*}
where
\[
c=\begin{bmatrix}2\beta\\
0
\end{bmatrix},\qquad d=\begin{bmatrix}\alpha\\
\sqrt{1-\alpha^{2}}
\end{bmatrix},\qquad\xi=\begin{bmatrix}\gamma\\
\sqrt{1-\gamma^{2}}
\end{bmatrix}.
\]
(Note that the relaxation $\|\xi\|\le1$ is always tight, because
the linear fractional objective is always monotonous with respect
to scaling of $\xi$.) Defining $q=\xi/(1+d^{T}\xi)$ and $q_{0}=1/(1+d^{T}\xi)\ge0$
rewrites this as the second-order cone program
\[
\Psi(\gamma^{\star})=\underset{\|q\|\le q_{0}}{\min}\left\{ \begin{bmatrix}c-d\\
1
\end{bmatrix}^{T}\begin{bmatrix}q\\
q_{0}
\end{bmatrix}:\begin{bmatrix}d\\
1
\end{bmatrix}^{T}\begin{bmatrix}q\\
q_{0}
\end{bmatrix}=1\right\} 
\]
that admits a strictly feasible point $q=0$ and $q_{0}=1$. Accordingly,
the Lagrangian dual has zero duality gap:
\[
\Psi(\gamma^{\star})=\max_{\lambda}\{\lambda:\|c-(1+\lambda)d\|\le(1-\lambda)\}.
\]
If the maximum $\lambda^{\star}$ exists, then it must attain the
inequality, as in
\begin{align*}
\|c-(1+\lambda^{\star})d\|^{2} & =(1-\lambda^{\star})^{2}.
\end{align*}
We can simply solve this quadratic equation
\[
c^{T}c-2c^{T}d(1+\lambda^{\star})+(1+\lambda^{\star})^{2}d^{T}d=4-4(\lambda^{\star}+1)+(\lambda^{\star}+1)^{2}
\]
for the optimal $\lambda^{\star}=\Psi(\gamma^{\star})$. Noting that
$d^{T}d=1$, we actually have just a single root
\begin{align*}
1+\lambda^{\star} & =\frac{(c^{T}c-4)}{2(c^{T}d-2)}=\frac{\beta^{2}-1}{\alpha\beta-1}=1+\frac{\beta(\beta-\alpha)}{\alpha\beta-1},
\end{align*}
and this yields the expression (\ref{eq:delta_lb_b-1}).

\section{\label{sec:Numerical-Results}Numerical Results}

An important advantage of our formulation is that $\delta(X,Z)$ can
be evaluated numerically in cases where an exact closed-form solution
does not exist (or is too difficult to obtain). In this section, we
augment our analysis with a numerical study. In the rank $r=1$ case,
we exhaustively evaluate $\delta(x,z)$ over its two degrees of freedom
to gain insight on its behavior, and also to quantify the conservatism
of the lower-bound in Theorem~\ref{thm:delta_lb}. In the rank $r\ge1$
case, we sample $\delta(X,Z)$ uniformly at random over $X$ and $Z,$
in order to understand its distribution and hypothesize on higher-rank
versions of our recovery guarantees.

In our experiments, we implement Algorithm~\ref{alg:efficient_alg}
in MATLAB. We parse the LMI problem using YALMIP~\citep{lofberg2004yalmip}
and solve it using MOSEK~\citep{andersen2000mosek}. All algorithms
parameters (e.g. accuracy, iterations, etc.) are left at their default
values. 
\begin{figure}
\hfill{}\includegraphics[width=0.4\textwidth]{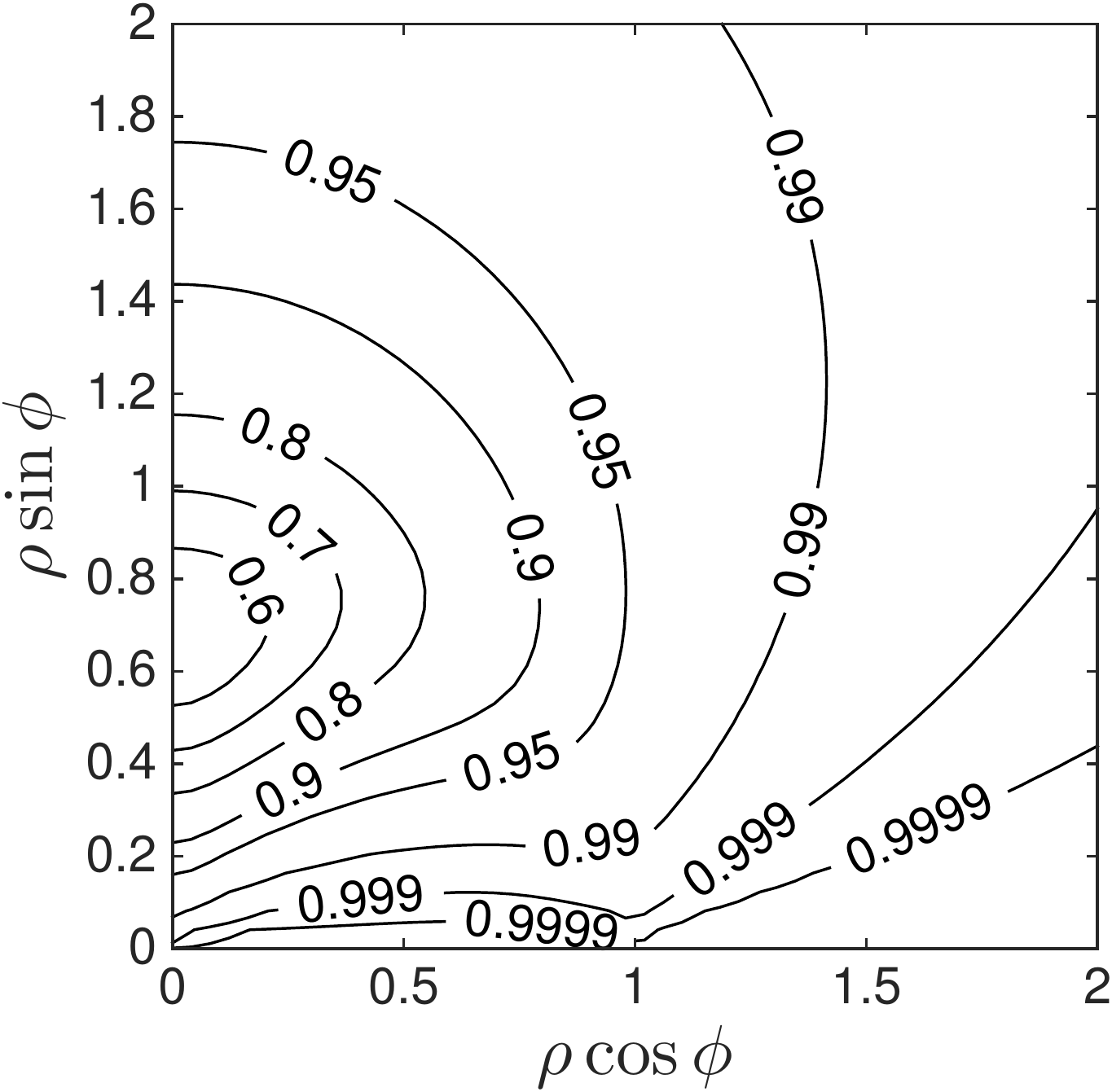}\hfill{}\includegraphics[width=0.4\textwidth]{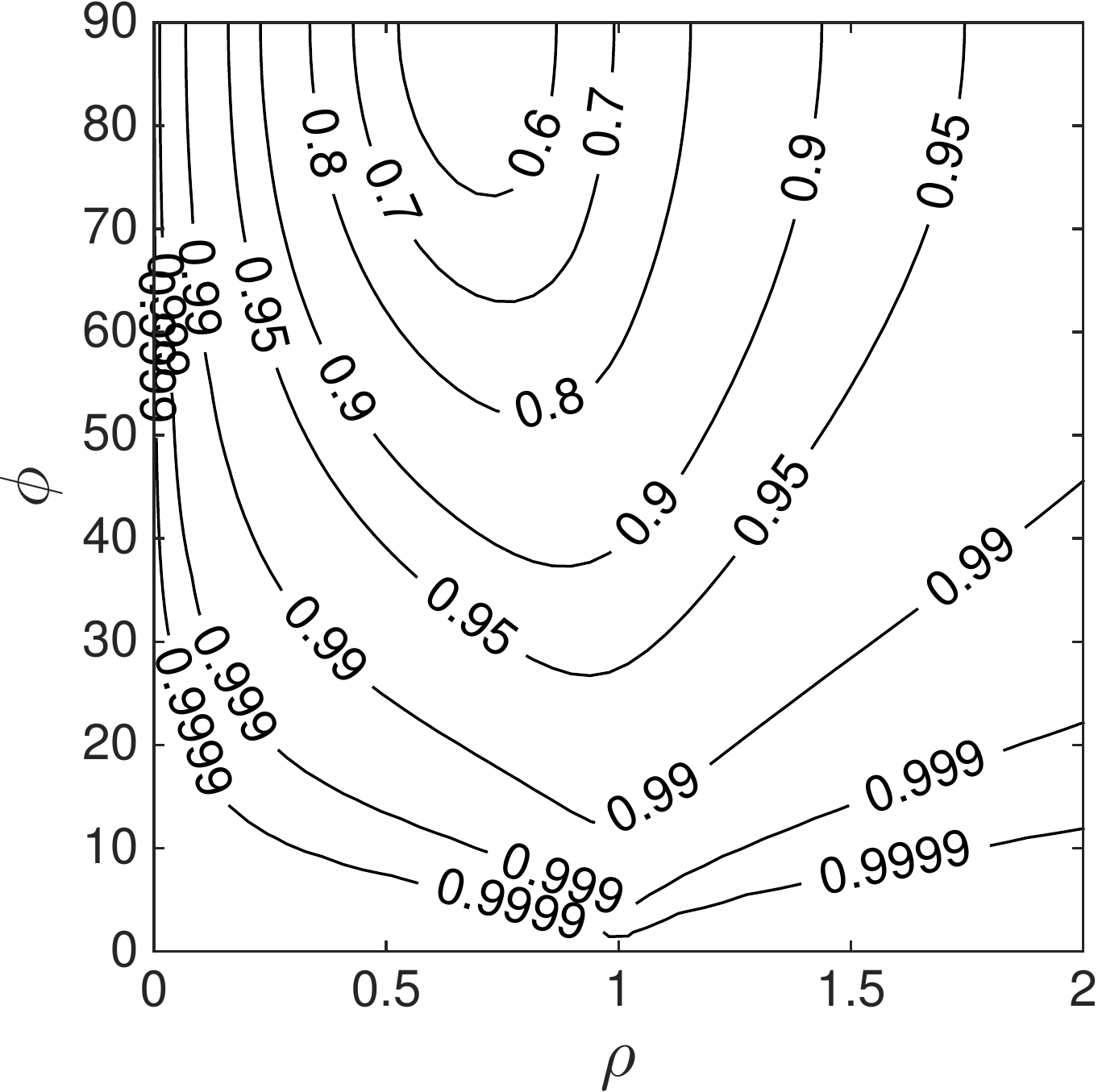}\hfill{}

\vspace{0.2in}

\hfill{}\includegraphics[width=0.4\textwidth]{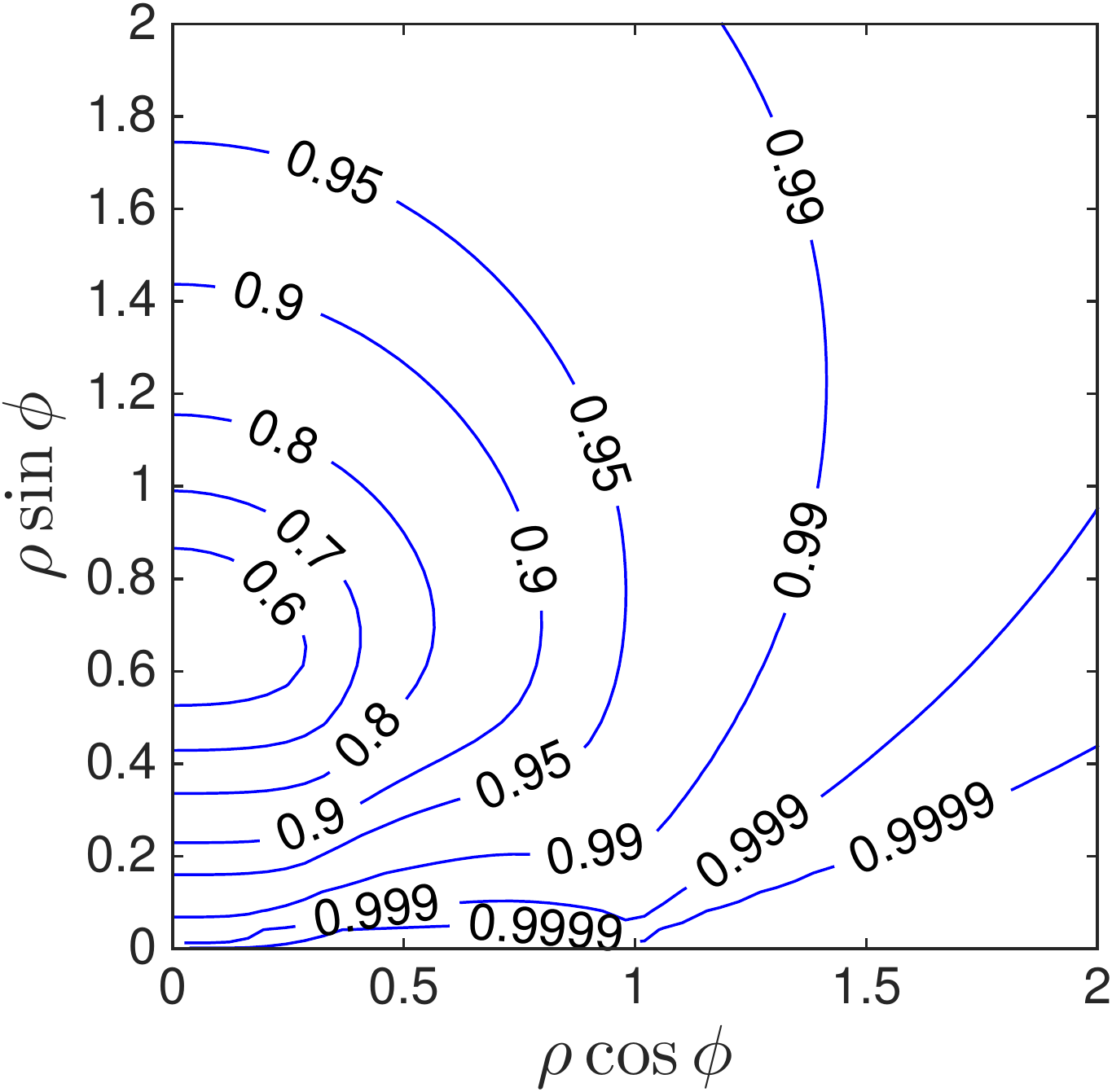}\hfill{}\includegraphics[width=0.4\textwidth]{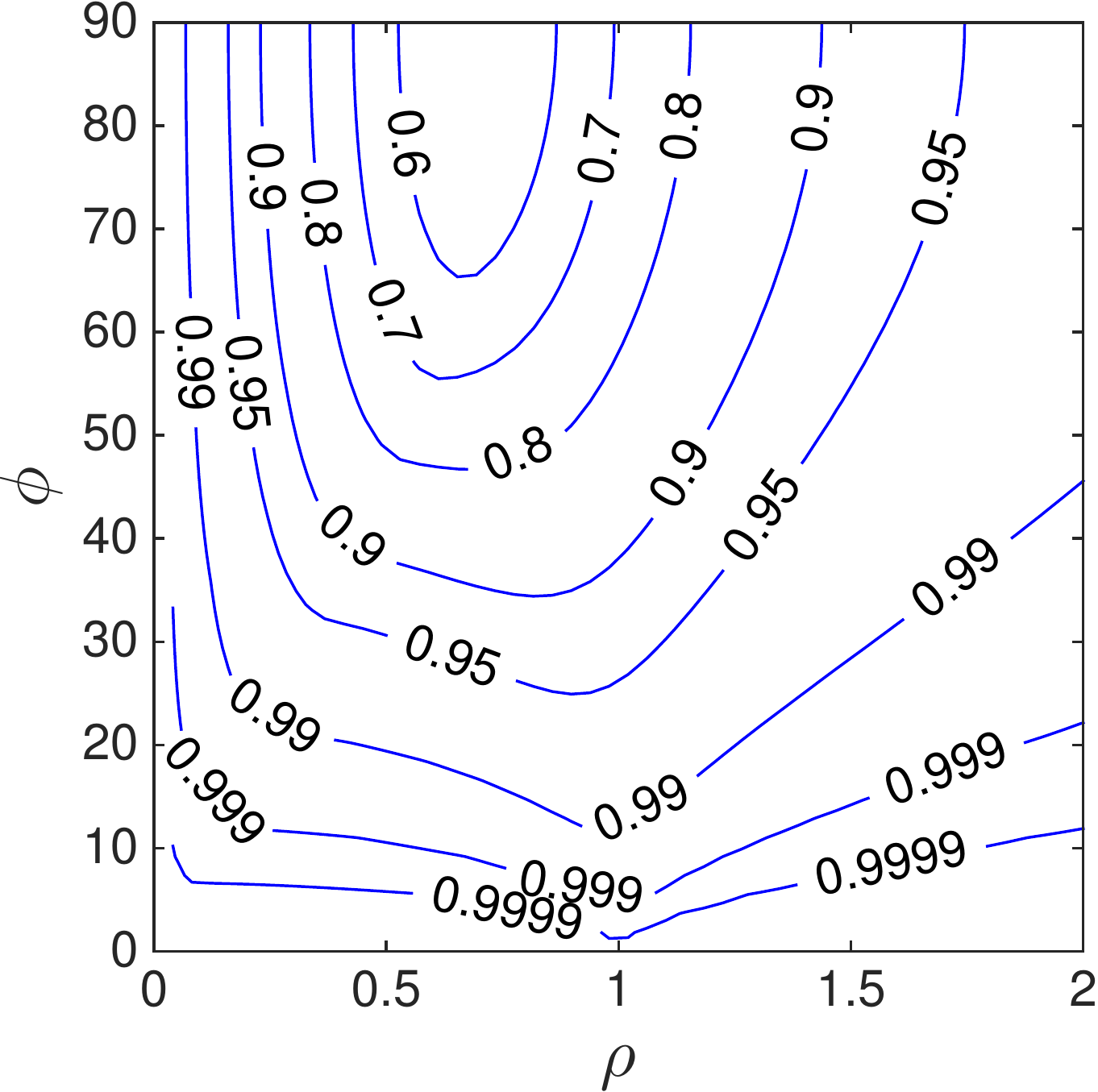}\hfill{}

\vspace{0.2in}

\hfill{}\includegraphics[width=0.4\textwidth]{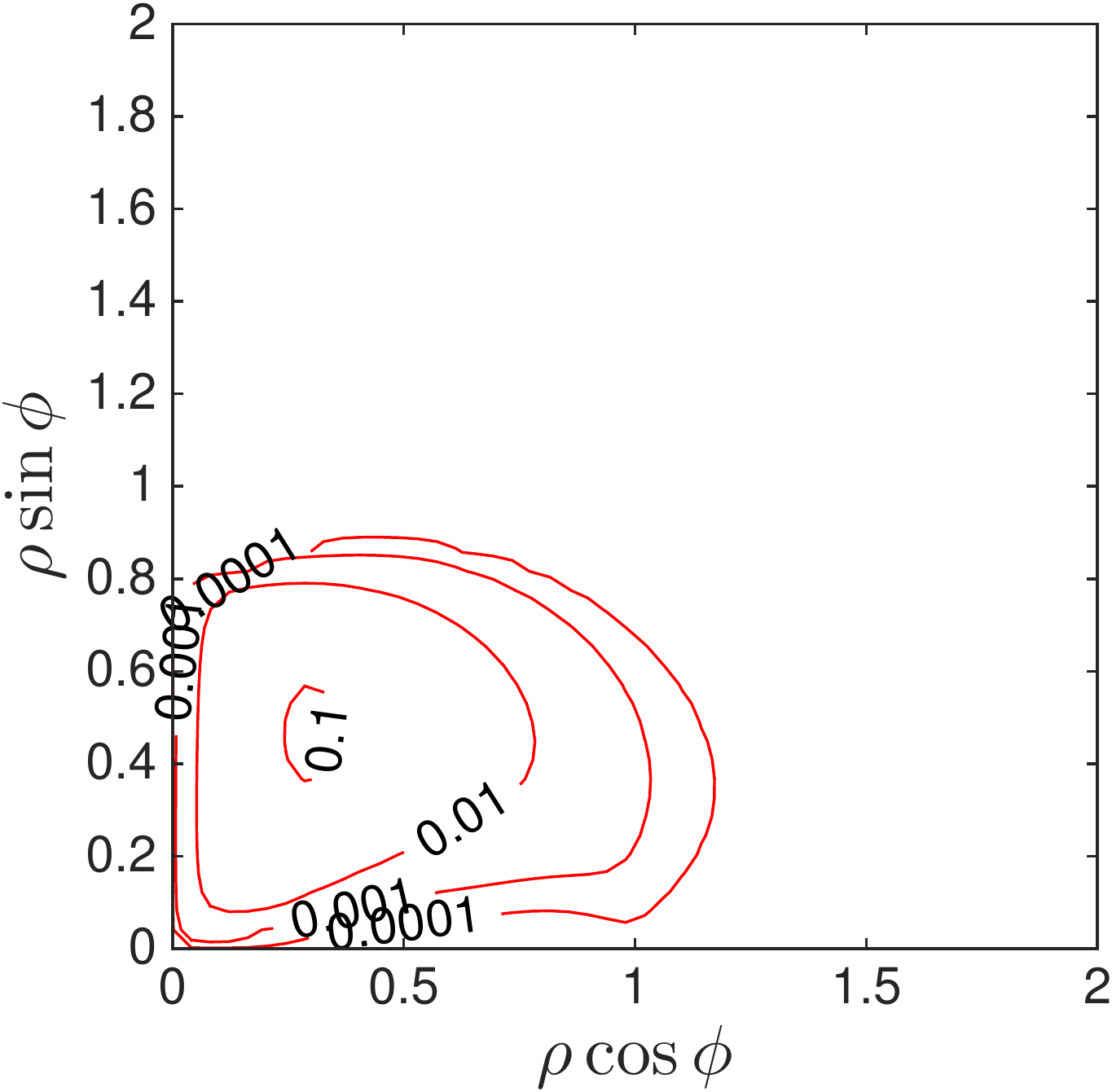}\hfill{}\includegraphics[width=0.4\textwidth]{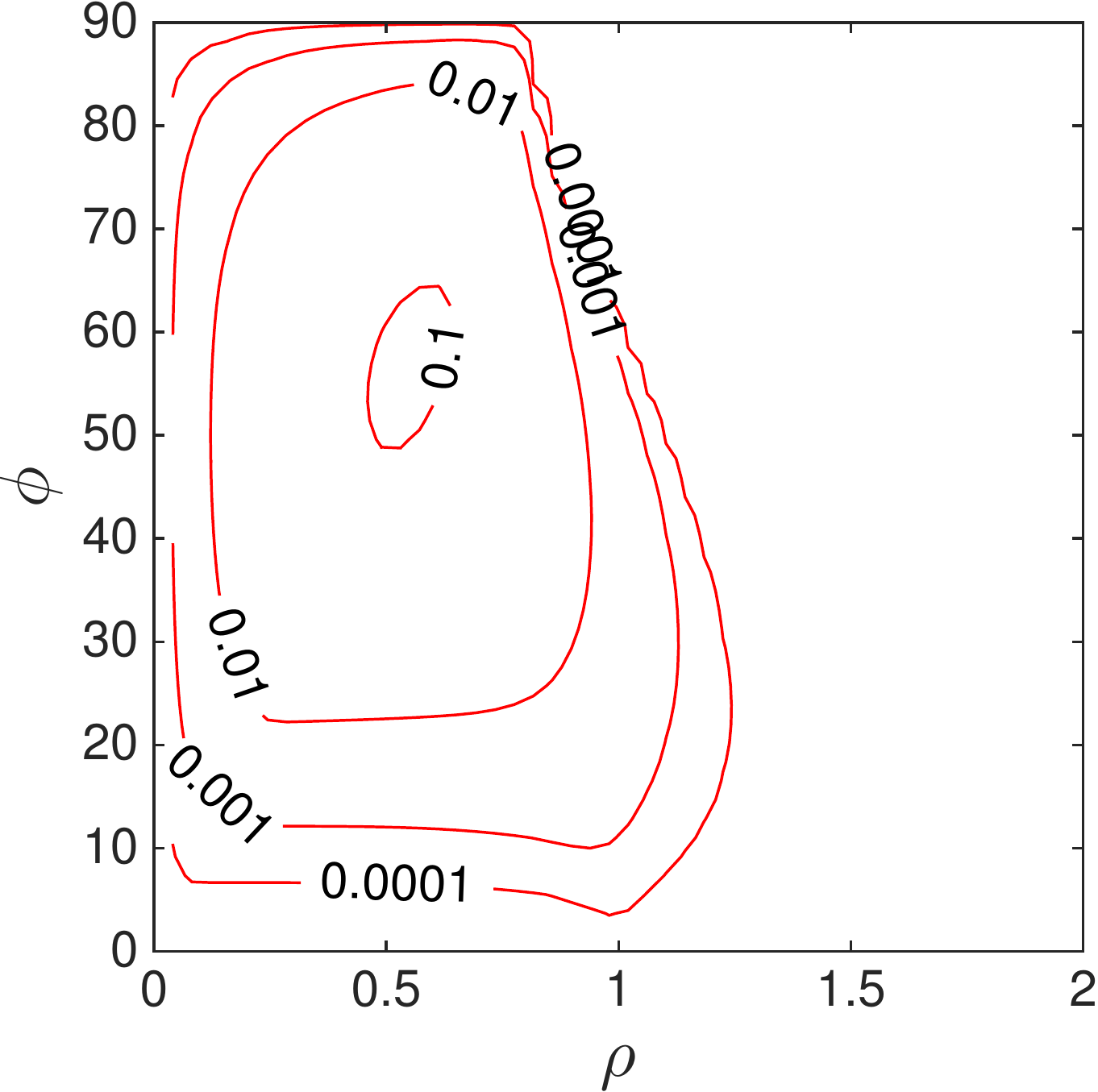}\hfill{}

\caption{\label{fig:plot2}The function $\delta(x,z)$ and its lower bound
$\delta_{\protect\lb}(x,z)$ visualized with respect to the length
ratio $\rho=\|x\|/\|z\|$ and the incidence angle $\phi=\arccos(x^{T}z/\|x\|\|z\|)$:
\textbf{(top)} the function $\delta(x,z)$; \textbf{(middle)} the
lower-bound $\delta_{\protect\lb}(x,z)$; \textbf{(bottom)} the error
$\delta(x,z)-\delta_{\protect\lb}(x,z)$; \textbf{(left)} rectangular
coordinates; \textbf{(right)} polar coordinates.}
\end{figure}

\subsection{Visualizing $\delta(x,z)$ and $\delta_{\protect\lb}(x,z)$ for rank
$r=1$}

Using a suitable orthogonal projector $P$, we can reduce the function
$\delta(x,z)$ down to two underlying degrees of freedom: the length
ratio $\rho=\|x\|/\|z\|$ and the incidence angle $\phi=\arccos(x^{T}z/\|x\|\|z\|)$.
First, without loss of generality, we assume that the ground truth
$M^{\star}=zz^{T}$ has unit norm $\|M^{\star}\|_{F}=1$. (Otherwise,
we can suitably rescale all arguments below.) Then, the following
projector $P$ satisfies $PP^{T}x=x$ and $PP^{T}z=z$ with 
\begin{align}
P & =\begin{bmatrix}z & {\displaystyle \frac{(I-zz^{T})x}{\|(I-zz^{T})x\|}}\end{bmatrix}, & P^{T}x & =\begin{bmatrix}\rho\cos\phi\\
\rho\sin\phi
\end{bmatrix}, & P^{T}z & =\begin{bmatrix}1\\
0
\end{bmatrix}.
\end{align}
Applying this particular $P$ to Lemma~\ref{lem:reduc} yields the
following
\begin{equation}
\delta(x,z)=\delta(P^{T}x,P^{T}z)=\delta\left(\begin{bmatrix}\rho\cos\phi\\
\rho\sin\phi
\end{bmatrix},\begin{bmatrix}1\\
0
\end{bmatrix}\right).\label{eq:delta_2var}
\end{equation}
In fact, this two-variable function is symmetric over its four rectangular
quadrants
\begin{equation}
\delta\left(\begin{bmatrix}\rho\cos\phi\\
\rho\sin\phi
\end{bmatrix},\begin{bmatrix}1\\
0
\end{bmatrix}\right)=\delta\left(\begin{bmatrix}\pm\rho\cos\phi\\
\pm\rho\sin\phi
\end{bmatrix},\begin{bmatrix}1\\
0
\end{bmatrix}\right)\label{eq:delta_2varsym}
\end{equation}
because either $\pm z$ corresponds to the same ground truth, and
because the second column of $P$ can point in either $\pm(I-zz^{T})x$.

Accordingly, we can use (\ref{eq:delta_2var}) and (\ref{eq:delta_2varsym})
to visualize $\delta(x,z)$ as a two-dimensional graph, either in
rectangular coordinates over $(\rho\cos\phi,\rho\sin\phi)\in[0,\rho_{\max}]^{2}$,
or in polar coordinates over $(\rho,\phi)\in[0,\rho_{\max}]\times[0,\pi/2]$.
Moreover, we can plot our closed-form lower-bound $\delta_{\lb}(x,z)$
on the same axes, in order to quantify its conservatism $\delta(x,z)-\delta_{\lb}(x,z)$.

The top row of Figure~\ref{fig:plot2} plots $\delta(x,z)$ in rectangular
and polar coordinates. The plot shows $\delta(x,z)$ as a smooth function
with a single basin at $\rho=1/\sqrt{2}$ and $\phi=90^{\circ}$.
Outside of a narrow region with $1/2\le\rho\le1$ and $\phi\ge45^{\circ}$,
we have $\delta(x,z)\ge0.9$. For smaller RIP constants, spurious
local minima must appear in a narrow region\textemdash they cannot
occur arbitrarily anywhere. Excluding this region\textemdash as in
our local guarantee in Theorem~\ref{thm:local}\textemdash allows
much larger RIP constants $\delta$ to be accommodated.

The middle and bottom rows of Figure~\ref{fig:plot2} plots $\delta_{\lb}(x,z)$
and $\delta(x,z)-\delta_{\lb}(x,z)$ in rectangular and polar coordinates.
The two functions match within $0.01$ for either $\rho\ge1$ or $\phi\le30^{\circ}$,
and fully concur in the asymptotic limits $\rho\to\{0,+\infty\}$
and $\phi\to\{0^{\circ},90^{\circ}\}$. The greatest error of around
0.1 occurs at $\rho\approx0.5$ and $\phi\approx55^{\circ}$. We conclude
that $\delta_{\lb}(x,z)$ is a high quality approximation for $\delta(x,z)$.

\subsection{Distribution of $\delta(X,Z)$ for rank $r\ge1$}

In the high-rank case, a simple characterization of $\delta(X,Z)$
is much more elusive. Given a fixed rank-$r$ ground truth $M^{\star}$,
let its corresponding eigendecomposition be written as $M^{\star}=V\Lambda V^{T}$
where $V\in\R^{n\times r}$ is orthogonal and $\Lambda$ is diagonal.
By setting $Z=V\Lambda^{1/2}$ and suitably selecting an orthogonal
projector $P$, it is always possible to satisfy
\begin{equation}
\delta(X,Z)=\delta(P^{T}X,P^{T}Z)=\delta\left(\begin{bmatrix}\hat{X}_{1}\\
\hat{X}_{2}
\end{bmatrix},\begin{bmatrix}\Lambda^{1/2}\\
0
\end{bmatrix}\right),\label{eq:delta_2var-1}
\end{equation}
where $\hat{X}_{1},\hat{X}_{2}\in\R^{r\times r}$. While (\ref{eq:delta_2var-1})
bares superficial similarities to (\ref{eq:delta_2var}), the equation
now contains at least $2r^{2}+r-1$ degrees of freedom. Even $r=2$
results in 9 degrees of freedom, which is too many to visualize.

Instead, we sample $\delta(X,Z)$ uniformly at random over its underlying
degrees of freedom. Specifically, we select all elements in $X\in\R^{n\times r}$
and only the diagonal elements of $Z\in\R^{n\times r}$ independently
and identically distributed from the standard Gaussian, as in $X_{i,j},Z_{i,i}\sim\text{Gaussian}(0,1)$.
We then use Algorithm~\ref{alg:efficient_alg} to evaluate $\delta(X,Z)$.
\begin{figure}
\hfill{}\includegraphics[width=0.4\textwidth]{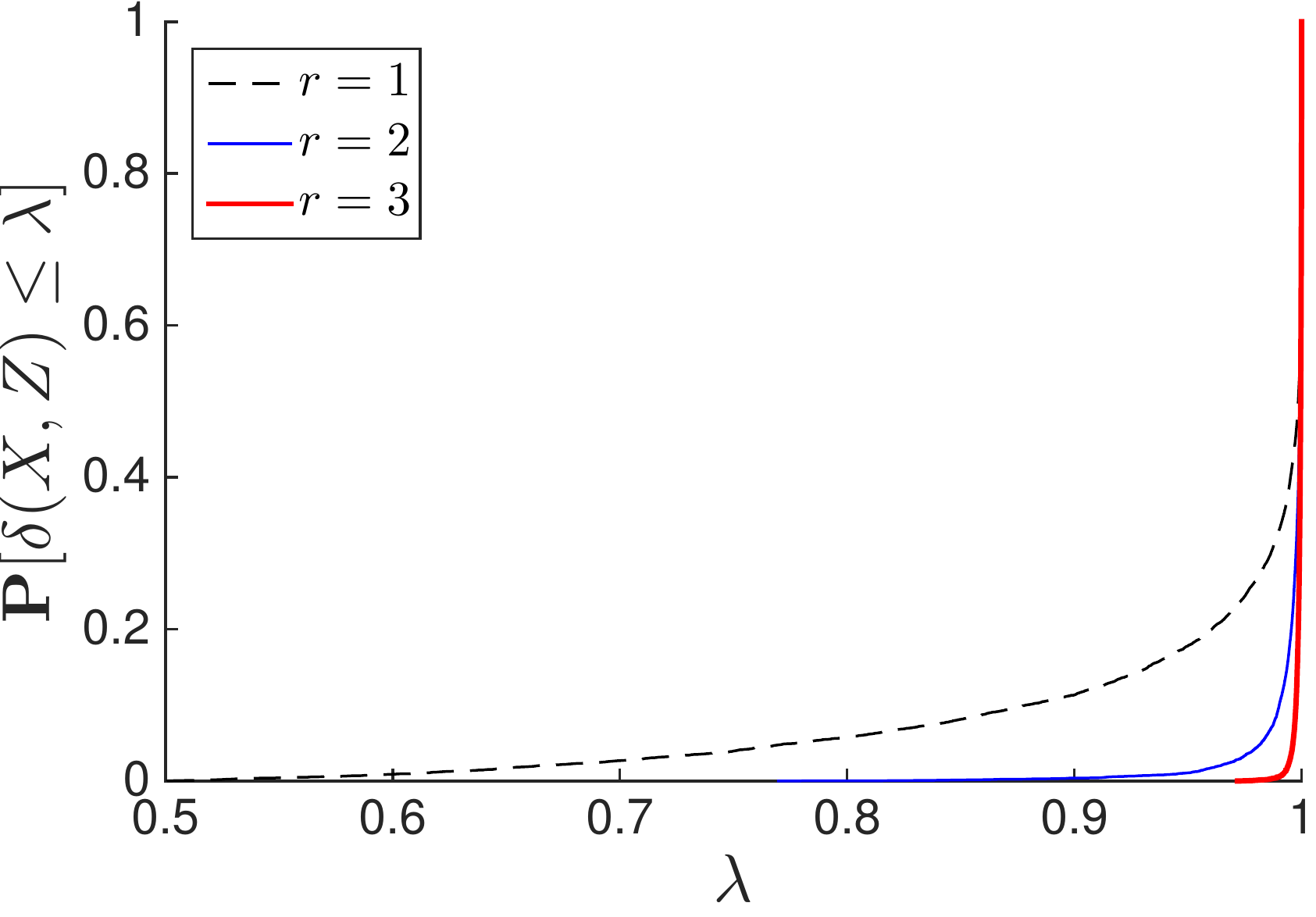}\hfill{}\includegraphics[width=0.4\textwidth]{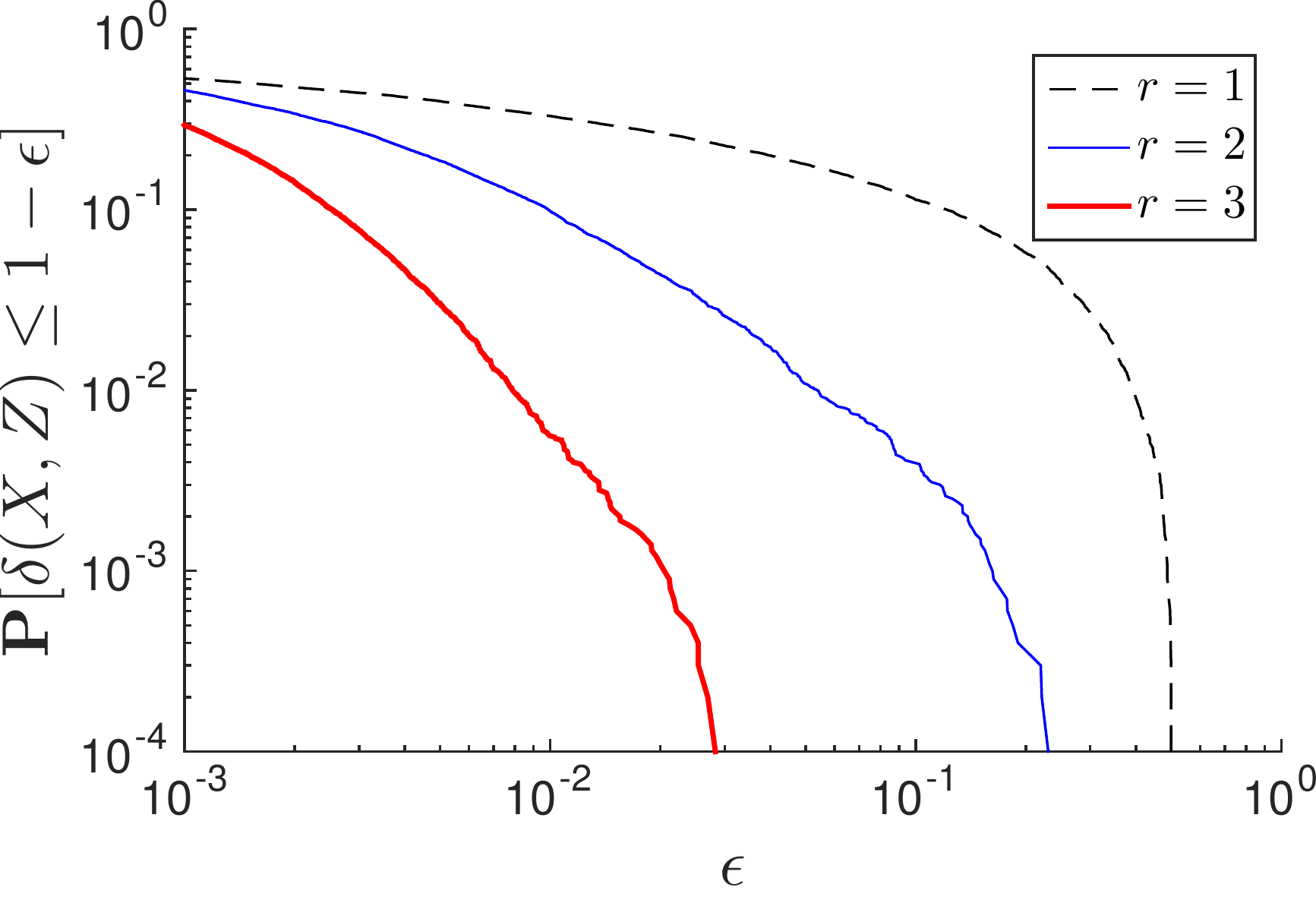}\hfill{}

\caption{\label{fig:ecdf}Empirical cumulative distribution of $\delta(X,Z)$
over $N=10^{4}$ samples of $X,Z\in\protect\R^{n\times r}$ where
$X_{i,j},Z_{i,i}\sim\text{Gaussian}(0,1)$: \textbf{(left)} linear
plot of $\mathbf{P}[\delta(X,Z)\le\lambda]$ over $\lambda\in[1/2,1]$;
\textbf{(right)} logarithmic plot of $\mathbf{P}[\delta(X,Z)\le1-\epsilon]$
over the tail $\epsilon\in[10^{-3},10^{0}]$.}
\end{figure}

Figure~\ref{fig:ecdf} plots the empirical cumulative distributions
for $r\in\{1,2,3\}$ from $N=10^{4}$ samples. We see that each increase
in rank $r$ results in a sizable reduction in the distribution tail.
The rank $r=1$ trials yielded $\delta(x,z)$ arbitrarily close to
the minimum value of $1/2$, but the rank $r=2$ trials were only
able to find $\delta(X,Z)\approx0.8$. The rank $r=3$ trials were
even more closely concentrated about one, with the minimum at $\delta(X,Z)\approx0.97$.
These results suggest that higher rank problems are generically easier
to solve, because larger RIP constants are sufficient to prevent the
points from being spurious local minima. They also suggest that $\delta(X,Z)\ge1/2$
over \emph{all} rank $r\ge1$, though this is not guaranteed, because
``bad'' choices of $X,Z$ can always exist on a lower-dimensional
zero-measure set.

\section{Conclusions}

The low-rank matrix recovery problem is known to contain \emph{no
spurious local minima} under a restricted isometry property (RIP)
with a sufficiently small RIP constant $\delta$. In this paper, we
introduce a proof technique capable of establishing RIP thresholds
that are both \emph{necessary} and \emph{sufficient} for exact recovery.
Specifically, we define $\delta(X,Z)$ as the \emph{smallest} RIP
constant associated with a counterexample with \emph{fixed} ground
truth $M^{\star}=ZZ^{T}$ and \emph{fixed} spurious point $X$, and
define $\delta^{\star}=\min_{X,Z}\delta(X,Z)$ as the smallest RIP
constant over all counterexamples. Then, $\delta$-RIP low-rank matrix
recovery contains no spurious local minima if and only if $\delta<\delta^{\star}$. 

Our key insight is to show that $\delta(X,Z)$ has an \emph{exact}
convex reformulation. In the rank-1 case, the resulting problem is
sufficiently simple that it can be relaxed and solved in closed-form.
Using this closed-form bound, we prove that $\delta<1/2$ is both
necessary and sufficient for exact recovery from any arbitrary initial
point. For larger RIP constants $\delta\ge1/2$, we show that an initial
point $x_{0}$ satisfying $f(x_{0})\le(1-\delta)^{2}f(0)$ is enough
to guarantee exact recovery using a descent algorithm. It is important
to emphasize, however, that these sharp results are derived specifically
for the rank-1 case. 

\section*{Acknowledgements}

We are grateful to Salar Fattahi for a meticulous reading and detailed
comments, and to Salar Fattahi and Cédric Josz for fruitful discussions.
We thank two anonymous reviewers for helpful comments and for pointing
out typos. This work was supported by grants from ONR, AFOSR, ARO,
and NSF.

\appendix

\section{\label{sec:reduc}Detailed proof of Lemma~\ref{lem:reduc}}

Given $X,Z\in\R^{n\times r},$ define $\e\in\R^{n^{2}}$ and $\X\in\R^{n^{2}\times nr}$
to satisfy the following with respect to $X$ and $Z$
\begin{align}
\e & =\vec(XX^{T}-ZZ^{T}), & \X\vec(Y) & =\vec(XY^{T}+YX^{T})\qquad\forall Y\in\R^{n\times r},\label{eq:eXdef_restate}
\end{align}
Let $P\in\R^{n\times d}$ with $d\le n$ satisfy 
\begin{align*}
P^{T}P & =I_{d}, & PP^{T}X & =X, & PP^{T}Z & =Z
\end{align*}
and define $\P=P\otimes P$ and the projections $\hat{X}=P^{T}X$
and $\hat{Z}=P^{T}Z$. Define $\hat{\e}\in\R^{d^{2}}$ and $\hat{\X}\in\R^{d\times dr}$
to satisfy (\ref{eq:eXdef_restate}) with $X,Z$ replaced by $\hat{X},\hat{Z}$.

Our goal is to show that 
\begin{align*}
\delta & =\hat{\delta}, & \H & =\P\hat{\H}\P^{T}+(I-\P\P^{T}),
\end{align*}
satisfy the primal feasibility equations\begin{subequations}\label{eq:pf}
\begin{gather}
\X^{T}\H\e=0,\label{eq:pf1}\\
2[I_{r}\otimes\mat(\H\e)]+\X^{T}\H\e\succeq0,\label{eq:pf2}\\
(1-\delta)I\preceq\H\preceq(1+\delta)I,\label{eq:pf3}
\end{gather}
\end{subequations}and that 
\begin{align*}
y & =(I_{r}\otimes P)\hat{y}, & U_{1} & =\P\hat{U}_{1}\P^{T}, & U_{2} & =\P\hat{U}_{2}\P^{T}, & V & =(I_{r}\otimes P)\hat{V}(I_{r}\otimes P)^{T}
\end{align*}
satisfy the dual feasibility equations \begin{subequations}\label{eq:df}
\begin{gather}
\sum_{j=1}^{r}(\X y-\vec(V_{j,j}))\e^{T}+\e(\X y-\vec(V_{j,j}))^{T}-\X V\X^{T}=U_{1}-U_{2},\label{eq:df1}\\
\tr(U_{1}+U_{2})=1,\label{eq:df2}\\
V\succeq0,\quad U_{1}\succeq0,\quad U_{2}\succeq0,\label{eq:df3}
\end{gather}
\end{subequations}under the hypothesis that $(\hat{\delta},\hat{\H})$
and $(\hat{y},\hat{U}_{1},\hat{U}_{2},\hat{V})$ satisfy (\ref{eq:pf})
and (\ref{eq:df}) with $\e,\X$ replaced by $\hat{\e},\hat{\X}$.

We can immediately verify (\ref{eq:pf3}), (\ref{eq:df2}), and (\ref{eq:df3})
using the orthogonality of $\P$. To verify the remaining equations,
we will use the following identities. 
\begin{claim}
We have
\begin{align*}
\e & =\P\hat{\e}, & \X(I_{r}\otimes P) & =\P\hat{\X} & \P^{T}\X & =\hat{\X}(I_{r}\otimes P)^{T}.
\end{align*}
\end{claim}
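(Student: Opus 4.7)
The plan is to verify each of the three identities by unwinding the definitions of $\e, \X, \hat{\e}, \hat{\X}$ and applying the Kronecker identity $\vec(AYB^{T}) = (B \otimes A)\vec(Y)$ together with the three hypothesized properties of $P$, namely $P^{T}P = I_{d}$, $PP^{T}X = X$, and $PP^{T}Z = Z$. Since $\P = P \otimes P$, whenever $\P$ appears it will simply be responsible for conjugating a $d \times d$ matrix inside a $\vec(\cdot)$ by $P$ on the left and $P^{T}$ on the right.

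First I would handle $\e = \P\hat{\e}$. By definition $\hat{\e} = \vec(\hat{X}\hat{X}^{T} - \hat{Z}\hat{Z}^{T}) = \vec(P^{T}(XX^{T}-ZZ^{T})P)$, so the Kronecker identity gives $\P\hat{\e} = \vec(PP^{T}(XX^{T}-ZZ^{T})PP^{T})$. The two range conditions $PP^{T}X = X$ and $PP^{T}Z = Z$ collapse this to $\vec(XX^{T}-ZZ^{T}) = \e$.

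Next I would prove $\X(I_{r}\otimes P) = \P\hat{\X}$ by testing both sides on an arbitrary $\vec(Y)$ with $Y \in \R^{d\times r}$. On the left, $(I_{r}\otimes P)\vec(Y) = \vec(PY)$, so by the definition of $\X$ the result is $\vec(XY^{T}P^{T} + PYX^{T})$. On the right, $\hat{\X}\vec(Y) = \vec(\hat{X}Y^{T} + Y\hat{X}^{T})$, and applying $\P = P\otimes P$ yields $\vec(P\hat{X}Y^{T}P^{T} + PY\hat{X}^{T}P^{T})$; substituting $\hat{X} = P^{T}X$ and using $PP^{T}X = X$ reduces this to the same expression. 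The third identity $\P^{T}\X = \hat{\X}(I_{r}\otimes P)^{T}$ is proved analogously: test on $\vec(Y)$ with $Y \in \R^{n\times r}$, convert the left-hand side to $\vec(P^{T}XY^{T}P + P^{T}YX^{T}P)$, and note that the right-hand side $\hat{\X}\vec(P^{T}Y) = \vec(\hat{X}(P^{T}Y)^{T} + (P^{T}Y)\hat{X}^{T})$ yields exactly the same thing after substituting $\hat{X}^{T} = X^{T}P$.

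There is no real obstacle here; the entire argument is bookkeeping with the Kronecker identity and the range conditions on $P$. The only thing to be careful about is the order of factors inside $\vec(\cdot)$: one must consistently use $\vec(AYB^{T}) = (B\otimes A)\vec(Y)$ rather than the transposed convention, and recognize that $(P\otimes P)^{T} = P^{T}\otimes P^{T}$ so that $\P^{T}$ conjugates by $P^{T}$ on the left and $P$ on the right.
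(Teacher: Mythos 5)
Your proposal is correct and follows essentially the same approach as the paper: unwind the definitions of $\e,\X,\hat{\e},\hat{\X}$ via the Kronecker identity $\vec(AYB^{T})=(B\otimes A)\vec(Y)$, and use the range conditions $PP^{T}X=X$, $PP^{T}Z=Z$ (together with their transposes, which follow since $PP^{T}$ is symmetric) to absorb or cancel the $PP^{T}$ factors. The paper likewise verifies the second and third identities by testing both sides on an arbitrary $\vec(\hat{Y})$ and $\vec(Y)$, exactly as you do.
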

\begin{proof}
For all $Y\in\R^{n\times r}$ and $\hat{Y}\in\R^{d\times r},$ we
have
\begin{gather*}
\e=\vec(XX^{T}-ZZ^{T})=\vec[P(\hat{X}\hat{X}^{T}-\hat{Z}\hat{Z}^{T})P^{T}]=(P\otimes P)\hat{\e},\\
\X(I_{r}\otimes P)\vec(\hat{Y})=\X\vec(P\hat{Y})=\vec[P(\hat{X}\hat{Y}^{T}+\hat{Y}\hat{X}^{T})P^{T}]=\P\hat{\X}\vec(\hat{Y}),\\
\P^{T}\X\vec(Y)=\vec[(P^{T}X)(P^{T}Y)^{T}+(P^{T}Y)(P^{T}X)^{T}]=\hat{\X}\vec(P^{T}Y)=\hat{\X}(I_{r}\otimes P)^{T}\vec(Y).
\end{gather*}
\end{proof}
Now, we have (\ref{eq:df1}) from
\begin{gather*}
\X y-\vec(V_{j,j})=\X(I_{r}\otimes P)\hat{y}-\P\vec(\hat{V}_{j,j})=\P(\hat{\X}\hat{y}-\vec(\hat{V}_{j,j})),\\
\X V\X^{T}=\X(I_{r}\otimes P)\hat{V}(I_{r}\otimes P)^{T}\X=\P(\hat{\X}\hat{V}\hat{\X}^{T})\P^{T}.
\end{gather*}
To prove (\ref{eq:pf1}), we use
\[
\X^{T}\H\e=\X^{T}\H(\P\hat{\e})=\X^{T}(\P\hat{\H})\hat{\e}=(I_{r}\otimes P)\hat{\X}^{T}\hat{\H}\hat{\e}.
\]
Lastly, to prove (\ref{eq:pf2}), we define
\[
\mathbf{S}=2\cdot[I_{r}\otimes\mat(\H e)]+\X^{T}\H\X
\]
and $P_{\perp}$ as the orthogonal complement of $P$. Then, observe
that
\begin{align*}
I_{r}\otimes\mat(\H e) & =I_{r}\otimes(P\,\mat(\hat{\H}\hat{e})\,P^{T})=(I_{r}\otimes P)(I_{r}\otimes\mat(\hat{\H}\hat{e}))(I_{r}\otimes P)^{T},
\end{align*}
and that
\[
\X^{T}\H\X(I_{r}\otimes P)=\X^{T}\H(\P\hat{\X})=\X^{T}(\P\hat{\H})\hat{\X}=(I_{r}\otimes P)\hat{\X}^{T}\hat{\H}\hat{\X}.
\]
Hence, we have
\begin{gather*}
(I_{r}\otimes P)^{T}\mathbf{S}(I_{r}\otimes P)=2\cdot[I_{r}\otimes\mat(\hat{\H}\hat{e})]+\hat{\X}\hat{\H}\hat{\X}\succeq0,\\
(I_{r}\otimes P_{\perp})^{T}\mathbf{S}(I_{r}\otimes P_{\perp})=(I_{r}\otimes P_{\perp})^{T}\X^{T}\H\X(I_{r}\otimes P_{\perp})\succeq0,\\
(I_{r}\otimes P_{\perp})^{T}\mathbf{S}(I_{r}\otimes P)=0,
\end{gather*}
and this shows that $\mathbf{S}\succeq0$ as desired.

\section{\label{sec:tightness}Detailed proof of Lemma~\ref{lem:tightness}}

Given $X,Z\in\R^{n\times r},$ let $P=\orth([X,Z])$ and $\P=P\otimes P$.
Our goal is to show that 
\begin{align*}
\delta & =\hat{\delta}, & \H & =\P\hat{\H}\P^{T}
\end{align*}
satisfy the primal feasibility equations\begin{subequations}\label{eq:pf-1}
\begin{gather}
\X^{T}\H\e=0,\label{eq:pf1-1}\\
2[I_{r}\otimes\mat(\H\e)]+\X^{T}\H\e\succeq0,\label{eq:pf2-1}\\
(1-\delta)I\preceq\P^{T}\H\P\preceq(1+\delta)I,\label{eq:pf3-1}
\end{gather}
\end{subequations}and that
\begin{align*}
y & =(I_{r}\otimes P)\hat{y}, & U_{1} & =\hat{U}_{1}, & U_{2} & =\hat{U}_{2}, & V & =(I_{r}\otimes P)\hat{V}(I_{r}\otimes P)^{T}
\end{align*}
satisfy the dual feasibility equations\begin{subequations}\label{eq:df-1}
\begin{gather}
\sum_{j=1}^{r}(\X y-\vec(V_{j,j}))\e^{T}+\e(\X y-\vec(V_{j,j}))^{T}-\X V\X^{T}=\P(U_{1}-U_{2})\P^{T},\label{eq:df1-1}\\
\tr[\P(U_{1}+U_{2})\P^{T}]=1,\label{eq:df2-1}\\
V\succeq0,\quad U_{1}\succeq0,\quad U_{2}\succeq0,\label{eq:df3-1}
\end{gather}
\end{subequations}under the hypothesis that $(\hat{\delta},\hat{\H})$
and $(\hat{y},\hat{U}_{1},\hat{U}_{2},\hat{V})$ satisfy (\ref{eq:pf})
and (\ref{eq:df}) with $\e,\X$ replaced by $\hat{\e},\hat{\X}$.
The exact steps for verifying (\ref{eq:pf-1}) and (\ref{eq:df-1})
are identical to the proof of Lemma~\ref{lem:reduc}, and are omitted
for brevity.

\bibliography{euclid}

\end{document}